%% file: main.tex
\documentclass{article}

\input{00_packages}
\input{00_macros}

\icmltitlerunning{RL Asymptotic Guarantees}

\begin{document}

\twocolumn[
\icmltitle{Reinforcement Learning for Reachability: Guaranteeing Asymptotic Optimality}

\icmlsetsymbol{equal}{*}

\begin{icmlauthorlist}
\icmlauthor{Amogh Palasamudram}{gatech}
\icmlauthor{Jakub Svoboda}{ista,dartmouth}
\icmlauthor{Suguman Bansal}{gatech}
\icmlauthor{Krishnendu Chatterjee}{ista}
\end{icmlauthorlist}

\icmlaffiliation{ista}{Institute of Science and Technology, Austria}
\icmlaffiliation{gatech}{Georgia Institute of Technology, USA}
\icmlaffiliation{dartmouth}{Dartmouth College, USA}

\icmlcorrespondingauthor{Amogh Palasamudram}{amogh.p.214@gmail.com}
\icmlcorrespondingauthor{Jakub Svoboda}{jakub.svoboda@dartmouth.edu}

\icmlkeywords{Reinforcement Learning, Reachability Specifications, PAC Guarantees}

\vskip 0.3in
]
\printAffiliationsAndNotice{}

\begin{abstract}


{\em Reinforcement learning} (RL) for {\em reachability specifications} is fundamental in sequential decision-making, yet theoretical guarantees remain less explored. A recent work achieves {\em asymptotic convergence} to optimal policies. However, this approach provides limited insight into convergence dynamics. In this work, we present an alternative approach that provides deeper theoretical insights into convergence. Our approach builds on {\em PAC learning} with assumptions. PAC learning guarantees near-optimal policies with high confidence in finite time but requires knowing internal MDP parameters like minimum transition probability. We argue that while these parameters are unknown in RL, they can be iteratively refined and estimated with increasing accuracy. By iteratively satisfying PAC conditions, we show that exact optimality can be achieved in the limit. Empirical evaluations on standard benchmarks validate our theoretical insights into convergence dynamics.

\end{abstract}

\input{Section/introduction}
\input{Section/prelims}

\input{Section/asymptotic}

\input{Section/bound}

\input{Section/implementation}

\bibliography{refs}
\bibliographystyle{icml2026}

\newpage
\appendix
\onecolumn

\input{Appendix/algorithms}
\input{Appendix/PACproof}

\input{Appendix/difference_proof}
\input{Appendix/practicalmodifications}

\end{document}

%% file: 00_packages.tex
\usepackage[accepted]{icml2026}

\usepackage{algorithmic}
\usepackage{amsfonts}
\usepackage{hyperref}
\usepackage{amsmath}
\usepackage{amssymb}
\usepackage{amsthm}
\usepackage{bbm}
\usepackage{booktabs} 
\usepackage{caption}
\usepackage[capitalize,noabbrev]{cleveref}
\usepackage{color}
\usepackage{courier}
\usepackage{enumerate}
\usepackage{enumitem}
\usepackage{float}
\usepackage{graphicx}

\usepackage{ifthen}
\usepackage{listings}
\usepackage{microtype}
\usepackage{stmaryrd}
\usepackage{subcaption}
\usepackage{tikz}
\usetikzlibrary{automata}
\usepackage{todonotes}
\usepackage{url}
\usepackage{wrapfig}
\usepackage{xspace}

\theoremstyle{plain}
\newtheorem{theorem}{Theorem}[section]

\newtheorem{lemma}[theorem]{Lemma}

\theoremstyle{definition}
\newtheorem{definition}[theorem]{Definition}

\theoremstyle{remark}
\newtheorem{remark}[theorem]{Remark}

%% file: 00_macros.tex
\newcommand{\A}{\mathcal{A}}

\newcommand{\D}{\mathcal{D}}
\newcommand{\J}{\mathcal{J}}
\renewcommand{\L}{\mathcal{L}}
\newcommand{\M}{\mathcal{M}}
\newcommand{\N}{\mathbb{N}}
\renewcommand{\S}{\mathbb{S}}
\newcommand{\Ex}{\mathbb{E}}

\newcommand{\lcm}{\mathrm{lcm}}

\newcommand{\diff}{\left(2D\right)^{-|A|\cdot|S|}\cdot 2^{-|S|}}

\newcommand{\tdiff}{\left(2D\right)^{-2|A|\cdot|S|} 2^{-2|S|}}

\newcommand{\Mt}{\textbf{A}}
\newcommand{\Rew}{\textbf{r}}

\newcommand{\pac}{\mathsf{PAC}}

\newcommand{\opt}{\mathsf{opt}}

\newcommand{\traj}{\mathsf{Runs}}

\newcommand{\eps}{\varepsilon}

%% file: Section/introduction.tex
\section{Introduction}


Reinforcement learning (RL)~\cite{sutton2018reinforcement}  is a framework for
decision-making in unknown stochastic environments modeled as Markov Decision
Processes (MDPs). Many practical problems~\cite{collins2005efficient,
andrychowicz2020learning, kumar2020conservative, levine2016end} can be
expressed as reward-based objectives, which admit strong theoretical guarantees:
(i) asymptotic convergence~\cite{watkins1992q}, where learned policies are
optimal in the limit; and (ii) PAC guarantees~\cite{valiant1984theory}, where
learned policies are near-optimal with high probability after finite samples.
Practical implementations of these algorithms have achieved significant
real-world impact.

Several recent works have focused on the RL under high-level specifications that can be expressed with {\em Linear Temporal Logic (LTL)} or related logical formalisms
~\cite{alur2026specification,aksaray2016q, brafman2018ltlf,de2019foundations, hasanbeig2018logically, littman2017environmentindependent, hasanbeig2019, yuan2019modular, moritz2019, ijcai2019-0557, jiang2020temporallogicbased,li2017reinforcement,icarte2018using,jothimurugan2021compositional}. LTL objectives describe complex temporal behaviors, including safety and liveness, that are hard to express as rewards. 
In addition, this focus is motivated by the expressiveness and modularity of temporal logic, but it also raises fundamental challenges for learning and analysis.

Compared to classical reward-based objectives, LTL specifications describe a more complicated class of behaviors, which makes them harder to learn.
General LTL specifications are not learnable with PAC guarantees~\cite{yang2022reinforcement,alur2022framework}.
The feasibility of PAC guarantees for LTL objectives assumes dependence on internal parameters of the MDP such as the minimal transition probability~\cite{ashok2019pac}, mixing time~\cite{perez2023pac}, expected distance~\cite{svoboda2024reinforcement}, and the like. These parameters are typically unknown, limiting the applicability of PAC-style analysis.

Asymptotic guarantees for LTL objectives remain largely unexplored. 
To the best of our knowledge, \cite{le2024reinforcement} is the only prior 
work that achieves asymptotic convergence for LTL specifications, 
doing so by converting LTL to limit-average reward objectives. While 
this establishes theoretical feasibility, the approach relies on 
external parameters disconnected from the original MDP structure, 
providing no insight into convergence dynamics or when optimality 
emerges.
A direct approach
working with MDP parameters for LTL specifications, i.e. one without reward
conversion, is essential for understanding how the intrinsic structure of the MDP and specifications
 govern convergence dynamics. Such an approach provides deeper
insights into practical algorithm design and enables implementations with
observable convergence guarantees. 
This motivates fundamental questions: {\em Can asymptotic convergence be characterized directly in terms of intrinsic MDP parameters? Do asymptotic guarantees translate into observable convergence in practice?}

In this work, we respond to these questions in the affirmative through a direct learning approach for {\em reachability
specifications} in unknown MDPs (Section~\ref{sec:asymptotic}).
Reachability forms a fundamental basis for RL with
LTL specifications, LTL objective to reachability is well-established{~\cite{sickert2016limit,baier2008principles,hahn2020good,hahn2020faithful}. Consequently, any learning algorithm for
reachability induces a learning algorithm for LTL specifications.

Our method provides guarantees significantly stronger than
asymptotic convergence~\ref{sec:asymptotic}). Specifically, with probability one, there exists a
finite threshold $K_\opt$ beyond which only optimal policies are
observed. Moreover, we explicitly characterize $K_\opt$ using
intrinsic MDP quantities, through the minimum value separation $\varepsilon_\mathsf{diff}$ between optimal and suboptimal memoryless
deterministic policies (Section~\ref{sec:boud_on_gap}).

To achieve these guarantees, we develop a staged direct learning algorithm that incrementally refines learning parameter estimates, constructs partial MDP models via simulation, and computes a memoryless deterministic policy using conservative transition estimates and bounded value iteration, with stage-wise PAC guarantees.
As stages progress, the approximation tolerance eventually falls below the $\varepsilon_{\mathrm{diff}}$, ensuring exact optimality.
Combining stage-wise PAC guarantees with a summable failure schedule yields almost-sure convergence and ensures that beyond stage $K_\opt$ only optimal policies are produced.

Empirical evaluation on standardized benchmarks demonstrates that these theoretical guarantees translate to practice, with optimal policies emerging rapidly (Section~\ref{sec:implementation}). 

These contributions significantly strengthen prior results such as~\cite{le2024reinforcement}, whose approach relies on reward-based formulations and provides only asymptotic convergence in value, without any characterization of the emergence of optimal policies. Moreover, asymptotic guarantees for reachability specifications are of fundamental significance beyond reachability itself.
As established in Remark~\ref{rem:signif_reach}, reachability is the core computational primitive underlying all $\omega$-regular and LTL specifications, and any learning algorithm that achieves asymptotic guarantees for reachability immediately induces one for the full class of LTL and $\omega$-regular specifications.
Thus, a principled algorithm for reachability convergence is not merely a theoretical milestone but a concrete foundation for practical RL systems that learn from rich temporal specifications.

\paragraph{Related work.}

Conversion of LTL to reachability specifications is
well-studied for both known~\cite{baier2008principles} and
unknown~\cite{hahn2020faithful,hahn2020good} MDPs, occurring via product
construction with limit-deterministic Büchi automata~\cite{sickert2016limit}. \cite{hahn2020faithful} even presents a {\em simulatable} conversion of RL from LTL to RL reachability.
These results allow us to focus on reachability specifications as the fundamental problem of learning LTL objectives.

The intractability framework of \citet{alur2022framework} establishes that there is \emph{no} optimality- or near-optimality-preserving translation from reachability objectives into discounted-sum rewards in RL, ruling out any reduction between the two problem classes.
Reachability optimizes the probability of reaching a target state: in terms of rewards, a trajectory receives reward $1$ if it visits the target and $0$ otherwise.
Even with these trivial rewards, discounted-sum based RL (including standard RL algorithms such as Q-learning, $E^3$/RMAX and goal-conditioned RL) assigns a reward of $\gamma^k$ to a trajectory that visits the target state at position $k$ for the first time, and $0$ otherwise, where $\gamma \in (0,1)$ is the discount factor.

\cite{le2024reinforcement} proposes the first asymptotic learning for LTL objectives by reducing LTL specifications to limit-average reward optimization on product MDPs, solved via sequences of discounted-sum problems with discount factors approaching one (Blackwell's Optimality).
While this establishes the theoretical feasibility of learning reachability objectives without finite-sample guarantees, it has limitations.
First, convergence analysis relies on external parameters (discount factors)
disconnected from the original MDP structure, providing no insight into
convergence dynamics or when optimality emerges.
Second, the framework remains theoretical without implementation, leaving questions about applicability and empirical convergence unanswered.

\cite{majumdar2025regret} develops an algorithm for RL with LTL specifications
providing {\em regret-free guarantees}. However, these are significantly weaker than
asymptotic guarantees: even with zero failure probability, regret-free frameworks
permit infinitely many non-optimal policies.

%% file: Section/prelims.tex
\section{Preliminaries}
\label{sec:prelims}


\paragraph{Markov Decision Process (MDP).}
A {\em Markov Decision Process} (MDP) is a tuple $\M = (S, A, s_0, P)$, where $S$ is a finite set of states, 
$s_0$ is the initial state, $A$ is a finite set of actions, and $P:S \times A \times S \to [0,1]$ is the transition probability function, where $\sum_{s'\in S}P(s,a, s') = 1$ for all $s\in S$ and $a\in A$.
We denote $p_{\min}$ the smallest nonzero transition probability.
We define $Av(s)$ to be the set of actions available in state $s$.
We assume that $Av(s) \neq \emptyset$ for all states $s \in S$.


\smallskip\noindent{\em Runs.}
An \emph{infinite run} $\zeta \in (S\times A)^{\omega}$ is a sequence $\zeta = s_0{a_0}s_1{a_1}\ldots$, where $s_i \in S$ and $a_i\in A$ for all $i\in\N$. Similarly, a \emph{finite run} $\zeta\in(S\times A)^*\times S$ is a finite sequence $\zeta = s_0{a_0}s_1{a_1}\ldots a_{t-1}s_t$. For any run $\zeta$ of length at least $j$ and any $i\leq j$, we let
$\zeta_{i:j}$ denote the subsequence $s_i{a_i}s_{i+1}{a_{i+1}}\ldots{a_{j-1}}s_j$. We use $\traj(\M) = (S\times A)^{\omega}$ and $\traj_f(\M) = (S\times A)^*\times S$ to denote the set of infinite and finite runs, respectively. 

\smallskip\noindent{\em Policies.}
Let $\D(A) = \{\Delta:A\to[0,1]\mid\sum_{a\in A}\Delta(a) = 1\}$ denote the set of all distributions over actions. A policy $\pi:\traj_f(S,A)\to\D(A)$ maps a finite run $\zeta\in\traj_f(S,A)$ to a distribution $\pi(\zeta)$ over actions. We denote by $\Pi(S, A)$ the set of all such policies. A policy $\pi$ is \emph{memoryless} if it is of the form $\pi: S \to \D(A)$. 
A policy $\pi$ is deterministic if it is of the form $\pi: \zeta\in\traj_f(S,A) \to A$, i.e. , there is an action $a\in A$ with $\pi(\zeta)(a) = 1$.

\smallskip\noindent{\em Probability and expectation measures.}
Given a finite run $\zeta=s_0a_0\ldots a_{t-1}s_t$, the \emph{cylinder} of $\zeta$, denoted by $\mathsf{Cyl}(\zeta)$, is the set of all infinite runs starting with prefix $\zeta$. Given an MDP $\M$ and a policy $\pi\in\Pi(S,A)$, we define the probability of the cylinder set by $\D^{\M}_{\pi}(\mathsf{Cyl}(\zeta)) = \prod_{i=0}^{t-1}\pi(\zeta_{0:i})(a_i)P(s_i, a_i, s_{i+1})$. It is known that $\D_{\pi}^{\M}$ can be uniquely extended to a probability measure over the $\sigma$-algebra generated by all cylinder sets.
We use $\D_{\pi}^{\M}$ to denote the distribution over infinite runs in $\M$ induced by the policy $\pi$ and the associated expectation measure 
is denoted by $\Ex_{\pi}^{\M}$.

\paragraph{Formal Specifications and Reachability.}
Formal languages can be used to specify properties about runs of an MDP. A language specification $\L\subseteq \traj$ is a set of {\em desirable} runs in an MDP. 

Given a formal specification $\L$, 
the value of a policy $\pi$ w.r.t. specification $\L$ is the probability of generating a sequence in $\L$---i.e.,
$$J^{\M}_{\L}(\pi) =  \D^{\M}_{\pi}\big(\{\zeta\in\traj(S,A)\mid \zeta\in \L\}\big).$$

Let $\J^*(\M,\L) = \sup_{\pi}J^{\M}_{\L}(\pi)$ denote the maximum value of $J^{\M}_{\L}$ for all policies $\pi \in \Pi(S,A) $.
We let $\Pi_{\opt}(\M,\L)$ denote the set of all optimal policies in $\M$ w.r.t. $\L$---i.e., $\Pi_{\opt}(\M,\L) = \{\pi\mid J^{\M}_{\L}(\pi)=\J^*(\M,\L)\}$. 
In many cases, it is sufficient to compute an $\varepsilon$-optimal policy $\tilde{\pi}$ with $J_{\L}^{\M}(\tilde{\pi})\geq \J^*(\M,\L)-\varepsilon$; we let $\Pi_{\opt}^{\varepsilon}(\M,\L)$ denote the set of all $\varepsilon$-optimal policies in $\M$ w.r.t. $\L$.  

{\em Reachability specifications} comprise an important class of formal specifications. 
Given a set of states $G \subseteq S$, let a {\em reachability specification} $\L(G) = \{\zeta = s_0a_0\cdots \in \traj(\M) \mid \exists i\in \N, s_i \in G \}$ be the set of all runs in $\M$ that visit a state in $G$. We refer to $G$ by the target states/accepting states. 

\begin{remark}[Significance of reachability specifications]\label{rem:signif_reach}
First, reachability is the most basic temporal specification, e.g., it corresponds to the class of open sets in the topological characterization of temporal specifications. 
Second, if we consider general $\omega$-regular specifications, which subsume LTL specifications, then parity specifications provide a canonical way to express them~\cite{safra1988complexity}. 
For MDPs with parity specifications, the optimal value is computed as follows~\cite{courcoubetis1995complexity}: (a)~the set of states $X$ where the optimal value is~1 (almost-sure winning set) is computed; and (b)~the optimal value is the optimal reachability probability to $X$.
The almost-sure winning set of an MDP $\M$ only depends on the associated 
structure $G(\M)$ and not the precise probabilities, and can be computed efficiently (in sub-quadratic time) with discrete graph theoretic algorithms~\cite{chatterjee2011faster,chatterjee2014Efficient}.
Thus, for all $\omega$-regular specifications, the core task is to solve the optimal reachability problem.
Third, LTLf goals \cite{de2013linear} are also expressed as reachability goals. 
Given that reachability is the basic specification, and it is a core 
problem as mentioned above, in the sequel, we only focus on reachability 
specifications.
\end{remark}


\paragraph{Reinforcement Learning.}
In {\em reinforcement learning (RL)}, the standard assumption is that the set of states $S$, the set of actions $A$, and the initial state $s_0$ are known but the transition probability function $P$ is unknown. The learning algorithm has access to a {\em simulator} $\S$ which can be used to sample runs of the system $\zeta\sim\D_{\pi}^{\M}$ using any policy $\pi$.

We define a \emph{reinforcement learning task} to be a pair ($\M$, $\L$) where $\M$ is an MDP and $\L$ is a specification for $\M$. The goal of an RL task $(\M, \L)$ is to use a \textit{learning algorithm} to produce an optimal or near-optimal policy w.r.t. $\L$ in a {simulator} of $\M$. 
A learning algorithm $\A$ is an iterative process that in each iteration (i) either resets the simulator or takes a step in $\M$, and (ii) outputs its current estimate of an optimal policy $\pi$. A learning algorithm $\A$ induces a random sequence of output policies $\{\pi_n\}_{n=1}^{\infty}$ where $\pi_n$ is the policy output in the $n^{\text{th}}$ iteration.

A learning algorithm is said to converge {\em asymptotically}~\cite{watkins1992q} if it is probabilistically guaranteed to converge to the optimal policy.

\begin{definition}[Asymptotic Guarantees]\label{def:pac-mdp}
A learning algorithm $\A$ is said to converge asymptotically  for $\L$ if 
\[
J^{\M}_{\L}(\pi_n) \to \J^*(\M, \L) \text{ as } n\to \infty \text{ with probability }1
\]
\end{definition}

In contrast, a learning algorithm is \emph{Probably Approximately Correct} (PAC-MDP) \citep{kakade2003sample} if it is guaranteed to learn a near-optimal policy with high confidence within a finite number of iterations. For a formal definition, see  Appendix~\ref{ap:pac}.

%% file: Section/asymptotic.tex
\section{Asymptotic Convergence}
\label{sec:asymptotic}

This section presents our main algorithm for the asymptotic convergence of reachability specifications. The key feature is that the algorithm proceeds without any conversion to rewards, and is based only on internal MDP parameters. 


\subsection{Overview and Intuition}

Reachability specifications present a fundamental challenge in RL: PAC learning is impossible when transition probabilities are completely unknown~\cite{alur2022framework,yang2022reinforcement}.
However, a key insight unlocks a path towards asymptotic guarantees: PAC learning becomes possible when certain MDP parameters are known.
For example, if we know the least non-zero transition probability $p_{\min}$, PAC learning algorithms exist~\cite{ashok2019pac}.

Our algorithm exploits this observation to transform an impossibility result into an asymptotic guarantee. While we cannot know $p_{\min}$ in advance, we can systematically \emph{guess} increasingly accurate approximations. By combining these guesses with carefully chosen but increasingly smaller approximation factors and confidence errors, we achieve exact optimality in the limit.

The algorithm (Algorithm~\ref{alg:black_box_ltl_reachability_algorithm}) operates in stages $k = 1, 2, 3, \ldots$, where three key parameters are set in each stage.
These are:
(a) our guess for the minimum transition probability $p_k = 1/2^k$,
(b) the confidence error $\delta_k = 1/2^k$,  and
(c) the approximation factor $\varepsilon_k = 1/2^k$.
In every stage $k$, our algorithm simulates the MDP  $N_k$ times to learn a partial model of the true MDP. It then extracts the optimal policy $\pi_k$ in the partial model. Policy extraction involves a procedure called {\em Bounded Value Iteration (BVI)~\cite{haddad2017interval}}.
The asymptotic guarantee follows from three key observations: 

\begin{enumerate}[leftmargin=*,noitemsep,topsep=0mm]
\item \textbf{From Theorem~\ref{thrm:pac_kernel} (PAC Guarantee)}, once $k$ is large enough that $p_k \leq p_{\min}$ (at stage $K_{\text{PAC}}$), from this point forward, we can ensure a PAC guarantee in each stage, i.e. we can ensure that there exists a number of simulations $N_k$ such that the optimal policy in the partial model in stage $k$  is $\varepsilon_k$-optimal with probability at least $1 - \delta_k$. 

\item As $k$ continues to grow, $\varepsilon_k = 1/2^k$ shrinks exponentially. Eventually, $\varepsilon_k$ becomes smaller than the minimum separation between optimal and suboptimal memoryless deterministic policies $\varepsilon_\mathsf{diff}$. Then \textbf{from Theorem~\ref{thrm:opt_kernel} (Exact Optimality)}, at stage $K_{\text{opt}}$ where $\varepsilon_k$ falls below this separation gap, there are no policies ``in between'' optimal and suboptimal.
Any $\varepsilon_k$-optimal policy must be exactly optimal.
Combining this with Theorem~\ref{thrm:pac_kernel}, we get that $\pi_k \in \Pi_{\text{opt}}$ with probability at least $1 - \delta_k$.

Morever, Section~\ref{sec:boud_on_gap} presents further analysis on  $\varepsilon_\mathsf{diff}$ w.r.t. MDP parameters.

\item Finally, \textbf{from Theorem~\ref{thrm:asymptotic} (Asymptotic Convergence)}, since the total failure probability across all stages is finite ($\Sigma_{k=0}^\infty \delta_k$), by Borel–Cantelli Lemma, with probability 1, only finitely many stages can fail. Hence,  eventually, all $\pi_k$ must be optimal with probability 1. 

\end{enumerate}

Please note that $K_\opt$ and $K_\pac$ are simply analytical parameters. They are only needed for the proof analysis. They are never used as inputs for our algorithm.

\subsection{Core Algorithmic Concepts}
\label{subsec:core-algorithms}

\paragraph{Transition Probability Estimation.}
As the algorithm simulates the MDP, it stores counters of state-action transition occurrences $\#(s, a, s')$ and $\#(s, a)$ to calculate the lower estimate of the transition probability. Given an  error tolerance $\delta_P > 0$, Hoeffding bound~\cite{hoeffding1963probability} gives the following: 
\[
    \Pr \left[ \left| P(s, a, s') - \frac{\#(s, a, s')}{\#(s, a)} \right| \ge c \right] \le \delta_P\,,
\]
where $c = \sqrt{\frac{\ln (\delta_P/2)}{-2 \cdot \#(s,a)}}$ is the {\em width} of the estimation error. The computation is given in Appendix~\ref{sec:hoeffding}. 

Then, the lower estimate of transition probabilities $\hat{P}$ is given by
$\hat{P}(s, a, s') = \max\left\{0, \frac{\#(s, a, s')}{\#(s, a)} - c\right\}
$.

These choices of $c$ and the lower-estimate $\hat{P}$ are crucial to the PAC guarantee in \Cref{thrm:pac_kernel}.

\paragraph{Bounded Value Iteration (BVI).}
When transition probabilities are known, the {\em optimal state values} are computed as $V(s) = \max_{a \in A} V(s, a)$ where
$  V(s, a) = \sum_{s' \in S} P(s, a, s') \cdot V(s') $
for $s \in S \setminus G$ and $V(s) = 1$ for $s \in G$.
When transition probabilities are unknown, we use a well-established technique called {\em Bounded Value Iteration (BVI)}~\cite{haddad2017interval} to compute lower and upper estimates of the optimal state values, denoted by $L(s)$ and $U(s)$, respectively. 

A crucial concept here is of {\em end-components}. 
An {\em end-component} (EC) $C$ is a set of state-action pairs $(s, a) \in S\times A$ in an MDP such that (a) for all states $t \sim P(s, a)$, there exists an action $b$ such that $(t,b) \in C$, and (b) the graph induced by the state-action pairs in $C$ forms a {\em strongly connected component}. 
An end-component $C$ is said to be {\em maximal} if there does not exist another end-component $C'\neq C$ s.t. $C \subseteq C'$.
For example,  the set $\{(s_1, a), (s_2, a)\}$ is a maximal EC in Fig~\ref{fig:ec_mdp}.
For all $(s,a) \in C$, we call $a$ a {\em staying action} in state $s$. For all $a \in Av(s)$ such that $(s, a) \notin C$, we call $a$ an {\em exit action} in state $s$. 

\smallskip\noindent{\em BVI in the absence of ECs.} BVI computes $L$ and $U$ iteratively. It initializes  $L(s) = 1$ for $s \in G$ and $L(s) = 0$ for $s \in S \setminus G$, and $U(s) = 1$ for all $s \in S$. These values are iteratively updated using the following equations:
\[
        L(s) = \max_{a \in Av(s)} L(s, a) \text{  ~and~  } U(s) = \max_{a \in Av(s)} U(s, a)\,,
\]
where we define the lower and upper bounds of the state-action pair values as follows:
\begin{align*}
    L(s,a) &= \sum_{\substack{s': \#(s,a,s') > 0}} \hat{P}(s,a,s') \cdot L(s') \\
    \begin{split}
        U(s,a) &= \left( \sum_{\substack{s': \#(s,a,s') > 0}} \hat{P}(s,a,s')\, U(s')\right)\\
        & \hspace{1mm} +\left(1 - \sum_{\substack{s': \#(s,a,s') > 0}} \hat{P}(s,a,s')\right)
    \end{split}
\end{align*}
These update equations, taken from~\cite{ashok2019pac},  guarantee that for all $s \in S$, $L(s) \leq V(s) \leq U(s)$ when we use lower estimates for transition probabilities.


\begin{figure}[t]
    \begin{minipage}{0.45\textwidth}
        \centering
        \begin{tikzpicture}[
            shorten >=1pt,
            node distance=1.5cm,
            on grid,
            auto,
            state/.append style={minimum size=5mm}
        ]
            \node[state, initial] (s0) {{$s_0$}};
            \node[state, below=of s0] (s1) {$s_1$};
            \node[state, right=of s1] (s2) {$s_2$};
            \node[state, accepting, right=of s0] (sg) {$T$};

            \path[->, thick, >=stealth]
                (s0) edge node {\scriptsize$a, 0.5$} (s1)
                (s0) edge node {\scriptsize$a, 0.5$} (sg)
                (s1) edge [bend left=10] node {\scriptsize$a, 1.0$} (s2)
                (s2) edge [bend left=10] node {\scriptsize$a, 1.0$} (s1);
        \end{tikzpicture}
        \caption{Simple MDP with EC}\label{fig:ec_mdp}
    \end{minipage}%
    \hfill 
    \begin{minipage}{0.45\textwidth}
        \centering
        \begin{tikzpicture}[
            shorten >=1pt,
            node distance=1.5cm,
            on grid,
            auto,
            state/.append style={minimum size=5mm}
        ]
            \node[state, initial] (s0) {$s_0$};
            \node[state, below=of s0] (SS) {$ss$};
            \node[state, accepting, right=of s0] (sg) {$T$};

            \path[->, thick, >=stealth]
                (s0) edge node {\scriptsize$a, 0.5$} (SS)
                (s0) edge node {\scriptsize$a, 0.5$} (sg)
                (SS) edge [loop left] node {\scriptsize$(s_1, a),  1$} (SS)
                (SS) edge [loop right] node {\scriptsize $(s_2, a), 1$} (SS);
        \end{tikzpicture}
        \caption{Collapsed MDP}\label{fig:collapsed_ec_mdp}
    \end{minipage}
\end{figure}

\smallskip\noindent{\em BVI in the presence of ECs.}
In the presence of ECs, BVI may not converge. For example, in Fig~\ref{fig:ec_mdp}, $C = \{(s_1,a), (s_2,a)\}$ forms an EC. Since neither of the states is a goal state, lower and upper bounds will initialize to 0 and 1, respectively. As per the iterative updates, these values will remain the same. This affects the convergence $L$ and $U$ in the initial state $s_0$, which will stabilize to $0.5$ and $1$, respectively. 

To prevent this, each maximal EC (MEC) is {\em collapsed} to a single {\em super state}. In Figure~\ref{fig:collapsed_ec_mdp}, the MEC $\{(s_1,a), (s_2,a)\}$ is collapsed to a super state $ss$. The actions in the super state $ss$ are of the form $(s, a)$ where $s$ is a state in the MEC and $a \in Av(s)$. 
The transitions within the MEC are converted to self-loops on the super state. For example, in Fig~\ref{fig:collapsed_ec_mdp}, the transition $(s_1,a,s_2)$ is converted to $(ss, (s_1,a), ss)$.
Transitions outside the MEC are retained with the new actions, i.e., $(s, a, s')$ becomes $(ss, (s, a), s')$. Transition probabilities are retained across all. 
Furthermore, for BVI to converge, for all $(s, a)$ in the MEC, we set $L(ss, (s, a)) = U(ss, (s, a)) = 0$ in the collapsed MDP if $G \cap MEC = \emptyset$.
If $G \cap MEC \neq \emptyset$, then there is a target or goal state in the MEC. We would then set $L(ss, (s,a)) = U(ss, (s,a)) = 1$.
A formal description of the collapsing procedure is in Appendix~\ref{ap:collapse}.

BVI, as described previously, is run on the collapsed MDP till convergence.
For instance, In Fig~\ref{fig:collapsed_ec_mdp}, the initialization of upper and lower bounds on $ss, (s_1,a)$ and $ss, (s_2,a)$ ensures that $L(ss) = U(ss) = 0$ upon executing the BVI iterations, eventually leading to $L(s_0)  = U(s_0) = 0.5$. 

Note, once we have collapsed the MDP, $PMC$, we \textit{reset} all the upper and lower bounds for all states and state-action pairs. All upper bounds are reinitialized to $1$. All lower bounds for state $s$ and state-action pair $(s,a)$ are set to either $0$ if $s \notin G$ or $1$ if $s \in G$. We do this to prevent previous stage errors from propagating to later stages.



\paragraph{Policy extraction.}
Once the lower and upper value estimates of the partial model are given by BVI, we can extract a memoryless deterministic policy.
For a state $s$ that is not in an MEC, any action is chosen from its {\em best action set} where 
$    \mathsf{Best\_Action(s)} = \arg\max_{a\in Av(s) } U(s,a)
$.

We solve MEC $C$ as a separate reachability problem.
For a state $s $ within an MEC $C$, if the state has a set of actions, $A' \subseteq Av(s)$ s.t. $A'$ is a subset of the $best\_exit\_actions(C)$, then any action from $A'$ is chosen.
We treat such $s$ as a target in $C$; there is at least one such $s$, or the reachability is $0$.
Since some states in $C$ were changed to a target (with no outgoing actions), we solve the reachability on this simplified MDP recursively.
Note that the value vector of a policy is the same if for other states that do not have the best exit action, we choose randomly $a \in Av(s)$ and $a$ is in the staying actions of $C$:
$    \mathsf{Best\_Exit\_Action}(C) = \arg\max_{a \notin\text{ stay\_actions}(s)} U(ss, (s,a))$ 
where $ss$ is the superstate that $C$ is collapses to. 

A memoryless deterministic policy extracted in this manner is optimal w.r.t. the target set in the partial model. 

\subsection{Main algorithm}

Our algorithm (Algorithm~\ref{alg:black_box_ltl_reachability_algorithm}) runs in stages $k = 1, 2, 3, \cdots$ where each stage is parameterized by $p_k$, $\varepsilon_k$, and $\delta_k$. 
In every state $k$, the algorithm simulates the MDP to build its partial model and maintain counters for $\#(s, a)$ and $\#(s, a, s')$.
We then compute lower and upper value bounds in the partial model using BVI, followed by optimal policy extraction in the partial model. 
For the sake of exposition, the pseudocodes are provided by ignoring parameters that are clear from context. 

\paragraph{Simulation.} 
Algorithm~\ref{alg:simulate} describes the simulation procedure.
A simulation terminates if the run either visits the target set $G$ or is stuck in an EC, as that indicates that the simulation is stuck in a cycle. 

We use a simple heuristic (Algorithm~\ref{alg:looping}) to detect whether an execution is stuck in an EC. 
The  procedure stops a simulation when three
conditions are met: the current state has been visited before
(indicating a cycle), there exists a candidate set $C$ in the explored
partial model that forms an EC, and the algorithm is $\delta_C$-sure
that $C$ is truly an EC in the actual MDP. 
For simplicity, the explored partial model for EC detection is based on the counters $\#(s, a)$ and $\#(s, a, s')$ and not the estimated transition probabilities $\hat{P}$. 

The $\delta_C$-sure EC detection provides a probabilistic guarantee
without excessive sampling. If a state-action pair $(s, a)$ has
an exit from the EC candidate $C$, that exit probability must be at
least $p_k$, where we assume that $p_k$ is the minimum non-zero transition probability. After sampling $(s, a)$ for $n$ times, the
probability of missing such an exit is $(1-p_k)^n$, which should
be smaller than $\delta_C$. This yields the required number of samples:
$n \geq \frac{\ln(\delta_C)}{\ln(1-p_k)}$. The algorithm verifies
that all staying state-action pairs in the EC candidate have been
sampled at least $n$ times. A practical efficiency improvement is that the algorithm uses aggregated
counters kept across all simulations rather than requiring $n$ samples
within a single simulation. 

Finally, with probability $1-\mu$, the simulation procedure in stage $k+1$ chooses the best actions according to the best actions determined in the previous stage $k$, for $k\geq 1$. Note, in the first stage, it chooses the best actions randomly from $Av(s)$ for all $s\in S$. With probability $\mu$, the simulation procedure in stage $k+1$ chooses an action at random from $Av(s)$. $\mu$ is the exploration factor; it can be set as any arbitrary value, where $\mu \in \mathbb{R}(0,1]$.

\begin{algorithm}[t]
\caption{RL from Reachability with Asymptotic Guarantees }
\label{alg:black_box_ltl_reachability_algorithm}
\begin{algorithmic}[1]
    \FUNCTION{ $\textsc{Asymptotic}(\text{MDP } \M, \text{ target/goal set } G)$}
    \STATE $k \gets 0$ 
    \STATE $\mathit{PM} \gets \emptyset$ \COMMENT{Current partial model}
    \REPEAT
      \STATE $k \gets k + 1$
      \STATE $\delta_k \gets \dfrac{1}{2^k}$, $\varepsilon_k \gets \dfrac{1}{2^k}$, $p_k \gets \dfrac{1}{2^k}$ \label{alg:parameter}
      \FOR{$N_k$ times}
        \STATE $X \gets \textsc{Simulate}(\M, G)$ \COMMENT{Returns visited states}
        \STATE $PM \gets PM \cup X$
      \ENDFOR
      \STATE $\mathit{PMC} \gets$ \textsc{CollapseMECs($PM$)}
      \FOR{$i = 1$ to $2^k \cdot |S|$}
        \STATE \textsc{BVI-Update}($\mathit{PMC}$)
      \ENDFOR
      \STATE $\pi_k \gets$ \textsc{OptimalPolicy}($PM$) \\
    \UNTIL{convergence}
    \ENDFUNCTION
\end{algorithmic}
\end{algorithm}

\paragraph{Number of Simulations.}
Next, we need to determine the number of simulations $N_k$ in each stage $k$. 

We choose the number of simulations $N_k$ such that it ensures that the partial model (w.r.t. the estimated transition probabilities) is such that the optimal policy in the partial model is an $\varepsilon_k$ optimal policy in the original MDP with probability $1-\delta_k$:

\begin{theorem}
    \label{thrm:pac_kernel}
    In Algorithm~\ref{alg:black_box_ltl_reachability_algorithm}, there exists a $K_{\pac} \in \N$ s.t. for all $k\geq K_\pac$   there exists a number of simulations $N_k$ s.t. $\Pr[\pi_k \in \Pi^{\varepsilon_k}_\opt] \geq 1-\delta_k$.
        I.e. policy $\pi_k$ is $\varepsilon_k$-optimal in MDP $\M$ with confidence $1-\delta_k$.   
\end{theorem}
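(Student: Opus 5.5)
Set $K_\pac := \lceil \log_2(1/p_{\min})\rceil$. For every $k \ge K_\pac$ we then have $p_k = 2^{-k} \le p_{\min}$, so the stage-$k$ guess is a valid lower bound on the true minimum transition probability. The plan is to reduce stage $k$ to the PAC analysis of \cite{ashok2019pac} (bounded value iteration with lower transition estimates, MEC collapsing, and $\delta_C$-sure end-component detection), which is PAC exactly when a lower bound on $p_{\min}$ is supplied. We split the stage's confidence budget $\delta_k$ into three parts: a Hoeffding part $\delta_P$ per sampled triple $(s,a,s')$, an EC-detection part $\delta_C$ per candidate, and a part for insufficient exploration. Union bounds over at most $|S|^2|A|$ triples and at most $2^{|S||A|}$ EC candidates (or the number of stopping events in the stage) then keep the total failure probability at or below $\delta_k$.

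\textbf{Step 1: the good event.} Let $\mathcal{E}_k$ be the event on which three things hold. (i) Every Hoeffding interval contains the true $P(s,a,s')$, so $\hat P \le P$ componentwise and $L(s) \le V(s) \le U(s)$ by the soundness of the update equations quoted from \cite{ashok2019pac}. (ii) Every EC declared by Algorithm~\ref{alg:looping} is a genuine EC of $\M$; since $p_k \le p_{\min}$, a missed exit has probability at most $(1-p_k)^n \le \delta_C$. (iii) Every state-action pair with non-negligible reach probability under the explored policies has been sampled at least $m$ times. Here $m$ is chosen so that the Hoeffding width satisfies $c \le \varepsilon_k/(4|S|\,\ell_k)$, where $\ell_k$ bounds the effective horizon. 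Condition (ii) also ensures that collapsing the partial-model MECs does not merge states that can escape in $\M$, so MECs are assigned the correct value $0$ or $1$.

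\textbf{Step 2: convergence of the bounds.} On $\mathcal{E}_k$, we show that $U(s_0) - L(s_0) \le \varepsilon_k$ after the $2^k|S|$ BVI iterations, and that the extracted memoryless deterministic policy $\pi_k$ satisfies $J(\pi_k) \ge L(s_0) \ge V(s_0) - \varepsilon_k$. The argument mirrors \cite{ashok2019pac}. Once MECs are collapsed, every non-target, non-sink state reaches $G \cup \{\text{sinks}\}$ within $|S|$ steps with probability at least $p_{\min}^{|S|} \ge p_k^{|S|}$. This gives geometric contraction of $U - L$ with the gap $\sum_{s'}|P - \hat P|$ accumulated along each step. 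Unexplored or under-sampled pairs contribute only through the reach probability of rarely visited states, and choosing $N_k$ large enough (via a Chernoff bound on visit counts under the $\mu$-exploring policy) makes this contribution at most $\varepsilon_k/4$.

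\textbf{Step 3: the main obstacle.} The hardest part is making the policy guarantee $J(\pi_k) \ge L(s_0)$ rigorous for the extracted policy inside MECs, and controlling the finite iteration count. A naive argmax on $U$ can cycle inside an EC. This is exactly what the best-exit-action construction and the recursive reachability solve inside each MEC are meant to prevent. We would argue, following \cite{ashok2019pac,haddad2017interval}, that the policy which routes to the best exit almost surely leaves the MEC through that exit, so its value equals the lower-bound fixpoint on the collapsed model. If $2^k|S|$ iterations are not enough for the contraction to reach $\varepsilon_k$, we would fall back on the existential form of the statement. This requires only that some $N_k$ exists, so we may argue with the limit of BVI (the fixed point of the collapsed model, which converges once MECs are collapsed) and absorb any residual into the choice of $N_k$. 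Putting Steps 1–3 together gives $\Pr[\pi_k \in \Pi^{\varepsilon_k}_\opt] \ge \Pr[\mathcal{E}_k] \ge 1 - \delta_k$.
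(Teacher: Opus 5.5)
Your outline has the same architecture as the paper: the same split of $\delta_k$ into Hoeffding, EC-detection and under-sampling failures, and a reduction to bounded value iteration on the collapsed model. However, the fallback at the end of Step~3 is invalid, and without it your explicit $K_{\pac}=\lceil\log_2(1/p_{\min})\rceil$ is too small.

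\textbf{The iteration budget is fixed, not absorbed by $N_k$.} The algorithm performs exactly $2^k|S|$ BVI updates and extracts $\pi_k$ greedily from those iterates. It never computes the fixpoint, and $N_k$ only controls the accuracy of $\hat P$. On the good event $\hat P\le P$, so after $t$ updates $L$ is at most, and $U$ at least, what exact BVI with $P$ would give after $t$ updates. The truncation error is therefore independent of $N_k$ and cannot be absorbed into it.

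Concretely, let $s_0$ have one action reaching $G$ with probability $3/4$ (a sink otherwise) and another entering a chain $x_1,\dots,x_n$. In the chain each $x_i$ moves forward with probability $1/2$ and otherwise returns to $x_1$, and $x_n$ moves forward into the sink. Then $p_{\min}=1/4$, but after $2^k|S|$ updates $U(x_1)\ge 1-2^k|S|\,2^{-n}$. So for every $k\ge 2$ with $2^k|S|\ll 2^n$, the $U$-greedy $\pi_k$ enters the chain (value $0$), however large $N_k$ is.

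The repair is to enlarge $K_{\pac}$, which the statement permits. On the good event the collapsed partial model has no end components besides the absorbing ones, and $\hat P\ge p_{\min}/2$ on its edges once $c\le p_{\min}/4$. The truncation part of $U-L$ after $2^k|S|$ updates is then at most $(1-(p_{\min}/2)^{|S|})^{2^k}$. This falls below $\varepsilon_k/4$ for all large $k$, since $2^k$ grows exponentially and $\log(1/\varepsilon_k)$ only linearly. The rate must use the fixed $p_{\min}$: with $p_k$ you get $(1-2^{-k|S|})^{2^k}\to 1$. The paper makes the same move, loosely, by assuming $2^k|S|$ exceeds the unrolling horizon $r$ chosen with $V(\M_C)-V(\M_r)\le\varepsilon_k/4$.

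\textbf{The policy bound $J(\pi_k)\ge L(s_0)$ does not hold.} This is true even once MECs are handled. It holds for a policy greedy in $L$, whereas $\pi_k$ is greedy in $U$. An action leading into a poorly sampled region ($U\approx 1$, $L\approx 0$) can beat an accurately estimated action of value $0.9$ while $U(s_0)-L(s_0)=0.1$.

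What works is tightness at every state $\pi_k$ can reach. Since $L(s,a)\le V(s,a)$ and $U(s)\ge V(s)$, the $U$-greedy action $a$ at $s$ loses at most $\max_{s'}(U-L)(s')$ plus the missing mass $1-\sum_{s'}\hat P(s,a,s')$ against $V(s)$. In the collapsed model every policy is absorbed within $|S|$ steps with probability at least $p_{\min}^{|S|}$. These per-step losses therefore accumulate over an expected horizon of at most $|S|\,p_{\min}^{-|S|}$, and best-exit routing transfers the value back to $\M$.

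So drop the ``non-negligible reach probability'' qualifier in (iii) and do what the paper does. Take $N_k$ large enough that every transition reachable from $s_0$ before $G$ is sampled $s_k$ times. This is possible because each such transition has per-simulation probability at least $(\mu p_k/|A|)^{|S|}$ along a simple path, on which \textsc{Looping} cannot fire.

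This also repairs (ii). Condition (ii) only certifies ECs declared by \textsc{Looping}, but \textsc{CollapseMECs} collapses every MEC of the partial model. A spurious MEC, caused by an unobserved exit, would be assigned value $0$ or $1$ and break soundness of $U$ or $L$.

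The paper itself passes directly from $U(s_0)-L(s_0)\le\varepsilon_k$ to $\pi_k\in\Pi^{\varepsilon_k}_{\opt}$, so this step needs an argument in either write-up. Your horizon-linear error propagation, $c\le\varepsilon_k/(4|S|\ell_k)$, is tighter than the paper's unrolled induction with its $(1/p_k)^r$ factor. Since only the existence of $N_k$ is claimed, either suffices once these two points are fixed.
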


\begin{proof}[Proof Sketch]

  We present a proof sketch with the key ideas here. The complete rigorous proof has been presented in Appendix~\ref{ap:full_main_proof}.
  
  Let the integer $K_\pac$ be such that $p_{K_{\pac}} = \dfrac{1}{2^{K_\pac}} \leq p_{min}$ where  $p_{min}$ is the minimum non-zero probability in the MDP $\M$. Note that even though the value $p_{min}$ is unknown (and never inputted into the algorithm), such a $K_\pac$ will certainly exist. By running the algorithm continuously, we guarantee that stage $K_{PAC}$ is eventually reached.

  Our objective is to show that we can bound the error in computing a near-optimal policy in all stages $k\geq K_\pac$.  The primary question is how many simulations do we require in each stage $k$ to bound the total error.  

  Let us call an action $a$ {\em relevant} in a state $s$ if the action is among the best actions in the partial MDP that can be chosen by the simulation in that given stage $k$.
  The corresponding transitions are called {\em relevant}.
  Intuitively, an action/transition is relevant if those are the actions the simulation would explore.
  A more formal definition of {\em relevant} action and transition can be found in Appendix~\ref{ap:relevant_transition}

  Then~\cite{ashok2019pac} proves that given $p$, a lower bound on the minimum non-zero transition probability, an approximation factor $\varepsilon$, and an arbitrary error $\delta$, there exists a number $s$ such that if each relevant transition is sampled $s$ times, we can guarantee learning a $\varepsilon$-optimal policy with error $\delta$.
  Extending this to our multi-stage scenario, when $k\geq K_\pac$, we can guarantee the existence of such an $s_k$  for $p_k$, $\varepsilon_k$, and an arbitrary error $\delta_{TP}$ (note this may be different from $\delta_k$). Note, this works for EC-free MDPs. We include an extra $\delta_{EC}$ error to account for the detection and collapsing of MECs in the MDP.

  Now, it remains to show that there exists a number of simulations $N_k$ such that each relevant transition is seen at least $s_k$ times.

  We compute a (conservative) lower bound on the probability of seeing the least likely transition in a single simulation (including relevant transitions). We calculate this bound for all transitions, not just relevant ones, to account for the rare occasion the simulation does not sample from the \textit{true} relevant actions in earlier stages; in this case, the algorithm can adjust and correct itself by seeing all transitions enough times.
  We calculate this value, $p$, to be $p \ge \left( \frac{\mu \cdot p_k}{|A|} \right)^{|S|}$, as proved in Appendix~\ref{ap:p_relevant_transition} Then, we will compute the number simulations $N_k$ required to visit the least likely relevant transition $s_k$ times.
  Note that visiting the least likely transition subsumes visiting all relevant transitions. 
  
  Let a random variable $X \sim \mathsf{Binomial}(N_k, p)$. Then, we want to ensure that the cumulative distribution function of $X$ i.e. $F[X\leq s_k-1] = \sum_{i=0}^{s_k-1}\binom{N_k}{i} \cdot p^i \cdot (1 - p)^{N_k - i}$ is bounded, say by $\delta_{N_k}$, as this indicates the probability with which despite $N_k$ simulations, the least likely relevant transition has not been visited at least $s_k$ times. 

  To conclude, we compute the total confidence producing an $\varepsilon_k$-optimal policy after $N_k$ simulations are performed in stage $k\geq K_\pac$. There are three main sources of error: $\delta_{TP}$ from incorrectly producing a non-$\varepsilon_k$-optimal policy due to incorrectly estimating transition probabilities, $\delta_{EC}$ for incorrectly detecting and collapsing ECs, and $\delta_{N_k}$ from the failure to sample relevant transitions at least $s_k$ times.  Hence, the error is $\delta_{TP} + \delta_{EC} + \delta_{N_k}$. We can choose $\delta_k$ such that $\delta_k = \delta_{TP} + \delta_{EC} + \delta_{N_k}$. Therefore, with $1 - \delta_k$ probability, stage $k \ge K_{PAC}$ will produce a $\varepsilon_k$-optimal policy.
\end{proof}



Next, we strengthen the previous argument to guarantee that the number of simulations $N_k$ is sufficient to learn optimal policies with high probability, as opposed to learning near-optimal policies with high probability.
This utilizes our choice to decrease the approximation error $\varepsilon_k$ in each stage:
\begin{theorem}
    \label{thrm:opt_kernel}
    In Algorithm~\ref{alg:black_box_ltl_reachability_algorithm}, there exists an integer $K_\opt \in \N$ s.t. for all $k\geq K_\opt$ there exists a number of simulations $N_k$ s.t. $\Pr[\pi_k \in \Pi_\opt] \geq 1-\delta_k,$. I.e. policy $\pi_k$ is an optimal policy in MDP $\M$ with probability at least $1-\delta_k$.
\end{theorem}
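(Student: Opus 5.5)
We reduce Theorem~\ref{thrm:opt_kernel} to Theorem~\ref{thrm:pac_kernel} through a gap argument. Let $\Pi_{MD}$ be the finite set of memoryless deterministic policies of $\M$. Define
\[
\varepsilon_\mathsf{diff} = \min\{\J^*(\M,\L) - J^{\M}_{\L}(\pi) \mid \pi\in\Pi_{MD},\ J^{\M}_{\L}(\pi) < \J^*(\M,\L)\},
\]
with the convention $\varepsilon_\mathsf{diff}=1$ if the set is empty. The set $\Pi_{MD}$ has at most $|A|^{|S|}$ elements, so this is a minimum over a finite set of strictly positive numbers, and hence $\varepsilon_\mathsf{diff}>0$. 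Because $\varepsilon_\mathsf{diff}$ is positive, we may choose $K_\opt \geq K_\pac$ with $\varepsilon_{K_\opt} = 2^{-K_\opt} < \varepsilon_\mathsf{diff}$. Since $\varepsilon_k$ is decreasing in $k$, we then have $\varepsilon_k<\varepsilon_\mathsf{diff}$ for all $k\geq K_\opt$. As with $K_\pac$, the quantity $K_\opt$ is purely analytical and is never given to the algorithm.

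Now fix $k\geq K_\opt$. Since $k \geq K_\pac$, Theorem~\ref{thrm:pac_kernel} supplies a number of simulations $N_k$ such that $\pi_k\in\Pi^{\varepsilon_k}_\opt$ with probability at least $1-\delta_k$. We claim that on this event $\pi_k\in\Pi_\opt$, which gives $\Pr[\pi_k\in\Pi_\opt]\geq \Pr[\pi_k\in\Pi^{\varepsilon_k}_\opt]\geq 1-\delta_k$. The claim needs the fact that $\pi_k$ is always memoryless deterministic, and we justify it from the policy extraction procedure. Outside MECs, extraction picks a single action from $\mathsf{Best\_Action}(s)$. Inside an MEC $C$, it picks either a fixed best exit action or a fixed staying action per state; the recursive sub-reachability problems also yield per-state action choices. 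The extraction therefore outputs a map $S\to A$, so $\pi_k\in\Pi_{MD}$ deterministically, regardless of the simulation outcomes.

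With this in hand, suppose $\pi_k$ is $\varepsilon_k$-optimal but not optimal. Then $\pi_k$ lies in the set defining $\varepsilon_\mathsf{diff}$, so
\[
\varepsilon_\mathsf{diff} \leq \J^*-J(\pi_k)\leq \varepsilon_k<\varepsilon_\mathsf{diff},
\]
which is a contradiction. Hence $\pi_k$ is optimal, completing the argument.

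\textbf{Main obstacle.} The one delicate point is the claim that every output $\pi_k$ is memoryless deterministic, including the randomized tie-breaking among staying actions inside MECs. If that tie-breaking were randomized per visit rather than fixed per state, the policy would not be MD, and the gap argument would fail. We would therefore fix, once per stage, the random choice made at each state, so that $\pi_k$ is an MD policy; the remark after the extraction procedure says value vectors are unaffected by which staying action is chosen. Optimality of MD policies for reachability is not needed here, because $\varepsilon_\mathsf{diff}$ is defined relative to $\J^*$ over all policies.
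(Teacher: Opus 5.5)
Your proposal is correct and is essentially the paper's argument: a positive gap $\varepsilon_\mathsf{diff}$ over the finitely many memoryless deterministic policies, a choice of $K_\opt\ge K_\pac$ with $2^{-K_\opt}$ below that gap, and Theorem~\ref{thrm:pac_kernel} applied to the memoryless deterministic output $\pi_k$. Your strict inequality $2^{-K_\opt}<\varepsilon_\mathsf{diff}$ is slightly more careful than the paper's $\varepsilon_\mathsf{diff}\ge 2^{-K_\opt}$, which does not exclude a runner-up whose gap equals $\varepsilon_k$ exactly.

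One caution concerns only your side remark. Freezing an arbitrary staying action at each non-exit state of an MEC can close a cycle that never reaches an exit state, and this can lower the value. So the paper's note about choosing staying actions does not justify that derandomization; rely instead, as your main paragraph does, on the recursive reachability step of the extraction to fix the per-state actions.
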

\begin{proof}
In our algorithm, we consider only {\em memoryless deterministic policies} in $\M$.
Note that (i) in the set of optimal policies, there exists a deterministic memoryless policy, and (ii)~for a given $\M$, there are only finitely many memoryless deterministic policies.

Let $\varepsilon_{\mathsf{diff}}$ be the difference between the value of the optimal memoryless deterministic policy $\pi^*$ and the first runner-up memoryless deterministic policy $\pi^*_2$ that does not achieve the optimal value.
Note that since there are only finitely many policies, we have $\eps_{\mathsf{diff}} > 0$.

Let $K_\opt$ be such that (a) $K_\opt \geq K_\pac$ from Theorem~\ref{thrm:pac_kernel} and (b) $\varepsilon_{\mathsf{diff}} \geq \dfrac{1}{2^{K_\opt}} $. Then, from Theorem~\ref{thrm:pac_kernel}, we get that for all $k\geq K_\opt$, there exists an $N_k$ s.t. 
$    \Pr[\pi_k \in \Pi^{\varepsilon_k}_\opt] \geq 1-\delta_k$.
Note that $\varepsilon_k = \dfrac{1}{2^k} \leq \dfrac{1}{2^{K_\opt}} \leq \varepsilon_{\mathsf{diff}}$.
Since $\pi_k$ is a memoryless and deterministic policy  $\varepsilon_k$-optimal policy in the original MDP $\M$,  $\pi_k$ must be an optimal policy in $\M$. Hence, we get that for all $k\geq K_\opt$  there exists an $N_k$ s.t. 
$    \Pr[\pi_k \in \Pi_\opt] \geq 1-\delta_k$. Note, even though $K_{opt}$ is never inputted or known to the algorithm, it certainly exists. By running the algorithm continuously, we guarantee that stage $K_{opt}$ is eventually reached.
\end{proof}
A deeper analysis on $\varepsilon_\mathsf{diff}$ has been  presented in Section~\ref{sec:boud_on_gap}. 

The final piece of our argument is to guarantee convergence to optimal policies with probability 1.  So far, our procedure guarantees that beyond a stage, i.e., for all $k>K_\opt$, the stage returns the optimal policy with an error of $\delta_k$.
Generally,  it is not sufficient to ensure convergence to optimal policies with probability 1 when each stage has a bounded error because the sum of these errors could diverge. However, in our case, the sum of these errors is guaranteed to converge to a finite value as we have carefully chosen $\delta_k = \dfrac{1}{2^k}$:
\begin{theorem}
\label{thrm:asymptotic}
Algorithm~\ref{alg:black_box_ltl_reachability_algorithm} returns an optimal policy with asymptotic guarantees. 
\end{theorem}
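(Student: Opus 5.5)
The argument is a Borel--Cantelli step on top of Theorem~\ref{thrm:opt_kernel}. For each stage $k$, let $E_k$ be the event that the stage-$k$ output $\pi_k$ is \emph{not} an optimal policy, i.e.\ $\pi_k \notin \Pi_\opt$. The run of Algorithm~\ref{alg:black_box_ltl_reachability_algorithm} induces a single probability space: all simulator randomness and all exploration coin flips across all stages. We fix it and consider the whole sequence $(E_k)_{k\geq 1}$ on that space. We also fix the simulation budgets $N_k$ to the values whose existence Theorem~\ref{thrm:opt_kernel} guarantees for every $k\geq K_\opt$. For $k<K_\opt$, any finite $N_k$ will do; we only need every stage to terminate, so that stage $K_\opt$ is actually reached.

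\textbf{Step 1: summability.} By Theorem~\ref{thrm:opt_kernel}, $\Pr[E_k]\leq \delta_k = 2^{-k}$ for all $k\geq K_\opt$. Hence
\[
\sum_{k\geq 1}\Pr[E_k] \;\leq\; (K_\opt-1) + \sum_{k\geq K_\opt} 2^{-k} \;<\; \infty .
\]
This bound holds even though stages are not independent: stage $k+1$ reuses the counters and the best actions from stage $k$. The first Borel--Cantelli lemma needs no independence, so it applies directly. It gives $\Pr[E_k \text{ infinitely often}]=0$. Thus, almost surely there is a (random) finite stage $K^\star(\omega)\ge K_\opt$ with $\pi_k\in\Pi_\opt$ for all $k\geq K^\star(\omega)$.

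\textbf{Step 2: from stages to iterations.} Definition~\ref{def:pac-mdp} refers to the per-iteration output sequence $\{\pi_n\}$, not the per-stage sequence. The algorithm outputs within stage $k$ the policy $\pi_{k-1}$ computed at the end of the previous stage, or equivalently keeps it as the current estimate. Each stage consists of finitely many iterations, because there are $N_k$ simulations and each simulation terminates almost surely. Termination holds because a run either hits $G$ or gets trapped in an end component, and the EC-detection test eventually fires once the aggregated counters exceed the threshold $\ln(\delta_C)/\ln(1-p_k)$; if needed we add a length cap as a safeguard. Consequently, $n\to\infty$ forces $k(n)\to\infty$. On the almost-sure event from Step 1, $J^{\M}_{\L}(\pi_n)=\J^*(\M,\L)$ for all sufficiently large $n$. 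This is even stronger than the convergence required by Definition~\ref{def:pac-mdp}.

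\textbf{Main obstacle.} The delicate point is Step 1's use of Theorem~\ref{thrm:opt_kernel} as a \emph{marginal} bound on the joint space. Theorem~\ref{thrm:opt_kernel} must give $\Pr[E_k]\le\delta_k$ unconditionally, whatever happened in earlier stages. This holds because the sampling-count argument lower-bounds the probability of every transition by $(\mu p_k/|A|)^{|S|}$. That bound is uniform over the history, thanks to the $\mu$-exploration, so the bound also holds conditionally on the past and therefore marginally. I would state this explicitly and not reprove it. Once the marginal bound is in place, Borel--Cantelli and the stage/iteration bookkeeping are routine.
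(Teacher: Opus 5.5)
Your proof is correct and follows the paper's argument. You take $E_k$ to be the event that $\pi_k$ is not optimal, use Theorem~\ref{thrm:opt_kernel} to get $\Pr[E_k]\le 2^{-k}$ for $k\ge K_{\opt}$ so that $\sum_k \Pr[E_k]<\infty$, and conclude from the first Borel--Cantelli lemma that almost surely only finitely many stages fail. Your additional points are sound refinements that the paper leaves implicit: that Borel--Cantelli needs no independence across stages, and the passage from per-stage outputs to the per-iteration sequence using almost-sure termination of each stage.
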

\begin{proof}
Consider the infinite sequence of events $\{E_k\}_{k\geq 1}$ where $E_k$ is the event that the policy $\pi_k$ returned in the $k$-th stage is not an optimal policy. Let $K_\opt$ be from Theorem~\ref{thrm:opt_kernel}.

 Then, that sum of the probability of all events $E_k$ is finite  $\Sigma_{k=1}^\infty 
 = \Sigma_{k=1}^{K_\opt}\Pr[E_k] + \Sigma_{k=K_\opt}^\infty \Pr[E_k] 
 = \Sigma_{k=1}^{K_\opt}\Pr[E_k] + \Sigma_{k=K_\opt}^\infty \delta_k 
 \leq K_\opt + \Sigma_{k=K_\opt}^\infty\dfrac{1}{2^k} \leq K_\opt + 1.$
By the Borel–Cantelli Lemma, the probability that infinitely many events $E_k$ occur is 0.
In other words, only finitely many events $E_k$ can occur with probability 1. Therefore, with probability 1, there exists a $K$ s.t. for all $k\geq K$, $\pi_k$ is optimal. In other words, our algorithm asymptotically converges to the optimal policy. 
\end{proof}

Note that our guarantee is stronger than the general definition of asymptotic convergence. The general definition only requires the policies in each stage to keep getting arbitrarily close to the optimal policy with probability 1. This is, in fact, what~\cite{le2024reinforcement} proves. In contrast, our approach ensures that beyond a point, we produce optimal policies only with probability 1. 

Furthermore, the stage beyond which this becomes possible, i.e., $K_{\opt}$ depends only on $\varepsilon_\mathsf{diff}$.
In the following section, we argue that $\varepsilon_\mathsf{diff}$ depends only on the internal parameters of the true MDP.
As a consequence, if these parameters were known, we could even predict the exact stage beyond which the optimal-policy-only behavior is observed.
We expand on this further in the next section.

%% file: Section/bound.tex
\section{Bound on $\eps_{\mathsf{diff}}$}\label{sec:boud_on_gap}
\Cref{thrm:opt_kernel} shows that there exists a constant $\eps_{\mathsf{diff}}$ such that the difference between an optimal policy and any non-optimal policy is at least $\eps_{\mathsf{diff}}$.
In this section, we give precise bounds on $\varepsilon_{\mathsf{diff}}$. Only for this section, we assume that all probability values are rational. 

Our approach relies on techniques from linear algebra and adapts the approach of \cite{chatterjee2023faster} for MDPs with reachability.
Technical details of the proofs and definitions are in \Cref{app:sec4_details}.
We suppose that all the probabilities in the transition function are rational numbers.
Moreover, we parametrize $\eps_{\mathsf{diff}}$ by transition complexity, which can be viewed as of order of $1/p_{\min}$ (In theory, it can be higher than $\frac{1}{p_{\min}}$ if $p_{\min}$ is adversarially large).

We express the value vector of any policy as a solution to a matrix equation
\begin{equation}\label{eq:matrix_expression}
    v = v\Mt + \Rew\,,
\end{equation}
where the $\Mt$ is square matrix with entries indexed by $S$ and $S\times A$.
For policy $\pi$, states $s\in S\setminus G$, $s'\in S$ and action $a\in A$, we have $\Mt_{s,(s,a)} = 1$ if $\pi(s) = a$.
We have $\Mt_{(s,a),s'} = P(s,a,s')$.
Other entries of $\Mt$ are $0$.
For vector $\Rew$, on index $s\in S$, we have $\Rew_s = 1$ (or in general $w_s$, the reward for reaching $s \in G$) and $0$ for all other indices.

\begin{figure*}[h]
    \centering
    \begin{subfigure}{0.32\textwidth}
        \centering
        \includegraphics[width=\textwidth]{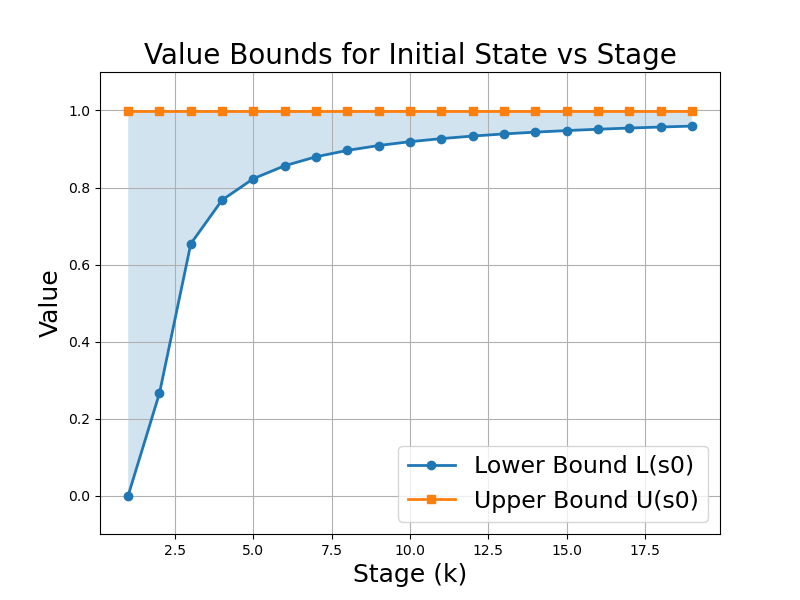}
        \caption{Value Bounds vs. Stage $k$}
        \label{fig:dp3_bounds_vs_k}
    \end{subfigure}
    \hfill
    \begin{subfigure}{0.32\textwidth}
        \centering
        \includegraphics[width=\textwidth]{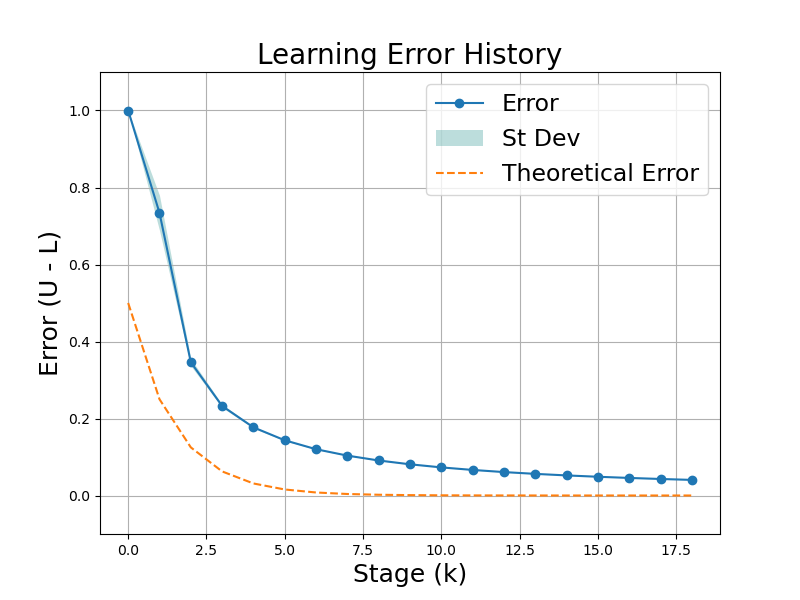}
        \caption{Error vs. Stage $k$}
        \label{fig:dp3_error_vs_k}
    \end{subfigure}
    \hfill
    \begin{subfigure}{0.32\textwidth}
        \centering
        \includegraphics[width=\textwidth]{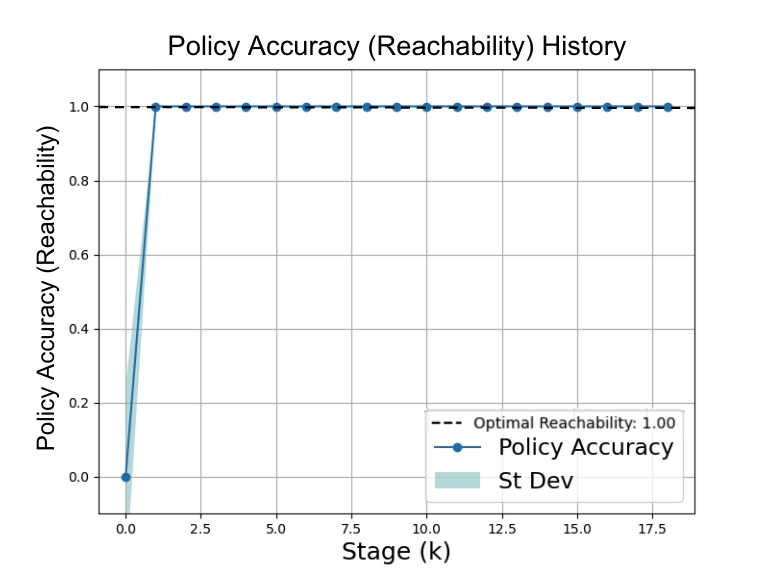}
        \caption{Policy Accuracy vs. Stage $k$}
        \label{fig:dp3_policy_acc_vs_k}
    \end{subfigure}

    \vspace{1em}
    \caption{Performance on Dining Philosophers benchmark. The values plotted are the median values over 10 trials. Optimal Policy Reachability Value: 1.00.}
    \label{fig:dp3_main}
\end{figure*}

\paragraph{Transition complexity.}
For one action-state pair, let the transition complexity for state $s$ and action $a$, $D_{s, a}$, be the smallest common multiple of all denominators of the transition function in the state $s$ after selecting action $a$.
For the whole MDP, let the transition complexity $D$ be the maximum of the transition complexities of all state-action pairs, that is, $D = \max_{(s, a)\in S\times A}\left(D_{s, a}\right)$.

With the formulation of the problem as a matrix equation and the knowledge of the transition complexity, we can prove the main theorem.
The theorem provides a lower bound on the difference between optimal and non-optimal policies.

\begin{theorem}\label{lem:minimal_difference}
    For all memoryless deterministic policies $\pi,\pi' \in \Pi$, we have
 $       ||v_{\pi} - v_{\pi'}||_1 \ge \tdiff $
   or 
   $     ||v_{\pi} - v_{\pi'}||_1 = 0$.
   
\end{theorem}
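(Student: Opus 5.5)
Since both value vectors are determined by linear systems with rational coefficients of bounded denominator, the plan is to show that each value vector is a vector of rationals with a common denominator bounded by $\left(2D\right)^{|A|\cdot|S|}2^{|S|}$. If $v_\pi$ has common denominator $q_\pi$ and $v_{\pi'}$ has common denominator $q_{\pi'}$, then any nonzero coordinate of $v_\pi - v_{\pi'}$ is a nonzero rational with denominator dividing $q_\pi q_{\pi'}$. Hence it has absolute value at least $1/(q_\pi q_{\pi'})$, and the $1$-norm is at least that as well. Squaring the single-policy bound $\diff$ gives exactly $\tdiff$, which explains the exponent $2$ in the statement.

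\textbf{Step 1 (well-posed linear system).} Fix a memoryless deterministic $\pi$. First remove the states from which $G$ is unreachable under $\pi$ (a purely graph-theoretic set $Z_\pi$ with value $0$), and treat $G$ as absorbing with value $1$. On the remaining states, equation~\eqref{eq:matrix_expression} reduces to $v = Qv + b$, where $Q$ is the substochastic matrix $Q_{s,s'} = P(s,\pi(s),s')$ restricted to transient states. Every remaining state reaches $G\cup Z_\pi$ with positive probability, so $I-Q$ is invertible and $v = (I-Q)^{-1}b$. The $S\times A$ indexing of $\Mt$ is just a lifted form of this system, and I would work with the reduced $|S|\times|S|$ form.

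\textbf{Step 2 (clearing denominators).} Multiply row $s$ of $I-Q$ and of $b$ by $D_{s,\pi(s)}\le D$. This gives an integer system $Mv = c$ in which each row has entries of absolute value at most $D$ and the row's $\ell_1$ norm is at most $2D$: the diagonal contributes $D$ in absolute value, and the off-diagonal entries sum to at most $D$. By Cramer's rule, $v_s = \det M_s/\det M$, so every coordinate has denominator dividing $\det M$. Hadamard's inequality in its row-$\ell_1$ form gives $|\det M| \le \prod_s \|M_{s,\cdot}\|_1 \le (2D)^{|S|}$, which is already stronger than the claimed $(2D)^{|A||S|}2^{|S|}$. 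If the $S\times A$ formulation of the paper is kept instead, the same argument on the larger matrix naturally produces a factor $(2D)^{|A||S|}$ from the action rows, and a factor $2^{|S|}$ from the state rows, whose entries are $0/\pm1$ with row-$\ell_1$ norm at most $2$. This is presumably the source of the stated form, and I would follow it to match the constant.

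\textbf{Step 3 (conclusion).} Set $q_\pi = |\det M_\pi| \le \diff^{-1}$. Then $q_\pi q_{\pi'}(v_\pi - v_{\pi'})$ is an integer vector, so either it is zero or some coordinate has absolute value at least $1$. This yields $\|v_\pi - v_{\pi'}\|_1 \ge 1/(q_\pi q_{\pi'}) \ge \tdiff$.

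The main obstacle is Step 1: ensuring that the matrix used in Cramer's rule is genuinely nonsingular. Naively, $I-\Mt$ is singular whenever $\pi$ induces an end component avoiding $G$. The fix is to pin the values of $Z_\pi$ (and of $G$) to constants, turning those rows into identity rows. These rows only contribute factors of $1$ to the determinant bound, so they do not spoil the estimate. The integrality of the numerators also has to be checked, and this holds because $c$ is an integer vector after scaling.
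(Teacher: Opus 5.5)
Your proof is correct, and its core is the same as the paper's. You scale each row by its transition complexity to get an integer system, and use Cramer's rule to write every value of a fixed policy over one common integer denominator. You then bound that denominator with the row-$\ell_1$ form of Hadamard's inequality, and cross-multiply the two denominators to get the squared bound.

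Where you differ is the setup, and your version is the more careful one. The paper works with the lifted system indexed by $S\cup(S\times A)$. This reproduces the constant $(2D)^{|A|\cdot|S|}2^{|S|}$ exactly: a factor $2D$ per action row and $2$ per state row. However, the paper makes $Id-\Mt$ nonsingular only by assuming, in the appendix, that the MDP is halting. That assumption is absent from the theorem statement. As you observe, it fails whenever some memoryless policy induces an end component avoiding $G$.

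Your per-policy reduction to the states outside $G\cup Z_\pi$ removes that hypothesis; the pinned values $0$ and $1$ are integers and do not enlarge the common denominator. It also gives the sharper separation $(2D)^{-2|S|}$. This already implies the stated $(2D)^{-2|A|\cdot|S|}2^{-2|S|}$, because $|A|\ge 1$ and $2D\ge 2$.

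So you do not need to revert to the $S\times A$ formulation to match the constant. If you did, you would have to carry the same pinning of $Z_\pi$ into the lifted matrix, since otherwise it is singular.
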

\begin{proof}[Proof Sketch]
We present a proof sketch. For details, refer to the Appendix~\ref{app:sec4_details}. 

First, we construct the matrix $F$, a diagonal matrix, such that, after multiplying $A$ (from \Cref{eq:matrix_expression}) by $F$, the entries of $FA$ are integers.
Note that the entries of $F$ are bounded by $D$, the transition complexity.

Second, we express the solution of $v F( Id- \Mt ) = F\Rew$ by using Cramer's rule.
The rule states that the value of every variable is the ratio between the determinant of $F( Id- \Mt )$ with some rows and columns omitted and the determinant of $F( Id- \Mt )$.
However, the determinant of an integer matrix (recall that $F(Id - \Mt )$ is an integer matrix) is always an integer.

Third, we provide the bounds on the determinant of $F( Id- \Mt )$ that is at most $(2D)^{|A|\cdot |S|} \cdot 2^{|S|}$.
Comparing two policies where the value vectors are expressed as fractions with denominators bounded by $(2D)^{|A|\cdot |S|} \cdot 2^{|S|}$ means that the values in one state are either the same for both states or different.
In this case, the difference is at least $\tdiff$.
\end{proof}



%% file: Section/implementation.tex
\section{Implementation and Empirical Validation}

\label{sec:implementation}

Our theoretical contributions establish that
Algorithm~\ref{alg:black_box_ltl_reachability_algorithm} achieves
guarantees significantly stronger than standard asymptotic convergence:
with probability one, beyond stage $K_{\text{opt}}$ only optimal
policies are produced. This section demonstrates that these theoretical
guarantees translate to observable convergence in practice.

We developed a Python implementation \footnote{https://github.com/amoghp214/asymptotic-ltl-reachability} with practical adjustments to
improve performance on real-world benchmarks while preserving the core
algorithmic structure (Appendix~\ref{ap:practicalmodifications}). We
evaluate on 9 standardized MDP benchmarks from the Quantitative
Verification Benchmark Set~\citep{hartmanns2019quantitative}. Each
benchmark was evaluated over 10 independent trials (36-hour time limit,
single CPU core, 1 GB memory, 2.4 GHz). The complete set of results and plots for all benchmarks are in
Appendix~\ref{ap:empirical_results}.

\subsection{Empirical Results}

Our experiments validate two key theoretical predictions.

\paragraph{Emergence of optimal-policy-only behavior.} 
Theorem~\ref{thrm:opt_kernel}
establishes that beyond stage $K_{\text{opt}}$, only optimal policies
are produced with probability $1-\delta_k$. Figure~\ref{fig:dp3_policy_acc_vs_k} demonstrates this
for the Dining Philosophers benchmark: policy accuracy (empirical
reachability probability) converges at stage $k=2$ and remains stable
thereafter. Across all benchmarks, policy accuracy converges at
\textbf{median $k=2$ and average $k=2.3$}, confirming rapid emergence of the
optimal-policy-only regime.

\paragraph{Bounds convergence versus policy optimality.}
A striking empirical observation is that optimal
policies emerge \emph{significantly earlier} than tight value bounds.
While policy accuracy stabilizes at $k=2$, bounds $L(s_0)$ and $U(s_0)$
only begin converging around $k=16$ (Figures~\ref{fig:dp3_bounds_vs_k}-\ref{fig:dp3_error_vs_k}). This shows the
algorithm finds optimal policies even when $U(s_0) - L(s_0)$ remains
non-trivial, suggesting effective value separation exceeds worst-case
$\varepsilon_{\text{diff}}$ in practice.

Figure~\ref{fig:dp3_bounds_vs_k} shows value bound convergence for Dining Philosophers
(expected value $[1,1]$): both $U(s_0)$ and $L(s_0)$ approach 1.0.
Figure~\ref{fig:dp3_error_vs_k} compares theoretical sample size $N_k$ versus practical
implementation, showing similar convergence profiles that validate our
modifications preserve essential dynamics. Low standard deviations
across trials (Figures~\ref{fig:dp3_error_vs_k}-\ref{fig:dp3_policy_acc_vs_k}) confirm robust convergence despite
stochastic sampling variation.

Note, in Figure~\ref{fig:dp3_policy_acc_vs_k}, the policy accuracy is equivalent to the reachability probability of the learned policy. A low policy accuracy does not mean the algorithm performed poorly on the benchmark. The algorithm performs well if the policy accuracy converges to the true reachability probability of the optimal policy. These values can be seen in Table~\ref{tab:benchmark_stats} in Appendix~\ref{ap:empirical_results}. For example, the policy accuracy for the Zeroconf benchmark is $0$. However, the optimal policy's reachability probability is also $0$, meaning the algorithm works as expected.

\section{Concluding Remarks and Future Work}

We present the first direct learning approach for reachability that
achieves guarantees significantly stronger than asymptotic convergence:
with probability one, beyond a finite stage, only optimal
policies are produced, where the stage is characterized through
intrinsic MDP parameters. While our
algorithm uses $p_{\min}$, the staged refinement framework
can be generalized to other PAC guarantees under alternative assumptions such
as mixing time~\citep{perez2023pac} or expected conditional
distance~\citep{svoboda2024reinforcement}, opening opportunities for further
theoretical analysis under different
structural parameters.

Our current approach is model-based. Extending this framework to
model-free methods—where policies are learned directly without explicit
model construction—represents the critical path toward practical
deployment and scalable RL for temporal specifications.

\subsection*{Acknowledgment}
Research reported in this publication was partly supported by an Amazon Research Award, Fall 2024.
J.S. and K.C. were supported by the European Research Council (ERC) CoG 863818 (ForM-SMArt) and Austrian Science Fund (FWF) 10.55776/COE12.
J.S. was supported by Dartmouth College.

\section*{Impact Statement}
This paper presents work whose goal is to advance trustworthy Machine Learning, in particular the, theory of RL under qualitative specifications.
The contributions of this work are primarily theoretical and do not warrant any immediate societal consequence, to the best of our knowledge. 
There are many potential societal consequences of our work, none of which we feel must be specifically highlighted here.

%% file: Appendix/algorithms.tex
\section{PAC Definition}
\label{ap:pac}
\begin{definition}[PAC-MDP]\label{def:pac-mdp}
A learning algorithm $\A$ is said to be PAC-MDP for $\L$ if, there is a function $h$ such that for all $p>0$, $\varepsilon>0$, and all RL tasks $(\M,\L)$ with $\M=(S,A,s_0,P)$, taking $N=h(|S|,|A|,|\L|,\frac{1}{p}, \frac{1}{\varepsilon})$, with probability at least $1-p$, we have
$$\Big|\Big\{n\mid \pi_n\notin\Pi_{\opt}^{\varepsilon}(\M,\L) \Big\}\Big|\leq N.$$
\end{definition}

We say a PAC-MDP algorithm is \emph{efficient} if the \emph{sample complexity} function $h$ is polynomial in $|S|, |A|, \frac{1}{p}$ {and} $\frac{1}{\varepsilon}$.

\subsection{Splitting $\delta_k$}
\label{ap:splitting_delta_k}

$\delta_k$ is broken up such that $\delta_k = \delta_{TP} + \delta_{EC} + \delta_{N_k}$. 

\paragraph{Probabilistic Error of Estimating Transition Probabilities ($\delta_{TP}$)}
$\delta_{TP}$ is the confidence error from estimating all the transition probability errors using the Hoeffding bounds. From this, we get the error for each transition's estimated probabilities: $$\delta_P = \frac{\delta_{TP} \cdot p_{k}}{|SA|}$$ where $|SA|$ is the number of seen state-action pairs. Formally, for each transition $(s, a, s')$, $\delta_P$ is an upper bound for the probability that the value of $\hat{P}(s, a, s') > P(s, a, s')$, meaning $\hat{P}(s, a, s')$ is not the lower bound of the true transition probability, $P(s, a, s')$. In other words, with $1-\delta_{TP}$ confidence, $\hat{P}$ is a lower bound for $P$.

\paragraph{Probabilistic Error of Detecting End Components ($\delta_{EC}$)}
To account for probabilistic error in EC detection and MDP collapsing, $\delta_{EC}$ is the confidence error for determining whether the topology of the collapsed partial MDP is equivalent to the topology of the collapsed true MDP we are sampling from. Because EC decomposition and the collapsing process itself are deterministic, the only probabilistic error rises from the topology of the partial model being incorrect. An incorrect or incomplete learned topology could result in the EC decomposition algorithm (1) missing an SCC or (2) falsely collapsing an EC candidate because an unseen transition would make a current staying action and exit action. However, if the topology of the partial MDP is correct, EC decomposition will not fail. Thus, we use $\delta_{EC}$ error to account for topological errors. For the partial MDP topology to be incorrect, either there are unseen but reachable states or unseen transitions. Because unseen reachable states cannot be seen without unseen transitions, the problem boils down to checking for unseen transitions. The maximum number of possible transitions is $\frac{|SA|}{p_{k}}$, so the number of unseen transitions will be less than this value. Therefore, we can split $\delta_{EC}$ among $\frac{|SA|}{p_{k}}$ potential unseen transitions. Doing this, we see that $$\delta_C = \frac{\delta_{EC} \cdot p_{k}}{|SA|}.$$ Now, for each potentially unseen transition, we must be $\delta_C$-sure that it does not exist. The probability a transition is unseen but exists is at most $(1-p_{k})^y$. Where $y$ is the number of samples taken for $\#(s,a)$. Hence, $$\delta_C \ge (1-p_{k})^y.$$ To be rigorous, each state-action pair must take at least $y \ge \frac{\ln(\delta_C)}{\ln(1 - p_{k})}$ samples. If an unseen transition, $(s, a, s')$ occurs, and $s' \notin S$, state $s'$ and its possible actions must be sampled $y$ times as well. This requirement is satisfied by the choice of $N_k$ as proven in Theorem~\ref{thrm:pac_kernel}. Hence, the probability error that the partial MDP is not collapsed properly is bounded by $\delta_{EC}$.


\paragraph{Probabilistic Error of Seeing each transition $s_k$ times with $N_k$ simulations ($\delta_{N_k}$)}
$\delta_{N_k}$ is defined such that we are $1-\delta_{N_k}$ confident that $N_k$ has been chosen such that the simulation phase runs $N_k$ times and every transition has been seen at least $s_k$ times. $s_k$ is defined and formalized in the proof of Theorem ~\ref{thrm:pac_kernel}. $\delta_{N_k}$ is also split up into $\frac{|SA|}{p_k}$ parts. This is because there will be at most $\frac{|SA|}{p_k}$ transitions. Hence, for each relevant transition we say that there is an $\delta_n$ chance that it has not been seen at least $s_k$ times after $N_k$ simulations. $\delta_n$ is defined as $$\delta_n = \frac{\delta_{N_k} \cdot p_k}{|SA|}.$$

\section{Hoeffding Bound}
\label{sec:hoeffding}
The two-sided Hoeffding bound~\cite{hoeffding1963probability} claims that for a random variable $X$ where each $X_i$ is bounded by $[a_i, b_i]$
\begin{align*}
    Pr[|\hat{X} - \mu| \ge \epsilon] \le 2\exp \left( \frac{-2n^2\epsilon^2}{\sum_{i=1}^n (b_i - a_i)^2} \right)
\end{align*}
where $\bar{X}$ is the empirical mean, and $\mu$ is the true mean. In the case of transition probabilities, the empirical and true mean translate to the empirical and true probability of a transition's probability of occurring. All transition probabilities are bounded by $[0, 1]$,  so $a_i=0$ and $b_i=1$ for all transitions. Replacing $\epsilon$ with $c$ (the confidence width), $\bar{X}$ with $\frac{\#(s, a, s')}{\#(s, a)}$, $\mu$ with $P(s, a, s')$, and $n$ with $\#(s, a)$. The inequality becomes
\begin{align*}
    Pr\left[ \left|\frac{\#(s, a, s')}{\#(s, a)} - P(s, a, s')\right| \ge c\right] &\le 2 \cdot \exp \left( \frac{-2\#(s, a)^2c^2}{\sum_{i=1}^{\#(s, a)} (1 - 0)^2} \right) \\
    &= 2 \cdot \exp \left( \frac{-2\#(s, a)^2c^2}{\#(s, a)} \right) \\
    &= 2 \cdot e^{-2\#(s, a)c^2}
\end{align*}
This Hoeffding bound for each transition must not be greater than $\delta_P$. Therefore, for each possible transition (i.e. each possible and valid $P(s, a, s')$ combination),
\begin{align*}
    Pr[|\frac{\#(s, a, s')}{\#(s, a)} - P(s, a, s')| \ge c] &\le \delta_P \\
    2 \cdot e^{-2\#(s, a)c^2} &\le \delta_P \\
    -2\#(s, a) \cdot c^2 &\ge \ln (\delta_P/2) \\
    c^2 &\ge \frac{\ln (\delta_P/2)}{-2 \cdot \#(s, a)} \\
    c &\ge \sqrt{\frac{\ln (\delta_P/2)}{-2 \cdot \#(s, a)}}
\end{align*}
Therefore, using a two-sided Hoeffding bound, the minimum confidence width of a transition $c$ is $c = \sqrt{\frac{\ln (\delta_P/2)}{-2 \cdot \#(s, a)}}$, such that $$Pr\left[\left|\frac{\#(s, a, s')}{\#(s, a)} - P(s, a, s')\right| \ge c\right] \le \delta_P$$

\section{Collapsed MDP}
\label{ap:collapse}
\begin{definition} [Super State]
    \label{def:super_state}
    A super state is a single state that represents an MEC. Formally, if we have an MEC $E$ that is made of state set $S_E$, super state $ss$ is such that $ss = S \cap S_E$. $S_E$ is essentially treated as a single state. All staying transitions are self-loops, $(ss, (s, a), ss)$. All non-staying transitions are normal transitions with an altered state, action format. The non-staying transition $(s, a, s')$ becomes $(ss, (s, a), s')$. $s'$ can be a state or a super state. Note, a single state and the corresponding state-action pairs can be collapsed into a super state if the state and actions form an MEC. A more rigorous explanation of all cases for transition changes can be found in Appendix~\ref{ap:collapsed_MDP_op}.
    
\end{definition}

\begin{definition} [Collapsed MDP]
    A collapsed MDP $\M_C$ is an MDP where $\M_C = (S_C, A_C, S_{C,0}, P_C)$. $S_C$ contains all states not part of MECs and all collapsed MECs in the form of super states. $A_C$ includes all actions from states not in MECs and all actions from super states. An action from a super state is defined as a state-action pair. For example, if super state $ss$, has state $s_1$ with action $a$, the super state's action is $(s_1, a)$. $s_{C,0}$ is defined as the initial state or super state of the MDP, $s_{C,0} \in S_C$. $P_C$ is defined as the transition probabilities for the collapsed MDP. For $P_C(s, a, s')$, the states $s, s' \in S_C$ and action $a \in A_C$.
\end{definition}

For BVI convergence, super states in the collapsed MDP are treated like normal states with one exception. If an action is staying, its value bounds are $[0, 0]$. Formally, 
\[
L(s, a) = \begin{cases}
    \text{0} & \text{is\_staying\_action(a) and } G \cap E = \emptyset\\
    \text{1} & \text{is\_staying\_action(a) and } G \cap E \neq \emptyset \\
    \sum_{\substack{s': \#(s,a,s') > 0}} \hat{P}(s,a,s') \cdot L(s') & \text{else}
\end{cases}
\]

\[
U(s, a) = \begin{cases}
    \text{0} & \text{is\_staying\_action(a) and } G \cap E = \emptyset\\
    \text{1} & \text{is\_staying\_action(a) and } G \cap E \neq \emptyset \\
    \left( \sum_{\substack{s': \#(s,a,s') > 0}} \hat{P}(s,a,s')\, U(s')\right) + \left(1 - \sum_{\substack{s': \#(s,a,s') > 0}} \hat{P}(s,a,s')\right) & \text{else}
\end{cases}
\]

\subsection{Collapse MDP Operation}
\label{ap:collapsed_MDP_op}
Assuming all the MECs of an MDP have been correctly identified, the algorithm must deterministically collapse the MECs into super states; see Definition~\ref{def:super_state}. This turns MDP $\M$ into MDP $\M_C$, where $\M = (S, A, S_0, P)$ and $\M_C = (S_C, A_C, S_{C,0}, P_C)$. The following transition cases are defined for a collapsed MDP's behavior~\cite{haddad2017interval}:

If a state $s$ takes action $a$ which transitions to another state $s'$, there is no difference with the topology or transition probabilities for $\M$ or $\M_C$.

If a state is a super state and it's transition leads to a normal state, then actions are rewritten to incorporate the state as well. If we look at state-action pair $(s, a)$ where $s$ is part of super state $ss$, transition $(s, a, s')$ becomes $(ss, (s, a), s')$.

If $s$ is a state and it takes action $a$ which has at least 1 transition that goes into super state $ss$, then all transitions that go into super state $ss$ from $s$ are combined into one single transition with the summed transition probability. For example, with $P(s, a, s')$ and $P(s, a, s'')$ where $s'$ and $s''$ are part of super state $ss$, $P_C(s, a, ss) = P(s, a, s') + P(s, a, s'')$.

Finally, if state $s$ is part of a super state $ss$ and has action $a$ that transitions to one or more states that are part of the super state $ss'$, the actions would incorporate the state as mentioned earlier, and the transitions would be merged into a single transition with the summed transition probability. For example, with $P(s, a, s')$ and $P(s, a, s'')$ where $s$ is part of super state $ss$, and $s'$ and $s''$ are part of super state $ss'$, $P_C(ss, (s, a), ss') = P(s, a, s') + P(s, a, s'')$.

\subsection{Collapse MDP Algorithm}
The collapse MDP algorithm is the COLLAPSE\_MECs(PM) algorithm as shown in Algorithm~\ref{alg:black_box_ltl_reachability_algorithm}. To be $\delta_{EC}$ sure that the topology of the partial MDP is correct, we ensure each state-action pair has been sampled a total of $y$ times as mentioned in Appendix~\ref{ap:splitting_delta_k}. Then, Tarjan's algorithm is run recursive (every function call, all exit actions are removed). This results in a deterministic EC decomposition. Now that the ECs are known with $1-\delta_{EC}$ confidence, the collapse MDP operation is run on each state, as mentioned in Appendix~\ref{ap:collapsed_MDP_op}

\section{Simulation Algorithms}
\label{ap:simulate}
Simulation algorithms are given in \Cref{alg:simulate,alg:looping,alg:deltaT_sure_EC}.

Please note, the $\mu$ value influences the randomness of the simulation. $\mu$ should be set such that $\mu \in \mathbb{R}(0,1]$ for the algorithm to work and for all the theoretical guarantees to hold. Intuitively, $\mu$ controls whether the explorative or exploitative nature of the algorithm dominates. A small value for $\mu$ will result in the best action heuristic being used more often while a larger $\mu$ value will result in a more explorative approach via. more random sampling.

\begin{algorithm}[h]
\caption{Simulate sampling from the true MDP}
\label{alg:simulate}
\begin{algorithmic}[1]
    \FUNCTION{\textsc{Simulate}{$(\text{MDP } M, \text{ target } G)$}}
        \STATE $X \gets \emptyset$  \COMMENT{States visited during this simulation}
        \STATE $s \gets s_0$
        \REPEAT
            \STATE $X \gets X \cup \{s\}$
            \STATE $a \gets \text{sampled uniformly with } 1-\mu \text{ probability from } \textsc{Best\_Action}(s) \text{ and with } \mu \text{ probability from all actions}$
            \STATE $s' \gets \text{sampled according to } P(s,a)$
            \STATE \textsc{Increment} $\#(s,a,s')$
            \STATE $s \gets s'$
        \UNTIL{$s \in G$ \textbf{ or } \textsc{Looping}$(X, s)$}
        \STATE \textbf{return} $X \cup \{s\}$ \COMMENT{Include last state in returned path}
    \ENDFUNCTION
\end{algorithmic}
\end{algorithm}

\begin{algorithm}[h]
\caption{Looping - Check whether we are probably looping and should stop the simulation}
\label{alg:looping}
\begin{algorithmic}[1]
\FUNCTION{\textsc{Looping}{(State set $X$, State $s$)} \COMMENT{$X$ = visited states, $s$ = current state}}
    \IF{$s \notin X$}
        \STATE \textbf{return} \textbf{false} \COMMENT{No loop possible yet}
    \ENDIF
    \STATE \textbf{return} $\exists\, C \subseteq X:\; C \text{ is an EC in partial model } \land s \in C \land \delta_C\textsc{-Sure-EC}(C)$
\ENDFUNCTION
\end{algorithmic}
\end{algorithm}

\begin{algorithm}[h]
\caption{Checks whether $C$ is an end-component with $\delta_C$ confidence}
\label{alg:deltaT_sure_EC}
\begin{algorithmic}[1]
\FUNCTION{\textsc{$\delta_C$\_Sure\_EC}{(State set $C$, min probability $p_{k}$)} }
    \STATE $\textit{required\_samples} \gets \dfrac{\ln(\delta_C)}{\ln(1 - p_{k})}$
    \STATE $B \gets \{(s,a) \mid s \in C \ \wedge\ \neg (s,a)\text{ exits }C\}$ 
    \COMMENT{Staying state-action pairs}
    \STATE \textbf{return} $\displaystyle \bigwedge_{(s,a)\in B}\big( \#(s,a) > \textit{required\_samples} \big)$ 
\ENDFUNCTION
\end{algorithmic}
\end{algorithm}

%% file: Appendix/PACproof.tex
\section{Proof of Theorem~\ref{thrm:pac_kernel}}
\label{ap:main_proof}

\subsection{\textbf{$\delta_k$ Split Guarantee Explanation for Theorem~\ref{thrm:pac_kernel}:}}
\label{ap:deltaksplit}

Putting everything together, there is at most (a) $\delta_{EC}$ confidence error of the collapsed MDP topology and EC detection being incorrect, (b) $\delta_{TP}$ confidence error of the transition probabilities being incorrect which influences $s_k$, and (c) $\delta_{N_k}$ confidence error that $N_k$ simulations does not sample every relevant transition at least $s_k$ times. This is the start to finish of the proof. It was defined in Appendix~\ref{ap:splitting_delta_k} that $\delta_k = \delta_{TP} + \delta_{EC} + \delta_{N_k}$. This means that there exists a $K_{PAC}$ s.t. for every $k  \ge K_{PAC}$, there exists an $N_k$ s.t. with confidence at least ($1 - \delta_k$), $U(s_{C,0}) - L(s_{C,0}) \le \varepsilon_k$. Note, the lower and upper bound found are actually for $s_{C,0}$ instead of $s_0$, since BVI is run on the collapsed partial MDP. For simplicity, this is left our of the proof sketch. However, it is seen in the rigorous proof of the theorem.

\subsection{Relevant State-Actions/Transitions}

\label{ap:relevant_transition}
A {\em relevant} action is an action unique to a state, $s$. It is such that $a \in Av(s)$ where $a \in best\_action(s)$ and state $s$ has reachability from $s_0$ $>$ 0 if the algorithm runs a simulation using the guiding heuristic in Algorithm~\ref{alg:simulate}. Note that the guiding heuristic utilizes $U$ from stage $k-1$ of the algorithm. Additionally, a relevant transition is any transition seen from a relevant action.



\subsection{Existence and Calculation of $s_k$}
\label{ap:ashok_lemma}

\begin{lemma}
    \label{lemma:cmdp_to_mdp}
    Let $\hat{\M} = (S, A, \hat{P}, s_0)$ be our partial MDP and $\hat{\M}_C = (S_C, A_C, \hat{P}_C, s_{C,0})$ be $\hat{\M}$ collapsed as mentioned in Algorithm~\ref{alg:black_box_ltl_reachability_algorithm}. If $k \ge K_{PAC}$, then after running BVI on $\hat{\M}_C$, if $Pr[U(s_{C,0}) - L(s_{C,0}) \le \varepsilon_k] \ge 1 - \delta_{TP}$, then $Pr[U(s_{0}) - L(s_{0}) \le \varepsilon_k] \ge 1 - (\delta_{TP} + \delta_{EC})$
\end{lemma}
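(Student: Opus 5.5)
Let $\mathcal{E}$ be the event that the collapsed partial model $\hat{\M}_C$ has the same topology as the collapse of the true MDP $\M$ restricted to the explored states. That is, every maximal end component detected in the partial model is a genuine MEC of $\M$, no exit transition of a collapsed candidate has been missed, and no reachable state is unseen. The plan is a union bound over $\mathcal{E}^c$ and the Hoeffding-failure event, together with a deterministic statement that on $\mathcal{E}$ the BVI bounds of $\hat{\M}_C$ at $s_{C,0}$ are valid bounds for $\M$ at $s_0$.

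\textbf{Step 1: bound $\Pr[\mathcal{E}^c]$ by $\delta_{EC}$.} I will follow the accounting of Appendix~\ref{ap:splitting_delta_k}. Since $k\ge K_\pac$, we have $p_k\le p_{\min}$. Hence every existing transition has probability at least $p_k$, and each state-action pair has at most $1/p_k$ successors. So there are at most $|SA|/p_k$ potential transitions. A transition that exists but was never seen after $y\ge \ln(\delta_C)/\ln(1-p_k)$ samples of its pair occurs with probability at most $(1-p_k)^y\le\delta_C=\delta_{EC}p_k/|SA|$. A union bound over all potential transitions gives $\Pr[\mathcal{E}^c]\le\delta_{EC}$. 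On $\mathcal{E}$, the EC decomposition by iterated Tarjan is deterministic, so it returns exactly the true MECs. The collapse operation of Appendix~\ref{ap:collapsed_MDP_op} then yields $\hat{\M}_C$, which has the same graph as the true collapsed MDP $\M_C$ and differs from it only in the estimated probabilities $\hat P$.

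\textbf{Step 2: transfer values from $\hat{\M}_C$ to $\M$ on $\mathcal{E}$.} I will use the standard fact from \cite{haddad2017interval} that collapsing MECs preserves optimal reachability values, $V_{\M}(s)=V_{\M_C}(ss(s))$. Its ingredients are as follows. Inside an MEC, every state can almost surely reach every other state, so every exit action is available from every state. An MEC containing a target has value $1$, which matches setting the staying actions to $[1,1]$. Staying forever in a target-free MEC has value $0$, which matches $[0,0]$. Consequently, valid bounds $L(s_{C,0})\le V_{\M_C}(s_{C,0})\le U(s_{C,0})$ lift to valid bounds at $s_0$. I will set $L(s_0):=L(s_{C,0})$ and $U(s_0):=U(s_{C,0})$ by definition of the lifted quantities, so the gap is literally unchanged. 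A policy extracted through best exit actions, as in the policy-extraction paragraph, attains in $\M$ the value it attains in $\M_C$. The hypothesis says that, with probability at least $1-\delta_{TP}$ over the Hoeffding event, $U(s_{C,0})-L(s_{C,0})\le\varepsilon_k$. This statement should be read relative to the correct collapsed topology.

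\textbf{Step 3: union bound.} We have $\Pr[U(s_0)-L(s_0)>\varepsilon_k]\le\Pr[\mathcal{E}^c]+\Pr[\mathcal{E}\wedge U(s_{C,0})-L(s_{C,0})>\varepsilon_k]\le\delta_{EC}+\delta_{TP}$, which is the claim.

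The main obstacle is Step 1, in particular the interaction with the stopping heuristic \textsc{Looping}. Simulations are cut off once a candidate EC is judged $\delta_C$-sure. I must argue that the counters $\#(s,a)$ for all staying pairs of every collapsed candidate actually reach $y$, and that a false EC cannot be collapsed while some pair is under-sampled. My first approach is to condition on the counts. Because aggregated counters are used and collapsing is performed only for candidates passing \textsc{$\delta_C$\_Sure\_EC}, each collapsed candidate's pairs have at least $y$ samples by construction. A missed exit then requires an event of probability at most $(1-p_k)^y$ per potential transition, independent of the adaptive stopping by the strong Markov property for i.i.d.\ successor draws of a fixed pair.
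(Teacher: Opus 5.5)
Your proposal is correct and follows essentially the paper's route. You bound the probability that the explored topology is wrong by $\delta_{EC}$ using the accounting of Appendix~\ref{ap:splitting_delta_k}; on that event the MEC decomposition and collapse are deterministic, so values and policies transfer between $\hat{\M}_C$ and $\M$; you then add the $\delta_{TP}$ failure by a union bound. Your i.i.d.-successor-draw argument for why the $(1-p_k)^y$ bound survives the adaptive stopping of \textsc{Looping} is extra rigor the paper omits, since it simply assumes every pair has $y$ samples. Note also that with your identification $L(s_0):=L(s_{C,0})$, $U(s_0):=U(s_{C,0})$, the inequality as literally stated already follows from the hypothesis; the real content of your Steps 1--2 is that these bounds are valid for $\M$.
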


\begin{proof}
    The algorithm runs BVI on the collapsed MDP, which is derived from the discovered partial MDP, as mentioned in Section ~\ref{subsec:core-algorithms}. Therefore, we can only provide guarantees on $U(s_{0}) - L(s_{0}) \le \varepsilon_k$ if we provide guarantees on the collapsed partial model.
    
    Note that the algorithm is $1-\delta_{EC}$ sure that the topology of the discovered MDP is correct, given that each state-action pair has been sampled at least $y$ times as mentioned in Appendix~\ref{ap:splitting_delta_k}. EC decomposition is deterministic and based on MDP topology. The algorithm has collapsed the discovered partial MDP into $\hat{\M}_C$, and with high confidence, $\hat{\M}_C$ has the same topology as the ground truth collapsed MDP, $\M_C$, where $\M_C = (S_C, A_C, s_{C,0}, P_C)$ (for all reachable states). This means the collapsed partial MDP is correct w.r.t. the collapsed version of the true MDP with $\delta_{EC}$ error. 
    
    $\hat{M}$ and $\hat{M}_C$ will have the same optimal policy. The only difference between these two MDPs is the set of states in ECs. The optimal policy for such states would be to deterministically take actions that stay in the EC until we visit the state where $best\_exit\_action(EC)$ exists. Once we visit that state, the best exit action is taken. This process is described in Section~\ref{subsec:core-algorithms}. For the collapsed MDP, once we land in a super state, we simply choose the best exit action since all the EC states are now a single state. Thus, we can optimally preserve and translate the policy of $\hat{\M}_C$ to a policy for $\hat{\M}$ with the sample reachability value. If we find a policy $\pi_C$ that is an $\varepsilon_k$-optimal policy for $\M_C$, then with probability $1-\delta_{EC}$, we can translate $\pi_C$ into policy $\pi$ which is $\varepsilon_k$-optimal for $\M$.

    A policy on $\M_C$ is optimal if and only if BVI converges such that  $U(s_{C,0}) - L(s_{C,0}) \le \varepsilon_k$. Moreover, we know the estimated transition probabilities have at most $\delta_{TP}$ error. Therefore, we say, if $Pr[U(s_{C,0}) - L(s_{C,0}) \le \varepsilon_k] \ge 1 - \delta_{TP}$, then $Pr[U(s_{0}) - L(s_{0}) \le \varepsilon_k] \ge 1 - (\delta_{TP} + \delta_{EC})$.
\end{proof}

\begin{lemma}
\label{lemma:ashok_lemma}
In Algorithm~\ref{alg:black_box_ltl_reachability_algorithm}, there exists a $K_{\pac} \in \N$ s.t. for all $k\geq K_\pac$, if all relevant transitions are sampled at least $s_k$ times, then $$\Pr[\pi_k \in \Pi^{\varepsilon_k}_\opt] \geq 1-(\delta_{TP} + \delta_{EC}).$$    
\end{lemma}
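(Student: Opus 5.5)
We prove Lemma~\ref{lemma:ashok_lemma} by reducing it to the PAC analysis of \cite{ashok2019pac} on the collapsed partial model, and then transferring the guarantee back to $\M$ via Lemma~\ref{lemma:cmdp_to_mdp}.

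\textbf{Choice of $K_\pac$.} Take $K_\pac$ minimal with $p_{K_\pac} = 2^{-K_\pac} \le p_{\min}$. Then for every $k \ge K_\pac$, $p_k$ is a valid lower bound on every nonzero transition probability. This is exactly the a-priori knowledge required by \cite{ashok2019pac}. It also validates the $\delta_C$-sure EC test: an unseen exit has probability at least $p_k$, so it is missed after $y$ samples with probability at most $(1-p_k)^y \le \delta_C$. A union bound over the at most $|SA|/p_k$ candidate transitions then gives total EC error at most $\delta_{EC}$.

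\textbf{Main step: the $\delta_{TP}$ event.} By the Hoeffding computation in Appendix~\ref{sec:hoeffding}, each transition satisfies $\hat P(s,a,s') \le P(s,a,s')$ except with probability $\delta_P$. Union bounding over at most $|SA|/p_k$ transitions, with $\delta_P = \delta_{TP} p_k/|SA|$, all lower estimates are simultaneously valid with probability at least $1-\delta_{TP}$. On this event, the BVI updates preserve $L(s) \le V(s) \le U(s)$ on the collapsed model. I then define $s_k$ as the sample count from \cite{ashok2019pac}: the number of samples per relevant transition that makes the width $c$ small enough that the gap $U(s_{C,0}) - L(s_{C,0})$, obtained after the $2^k|S|$ BVI iterations, is at most $\varepsilon_k$. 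This works because the per-step slack $1-\sum \hat P$ accumulates over relevant paths only. The argument of \cite{ashok2019pac} bounds the contribution of non-relevant transitions to the value gap using the fact that they lie off the $\arg\max U$ actions. The number of BVI iterations must be large enough for convergence of the finite-horizon bounds. I would invoke their lemma directly, with $p=p_k$ and $\varepsilon=\varepsilon_k$, so no quantitative bound needs to be rederived here.

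\textbf{Transfer to $\M$ and conclusion.} Lemma~\ref{lemma:cmdp_to_mdp} upgrades the statement on $s_{C,0}$ to one on $s_0$ with error $\delta_{TP}+\delta_{EC}$. Finally, the extracted policy $\pi_k$ picks $\arg\max U$ actions, with the best exit action inside MECs. On the good event, its true value is at least $L(s_0) \ge U(s_0)-\varepsilon_k \ge V(s_0)-\varepsilon_k$, so $\pi_k \in \Pi^{\varepsilon_k}_\opt$.

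\textbf{Main obstacle.} The delicate point is the inequality $J(\pi_k) \ge L(s_0)$. The quantity $L$ is a max over actions, while the policy follows $\arg\max U$ rather than $\arg\max L$. I would handle this by arguing that on the good event $L(s,\pi_k(s)) \ge U(s,\pi_k(s)) - \varepsilon_k$ along relevant states, which is precisely what the relevance-restricted convergence of \cite{ashok2019pac} provides.
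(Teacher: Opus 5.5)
Your outline ($K_\pac$ with $p_{K_\pac}\le p_{\min}$, the Hoeffding good event with $\delta_P=\delta_{TP}p_k/|SA|$, a gap bound at $s_{C,0}$, transfer via Lemma~\ref{lemma:cmdp_to_mdp}) matches the paper's. The gap is that the step carrying the lemma's content, the existence of $s_k$, is handed to \cite{ashok2019pac} in a relevance-restricted form that does not fit this algorithm. Here `relevant' means $\arg\max U$ for the stage-$(k-1)$ bounds that guide the simulation. But the Bellman maximum defining $V$ (the truly optimal action) and the extracted stage-$k$ policy $\pi_k(s)\in\arg\max_a U(s,a)$ may use other pairs. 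Optimism in fact steers $\pi_k$ towards under-sampled pairs, whose $U(s,a)$ is inflated by missing mass. So a guarantee phrased for a procedure that keeps resampling the current $\arg\max U$ until $U(s_0)-L(s_0)<\varepsilon$ cannot simply be imported with $p=p_k$, $\varepsilon=\varepsilon_k$ when $N_k$ and the $2^k|S|$ sweeps are fixed. The paper does not black-box this step. It unrolls the collapsed model to a horizon $r$ with $V(\M_C)-V(\M_r)\le\varepsilon_k/4$, and uses $L(s_{C,0})\ge V(\hat{\M}_r)$ once at least $r$ sweeps are made. It then proves by induction on the longest path length (Lemma~\ref{lemma:induction}) that $V_r(s)-\hat{V}_r(s)\le 2c\,(1/p_k)^i\, i$, giving the explicit $s_k\ge 32\ln(2/\delta_P)\,r^2/(\varepsilon_k^2 p_k^{2r})$. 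That induction compares values at the truly optimal action $a_1$, so it needs the width $c$ on pairs that need not be relevant. Accordingly the paper ends up requiring \emph{all} transitions to be sampled $s_k$ times, which Lemma~\ref{lemma:p_relevant_transition} supports by bounding the least likely transition overall.

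You are right that $U(s_0)-L(s_0)\le\varepsilon_k$ by itself does not give $\pi_k\in\Pi^{\varepsilon_k}_\opt$; the paper just asserts it. But your repair fails. The per-state inequality $L(s,\pi_k(s))\ge U(s,\pi_k(s))-\varepsilon_k$ does not chain, because $L(s')=\max_b L(s',b)$ may be attained by an action that $\pi_k$ never plays. For instance, take a long chain $s_1,\dots,s_n$ ending in a state of value above $\beta$. Give each $s_t$ an action $b$ reaching the goal with well-estimated probability $\beta$. Also give it an action $a$ that moves to $s_{t+1}$ with $\hat P(s_t,a,s_{t+1})=x$, with missing mass $m$ slightly above $\beta(1-x)$ that in truth belongs to an edge into a losing sink. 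Then $U(s_t,a)$ settles just above $\beta$, so $\pi_k$ plays $a$ everywhere. Moreover $U(s_t)-L(s_t)\approx 0$ and the local slack is only about $m$ (which can be made below $\varepsilon_k$). Yet the value of $\pi_k$ at $s_1$ tends to $0$ with the chain length, while $V(s_1)\ge\beta$. The quantity to control is $U(s_0)-L^{\pi_k}(s_0)$, where $L^{\pi_k}$ is the lower bound computed along $\pi_k$'s own actions, so that $J(\pi_k)\ge L^{\pi_k}(s_0)$ on the good event. Up to BVI truncation, this is the missing mass $1-\sum_{s'}\hat P(s,\pi_k(s),s')$ accumulated along $\pi_k$'s runs. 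It is at most $\varepsilon_k$ only if every pair $\pi_k$ may choose has width far below $\varepsilon_k$, by a factor of the expected run length. That is again a condition on all pairs, not only on the stage-$(k-1)$ relevant ones.
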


\begin{proof}
Note, this proof of calculating $s_k$ was inspired from ~\cite{ashok2019pac}.

Let the integer $K_\pac$ be such that $p_{K_{\pac}} = \dfrac{1}{2^{K_\pac}} \leq p_{min}$ where  $p_{min}$ is the minimum non-zero probability in the ground truth MDP $\M$. Note that even though the value $p_{min}$ is unknown, such a $K_\pac$ will certainly exist. 


A policy $\pi$ is $\varepsilon_k$-optimal if $U(s_0) - L(s_0) \le \varepsilon_k$. This is because $L(s_0) \le V(s_0) \le U(s_0)$. Any value taken within the range of the lower and upper bound of the initial state is guaranteed to be within $\varepsilon_k$ of $V(s_0)$. 

We aim to prove that there exists a $K_{PAC}$ s.t. for all $k \ge K_{PAC}$, there exists an $N_k$ s.t. $Pr[U(s_0) - L(s_0) \le \varepsilon_k] \ge 1 - \delta_{k}$.

Since Algorithm~\ref{alg:black_box_ltl_reachability_algorithm} collapses the partial MDP to handle ECs, we can use Lemma~\ref{lemma:cmdp_to_mdp} and say that by proving BVI finds an $\varepsilon_k$-optimal policy for the collapsed MDP, it has found an $\varepsilon_k$-optimal policy for the uncollapsed MDP as well, with $\delta_{EC}$ more probabilistic error. Formally, if $k \ge K_{PAC}$, then after running BVI on $\hat{\M}_C$, if $Pr[U(s_{C,0}) - L(s_{C,0}) \le \varepsilon_k] \ge 1 - \delta_{TP}$, then $Pr[U(s_{0}) - L(s_{0}) \le \varepsilon_k] \ge 1 - (\delta_{TP} + \delta_{EC})$. 

Now, it will be proven that for the collapsed discovered MDP, BVI converges. Proving BVI convergence for the collapsed discovered MDP proves the convergence of the uncollapsed MDP since no transitions are lost, and collapsing the MDP is actually the correct way of deflating $U$. Therefore, it must be proven that $Pr[U(s_{C,0}) - L(s_{C,0}) \le \varepsilon_k] \ge 1 - \delta_{TP}$ where $s_{C,0}$ is the initial state of the collapsed discovered MDP. Because we are at least $1-\delta_{EC}$ confident that the collapsed discovered MDP's topology is equal to that of $\M_C$, it can be said that the initial states for both collapsed MDPs are $s_{C,0}$ with at least $1-\delta_{EC}$ confidence.

We can prove this lemma if the lower bound of the collapsed discovered MDP's initial state can be bounded such that $V(s_{C,0}) - L(s_{C,0}) \le \frac{\varepsilon_k}{2}$ since this can be done for the upper bound, $U(s_0)$, as well. The logic remains the same. 
\begin{align*}
    V(s_{C,0}) - L(s_{C,0}) &\le \frac{\varepsilon_k}{2} \\
    U(s_{C,0}) - V(s_{C,0}) &\le \frac{\varepsilon_k}{2} \\
    U(s_{C,0}) - L(s_{C,0}) &\le \varepsilon_k
\end{align*}

This proof sums error along MDP paths. Thus, the proof requires the MDP to be acyclical.

Let $\M_r$ be the unrolled version of MDP $\M_C$ with a step counter that allows for a maximum of $r$ steps. Formally, $\M_r$ has the state space $S_C \times [r]$, where $[r]$ is the set of all natural numbers in the range $[0, r]$. Actions and transitions work the same, except they increment the step counter. Transition $(s, a, s')$ becomes $((s, r), a, (s', r+1))$. The step counter makes $\M_r$ acyclical since it is not possible to go back to a smaller $r$. Additionally, the depth of the MDP is $r$ steps. This means that any transition going to a state where the value of the step counter is greater than $r$ leads to a sink state, $ss$ with $V(ss) = 0$.

Let $V(\M_C)$ denote the true value from the initial state of the true, collapsed MDP. Formally, $V(\M_C) = V(s_{C,0})$. For an unrolled MDP, the reachability objective is satisfied when a target/goal state is reached in the state component of the MDP. It can be said that $V(\M_r) \le V(\M_C)$. This is because the reachability for $\M_r$ is an r-horizon task, i.e. a task that can be done with at most $r$ steps. However, $V(\M_C)$ is an infinite horizon task as there is no limit to the number of steps the agent can take. Therefore, the paths that reach the goal for $\M_C$ are the paths that reach the goal for $\M_r$ plus the paths of length $(r + c)$ where $c \in \N^+$. The paths shorter than $r$ step have the same reachability probability for both $\M_C$ and $\M_r$. Hence, $V(\M_r) \le V(\M_C)$.

It can also be shown that $\lim_{r \to \infty}V(\M_r) = V(\M_C)$. As $r$ increases, the number of paths that satisfy the reachability objective either stays equal or increases. As $r \to \infty$, the reachability objective turns into one for an infinite-horizon task, meaning all possible paths that satisfy the reachability objective will be accounted for. Thus, $\lim_{r \to \infty}V(\M_r) = V(\M_C)$.

Let $V(\hat{\M_r})$ be the value of $s_{C,0}$ in $\M_r$ if the estimated transition probabilities, $\hat{P_C}$, were used instead of $P_C$. All remaining probabilities will be directed towards a sink state of value 0. $V(\hat{\M_C})$ is the limit of BVI for the partial MDP constructed according to~\cite{ashok2019pac}. By this definition $V(\hat{\M_C})$ is the limit of the best lower bound, $L(s_{C,0})$, which BVI can obtain using $\hat{P}$. Let the number of BVI iterations be at least $r$ (which can be assumed since the number of BVI iterations increases more than linearly with $k$). This means that the lower bound calculated from BVI is $L(s_{C,0}) \ge V(\hat{\M_r})$, given the number of BVI iterations is at least $r$ according to~\cite{ashok2019pac}.

The objective is to prove that with $1 - \delta_{k}$ confidence, $$V(s_{C,0}) - L(s_{C,0}) \le \frac{\varepsilon_{k}}{2}$$ Assume an $r$ has been chosen such that 
$$V(\M_C) - V(\M_r) \le \frac{\varepsilon_{k}}{4}$$
Such an $r$ is guaranteed to exist as $\lim_{r \to \infty}V(\M_r) = V(\M_C)$. The explanation above states that $L(s_{C,0}) \ge V(\hat{\M_r})$. Therefore, the condition can be rewritten.
\begin{align*}
    V(s_{C,0}) - L(s_{C,0}) &\le \frac{\varepsilon_{k}}{2} \\
    V(s_{C,0}) - L(s_{C,0}) &\le  V(s_{C,0}) - V(\hat{\M_r}) \\
            &= V(\M_C) - V(\hat{\M_r}) \\
            &= V(\M_C) - V(\M_r) \\ 
            &\qquad+ V(\M_r) - V(\hat{\M_r}) \\
            &\le \frac{\varepsilon_{k}}{4} + V(\M_r) - V(\hat{\M_r})
\end{align*}

It is shown that
\begin{align*}
    \frac{\varepsilon_{k}}{4} + V(\M_r) - V(\hat{\M_r}) &\le \frac{\varepsilon_{k}}{2} \\
    V(\M_r) - V(\hat{\M_r}) &\le \frac{\varepsilon_{k}}{4}
\end{align*}

Therefore, to prove convergence, it must be proved that $V(\M_r) - V(\hat{\M_r}) \le \frac{\varepsilon_{k}}{4}$.
In other words, the error of using $\hat{P_C}$ instead of $P_C$ should be bounded by $\frac{\varepsilon_{k}}{4}$.

Given Lemma~\ref{lemma:induction}, we see that for all $i \in \mathbb{Z}$ where $i \ge 0$, $P: V_r(s) - \hat{V_r}(s) \le 2c \cdot \left( \frac{1}{p_k}\right)^{i} \cdot i$, where $V_r(s)$ and $\hat{V_r}(s)$ are the values of a state $s \in S_C \times [r]$ for MDPs $\M_r$ and $\hat{\M_r}$ respectively. Note, $i$ denotes the longest path length from state $s$ to an accepting state or sink state.

The longest path in the unrolled MDP, $\M_r$ is $r$ steps since each transition in $\M_r$ increments the step counter by $1$, and after $r$ steps, we reach a terminal state. Therefore, for any state $s \in S$,  $$V_r(s) - \hat{V_r}(s) \le 2c \cdot \left( \frac{1}{p_k} \right)^{r} \cdot r$$

Now, the confidence width $c$ must be found such that the error of using $\hat{P_C}$ instead of $P_C$ is less than $\frac{\varepsilon_{k}}{4}$. To do this,
\begin{align*}
    V_r(s) - \hat{V_r}(s) &\le 2c \cdot \left( \frac{1}{p_k} \right)^{r} \cdot r \\ 
    2c \cdot \left( \frac{1}{p_k} \right)^{r} \cdot r &\le \frac{\varepsilon_{k}}{4} \\
\end{align*}
The confidence width, $c$, is calculated using the two-sided Hoeffding bound as mentioned in Section ~\ref{subsec:core-algorithms}
$$c = \sqrt{\frac{\ln (\delta_P/2)}{-2 \cdot \#(s,a)}}$$
Substituting results in
\begin{align*}
    2 \cdot\sqrt{\frac{\ln (\delta_P/2)}{-2 \cdot \#(s,a)}} \cdot \left( \frac{1}{p_k} \right)^{r} \cdot r &\le \frac{\varepsilon_{k}}{4}
\end{align*}
where $\#(s,a)$ is the number of samples require for each state-action pair. We define $s_k$ (mentioned earlier in the proof) as $s_k = \#(s,a)$. Rewriting,
$$s_k \ge \frac{32 \cdot ln(2/\delta_P) \cdot r^2}{\varepsilon_{k}^2 \cdot p_k^{2 \cdot r}}$$
If every relevant transitions has been sampled at least $s_k$ times, the necessary confidence width would be achieved for the algorithm to converge to a maximum error of $\varepsilon_{k}$. This is because, sampling every relevant transition at least $s_k$ times means 
\[
    V(\M_r) - V(\hat{\M_r}) \le \frac{\varepsilon_{k}}{4}\,.
\]
Since it is assumed that $$V(\M_C) - V(\M_r) \le \frac{\varepsilon_{k}}{4}$$ it can be said that $$V(s_0) - L(s_0) \le \frac{\varepsilon_{k}}{2}$$ The same argument can be made for the upper bound. Henceforth, if each relevant state-action (or each relevant transition if we take an upper bound) has been seen at least $s_k$ times, $U(s_{C,0}) - L(s_{C,0}) \le \varepsilon_{k}$.

Note, the induction argument is satisfied for relevant actions in a state because when induction is done backwards from a target or sink state all the way to $s_0$, we are only looking at actions that we would select (as mentioned by the definition and choice of $a_1$ and $a_2$ in the setup of the induction). We are creating an optimal policy. The optimal policy is created by sampling the best actions. The optimal policy will be only the best actions from each state. Therefore, the set of all relevant actions is equal to the set of all actual actions that could be taken. Hence, if we sample each relevant transition $s_k$ times and have the error bound for seeing each relevant transition, it will mean the error is bound for all actions used in the policy. Thus we need to make sure the error bounded by each state-action pair seen in this optimal policy is withing $\varepsilon_k$. As a result, we only care about relevant transitions.

To ensure there are no issues with the MDP's topology, each transition (not just relevant transitions) must be sampled at least $y$ times where $y \ge \frac{\ln(\delta_C)}{\ln(1 - p_{k})}$ as mentioned in Appendix~\ref{ap:splitting_delta_k}. Since $y << s_k$, we can say that \textit{all} transitions must be sampled at least $s_k$ times for us to guarantee that $U(s_{C,0}) - L(s_{C,0}) \le \varepsilon_{k}$ with high confidence.

Additionally, by requiring each transition to be seen at least $s_k$ times, we can account for the rare case when the \textit{relevant} actions seen are not the \textit{true} relevant actions. By not restricting the sampling requirement to only relevant actions, we ensure the algorithm adjusts and corrects itself in the future, regardless of previous stage outputs.

In this proof, the only probabilistic errors possible to prove that $U(s_{C,0}) - L(s_{C,0}) \le \varepsilon_{k}$ with $s_k$ samples were from collapsing ECs and estimating transition probabilities. These two parts have confidence errors $\delta_{EC}$ and $\delta_{TP}$ as defined in Appendix~\ref{ap:splitting_delta_k}. Therefore, we see that if each relevant transition is sampled at least $s_k$ times, $\Pr[\pi_k \in \Pi^{\varepsilon_k}_\opt] \geq 1-(\delta_{TP} + \delta_{EC})$.
\end{proof}

\begin{lemma}
\label{lemma:induction}
    For all $i \in \mathbb{Z}$ where $i \ge 0$, $V_r(s) - \hat{V_r}(s) \le 2c \cdot \left( \frac{1}{p_k}\right)^{i} \cdot i$. We define $V_r(s)$ and $\hat{V_r}(s)$ as the value of a state $s \in S_C \times [r]$ for MDPs $\M_r$ and $\hat{\M_r}$ respectively. Note, $i$ denotes the longest path length from state $s$ to an accepting state or sink state.
\end{lemma}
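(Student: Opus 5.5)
We will prove Lemma~\ref{lemma:induction} by induction on $i$, the length of the longest path from $s$ (in the acyclic unrolled MDP $\M_r$) to an accepting state or to the sink. Acyclicity of $\M_r$ makes $i$ well defined and finite (at most $r$). It also guarantees that every successor of $s$ has a strictly smaller longest-path length, which is what makes the induction work.

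\emph{Base case $i=0$.} Here $s$ is either an accepting state or the sink. In both $\M_r$ and $\hat{\M_r}$ the value is fixed: $1$ for accepting states and $0$ for the sink. Hence $V_r(s)-\hat{V_r}(s)=0\le 0$.

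\emph{Inductive step.} Fix $s$ with longest-path length $i\ge 1$, and assume the bound holds for all states of length at most $i-1$. Let $a_1$ be an optimal action at $s$ in $\M_r$, so that $V_r(s)=\sum_{s'}P(s,a_1,s')V_r(s')$. Since $\hat{V_r}(s)\ge \sum_{s'}\hat P(s,a_1,s')\hat{V_r}(s')$, we get
\[
V_r(s)-\hat{V_r}(s)\le \sum_{s'}\bigl(P(s,a_1,s')-\hat P(s,a_1,s')\bigr)V_r(s') + \sum_{s'}\hat P(s,a_1,s')\bigl(V_r(s')-\hat{V_r}(s')\bigr).
\]
We bound the two sums separately.

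\emph{First sum.} On the Hoeffding event (probability at least $1-\delta_{TP}$), we have $0\le P-\hat P\le 2c$ for each transition: $\hat P$ lies below $P$ by at most the width $c$ plus the clipping at $0$. Since $V_r\le 1$ and, for $k\ge K_\pac$, $(s,a_1)$ has at most $1/p_k$ successors, the first sum is at most $2c/p_k$.

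\emph{Second sum.} Each successor $s'$ has longest-path length at most $i-1$. By the induction hypothesis its error is at most $2c(1/p_k)^{i-1}(i-1)$. Since $\sum_{s'}\hat P\le 1$, the second sum is bounded by the same quantity.

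\emph{Combining.} The total is at most $2c/p_k+2c(1/p_k)^{i-1}(i-1)\le 2c(1/p_k)^{i}+2c(1/p_k)^{i}(i-1)=2c(1/p_k)^{i}\, i$, using $1/p_k\ge 1$. This closes the induction.

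The main obstacle is the first sum. One must argue that the number of successors is at most $1/p_k$, which holds because $p_k\le p_{\min}$ when $k\ge K_\pac$. One must also argue that all the relevant transitions of $a_1$ are seen, so that the missing mass is correctly charged to the sink: unseen successors contribute $P(s,a_1,s')\le$ the missing mass, which is controlled on the event that every relevant transition was sampled $s_k$ times. If bounding the missing mass by $c$ per transition fails, the fallback is to absorb the unseen mass into the $2c$ per-transition allowance, using the uniform bound $V_r\le1$.
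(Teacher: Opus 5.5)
Your proof is correct, but its inductive step is organized differently from the paper's. The paper bounds $\sum_{s'}\bigl[P(s,a_1,s')V_r(s')-\hat P(s,a_1,s')\hat{V_r}(s')\bigr]$ by $\frac{1}{p_k}$ times its largest summand. It then bounds that summand by $P(s,a_1,s'')\bigl(V_r(s'')-\hat{V_r}(s'')\bigr)+2c\,\hat{V_r}(s'')$ and discards $P(s,a_1,s'')\le 1$. As a result, the inherited error is multiplied by the branching factor $1/p_k$ at every level, and this is the source of the $(1/p_k)^i$ in the statement.

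You instead split the difference into a fresh estimation term and a propagated term, and you control the latter with $\sum_{s'}\hat P(s,a_1,s')\le 1$. The inherited error is therefore never amplified. Your recursion $E_i\le 2c/p_k+E_{i-1}$ in fact gives $V_r(s)-\hat{V_r}(s)\le 2c\,i/p_k$, which you weaken only at the end to match the lemma. Used at full strength in Lemma~\ref{lemma:ashok_lemma}, it would reduce the per-pair sample requirement $s_k$ from order $r^2/(\varepsilon_k^2p_k^{2r})$ to order $r^2/(\varepsilon_k^2p_k^{2})$, in both cases times $\ln(2/\delta_P)$. The paper's route has no compensating advantage. It uses the same two ingredients as yours, namely a per-transition gap of at most $2c$ and at most $1/p_k$ successors. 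The difference is that it also applies the successor count to the propagated error.

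The hedge in your last paragraph is unnecessary. The Hoeffding event is taken over all (at most $1/p_k$) true successors of each pair; this is how $\delta_{TP}$ is split in Appendix~\ref{ap:splitting_delta_k}. For an unseen successor $s'$, the event gives $P(s,a_1,s')<c$ while $\hat P(s,a_1,s')=0$. So $0\le P-\hat P\le 2c$ holds there too, and the mass sent to the sink is already covered by your first sum.

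What you and the paper genuinely need is that the single width $c$ applies to $(s,a_1)$, where $a_1$ is optimal in $\M_r$. That is, this pair must have been sampled at least $s_k$ times. Since $a_1$ need not be a relevant action of the partial model, this is why the paper ultimately requires every transition, not just the relevant ones, to be sampled $s_k$ times.
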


\begin{proof}
    The following induction argument proves that this inequality will be satisfied. Intuitively, we bound the approximation error in estimating the value of a state. Let $V_r(s)$ and $\hat{V_r}(s)$ be the value of a state $s \in S_C \times [r]$ for MDPs $\M_r$ and $\hat{\M_r}$ respectively. The proposition is as follows. For every state $s \in S_C \times [r]$,
$$P: V_r(s) - \hat{V_r}(s) \le 2c \cdot \left( \frac{1}{p_k}\right)^{i} \cdot i$$
where $i \ge 0$, and $i$ denotes the longest path length from state $s$ to an accepting state or sink state. Because $\M_r$ is acyclic and unrolled to a maximum of $r$ steps, $i \le r$. Therefore, by proving, this inequality for all $s \in S_C \times [r]$, it inherently proves that 
\begin{align*}
    V_r(s_{C,0}) - \hat{V_r}(s_{C,0}) \le 2c \cdot \left( \frac{1}{p_k}\right)^{r} \cdot r \\
    V(\M_r) - V(\hat{\M_r}) \le 2c \cdot \left( \frac{1}{p_k}\right)^{r} \cdot r
\end{align*}
meaning, $V(\M_r) - V(\hat{\M_r})$ is bounded, and with the right confidence width $c$, can be less than $\frac{\varepsilon_{k}}{4}$.

Continuing the induction:

Base case: let $i = 0$. The subset of states with longest path of 0 steps are sink states and goal states. For all these states,
\begin{align*}
    V_r(s) - \hat{V_r}(s) &\le 2c \cdot \left( \frac{1}{p_k}\right)^{0} \cdot 0 \\
    V_r(s) - \hat{V_r}(s) &\le 0
\end{align*}
This is accuracy as all goal states and sink states have values $1$ and $0$ respectively for $\M_r$ and $\hat{\M_r}$. The value difference is $0$ because no transitions occur on these paths, meaning $P_C$ and $\hat{P_C}$ have note been used.

Inductive Hypothesis: for a state $s \in S_C \times [r]$ that reaches a goal or sink state with longest path of length at most $i$ steps, $$V_r(s) - \hat{V_r}(s) \le 2c \cdot \left( \frac{1}{p_k}\right)^{i} \cdot i$$

Induction Step: for any state $s \in S_C \times [r]$ that reaches a goal or a sink state with longest path of length at most $i+1$, let $a_1$ be the best possible (highest value) action from $s$ that maximizes $V_r(s)$. Let $a_2$ be the best possible action from $s$ that maximizes $\hat{V_r}(s)$. More formally, $a_1 \in argmax_{a_1 \in Av(s)} V_r(s, a_1)$ and $a_2 \in argmax_{a_2 \in Av(s)} \hat{V_r}(s, a_2)$, where $V_r(s, a)$ and $\hat{V_r}(s, a)$ are the values of state-action pairs in $\M_r$ and $\hat{\M_r}$. From this,
\begin{align*}
    V_r(s) - \hat{V_r}(s) &= V_r(s, a_1) - \hat{V_r}(s, a_2) \\
    &\le V_r(s, a_1) - \hat{V_r}(s, a_1) \\
    &= \sum_{s'} \left[ P(s, a_1, s') V_r(s') - \hat{P}(s, a_1, s') \hat{V_r}(s') \right]
\end{align*}
Because there are at most $\frac{1}{p_k}$ transitions per state-action pair, $s''$ can be defined as the state that maximizes $P(s, a_1, s') V_r(s') - \hat{P}(s, a_1, s') \hat{V_r}(s')$. Thus, the expression can be rewritten as
\begin{align*}
    &\le \frac{1}{p_k} \left[ P(s, a_1, s'') V_r(s'') - \hat{P}(s, a_1, s'') \hat{V_r}(s'') \right]
\end{align*}
Now, if it is shown that $$P(s, a_1, s'') V_r(s'') - \hat{P}(s, a_1, s'') \hat{V_r}(s'') \le 2c \left(\frac{1}{p_k}\right)^i (i+1)$$ the induction will be complete as 
\begin{align*}
    &\le \frac{1}{p_k} \left[ P(s, a_1, s'') V_r(s'') - \hat{P}(s, a_1, s'') \hat{V_r}(s'') \right] \\
    &\le \frac{1}{p_k} \cdot 2c \left(\frac{1}{p_k}\right)^i (i+1) \\
    &= 2c \left(\frac{1}{p_k}\right)^{i+1} (i+1)
\end{align*}
Note that $s''$ is a successor of $s$. Because the longest path length from $s$ to a goal or sink state is at most $i+1$, the longest path from $s''$ to a goal or sink state must be at most $i$. Therefore, $V_r(s'') - \hat{V_r}(s'')$ can be used as the inductive hypothesis. 

Before continuing the proof of the induction step, it is known that with the two-sided Hoeffding bound on the transition probabilities, the algorithm is $1-\delta_{TP}$ confident that for all transitions, $$\left| P(s, a, s') - \frac{\#(s, a, s')}{\#(s, a)} \right| \ge c$$ Considering that $$\hat{P}(s, a, s') = \frac{\#(s, a, s')}{\#(s, a)} - c$$ with at most $\delta_{TP}$ confidence error, it can be assumed that for all $\hat{P}(s, a, s')$, $$P(s, a, s') - 2c \le \hat{P}(s, a, s') \le P(s, a, s')$$

Now, the induction is continued by proving the following:
\begin{align*}
    &= P(s, a_1, s'') V_r(s'') - \hat{P}(s, a_1, s'') \hat{V_r}(s'') \\
    &\le P(s, a_1, s'') V_r(s'') - (P(s, a_1, s'') - 2c) \hat{V_r}(s'') \\
    &= P(s, a_1, s'') \cdot (V_r(s'') - \hat{V_r}(s'')) + 2c \cdot \hat{V_r}(s'') \\
    &\le P(s, a_1, s'') \cdot 2c \cdot \left(\frac{1}{p_k}\right)^i \cdot i + 2c \cdot \hat{V_r}(s'') \\
    &= 2c \cdot \left[ \left(\frac{1}{p_k}\right)^i \cdot i \cdot P(s, a_1, s'') + \hat{V_r}(s'') \right] \\
    &\le 2c \cdot \left[ \left(\frac{1}{p_k}\right)^i \cdot i \cdot 1 + \hat{V_r}(s'') \right] \\
    &\le 2c \cdot \left[ \left(\frac{1}{p_k}\right)^i \cdot i \cdot 1 + 1 \right] \\
    &\le 2c \cdot \left(\frac{1}{p_k}\right)^i \cdot (i + 1) \\
\end{align*}
Therefore, it is shown that 
\begin{align*}
    &\le \frac{1}{p_k} \left[ P(s, a_1, s'') V_r(s'') - \hat{P}(s, a_1, s'') \hat{V_r}(s'') \right] \\
    &= \frac{1}{p_k} \cdot 2c \left(\frac{1}{p_k}\right)^i (i+1) \\
    &= 2c \left(\frac{1}{p_k}\right)^{i+1} (i+1)
\end{align*}
Hence, it has been proven that for all $s \in S_C \times [r]$ and for all $i \ge 0$

\begin{align*}
    V_r(s) - \hat{V_r}(s) &\le 2c \cdot \left( \frac{1}{p_k} \right)^i \cdot i \\
\end{align*} 
\end{proof}

\subsection{Calculating the Probability of Seeing the Least Likely Relevant Transition}
\label{ap:p_relevant_transition}
\begin{lemma}
\label{lemma:p_relevant_transition}

    The lower bound for the probability of the least likely transition occurring in a single simulation is $$p \ge \left( \frac{\mu \cdot p_k}{|A|} \right)^{|S|}$$.
\end{lemma}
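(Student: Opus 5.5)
We argue along a single shortest path. Fix an arbitrary transition $(s,a,s')$ with $P(s,a,s')>0$ whose source $s$ is reachable from $s_0$; unreachable transitions are never relevant and can be ignored. The goal is to exhibit one concrete finite run of a single simulation that ends with this transition, and to lower-bound its probability under the sampling rule of Algorithm~\ref{alg:simulate}. The probability of the transition occurring in a simulation is at least the probability of this particular run.

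\textbf{Step 1 (per-step action probability).} In Algorithm~\ref{alg:simulate}, at every state $t$ the chosen action equals any fixed $b\in Av(t)$ with probability at least $\mu/|Av(t)|\ge \mu/|A|$. This holds because with probability $\mu$ the action is drawn uniformly from all available actions. It holds regardless of what $\textsc{Best\_Action}$ returns in the current stage, so the bound does not depend on earlier stages being correct.

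\textbf{Step 2 (per-step transition probability).} Every nonzero transition probability is at least $p_{\min}$. For $k\ge K_\pac$ we have $p_{\min}\ge p_k$, which is the regime in which Lemma~\ref{lemma:ashok_lemma} and Theorem~\ref{thrm:pac_kernel} are applied, so each nonzero transition probability is at least $p_k$.

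\textbf{Step 3 (shortest path).} Since $s$ is reachable, choose a shortest path $s_0=t_0 \xrightarrow{b_0} t_1 \cdots t_{m}=s$ in the graph of $\M$, and append the step $s\xrightarrow{a} s'$. A shortest path visits distinct states, so $m\le |S|-1$, and the extended run takes at most $|S|$ steps. By Steps 1 and 2, each step, consisting of choosing the prescribed action and then taking the prescribed successor, has probability at least $\mu p_k/|A|$. Multiplying over at most $|S|$ steps and using $\mu p_k/|A|\le 1$ gives the bound $\left(\mu p_k/|A|\right)^{|S|}$.

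\textbf{Main obstacle: early termination.} The simulation must not stop before the final transition is taken. Because the path states $t_0,\dots,t_m$ are distinct, the condition $s\notin X$ in Algorithm~\ref{alg:looping} makes $\textsc{Looping}$ return false at every intermediate step, so it cannot cut the run short. The run would also stop early if some intermediate $t_j$ lay in $G$. In that case the transition $(s,a,s')$ is never executed by a simulation at all, since simulations halt at targets. We therefore restrict the notion of ``transition'' to those whose source is reachable without first passing through $G$, and take the shortest path inside $S\setminus G$. This restriction is harmless, because BVI fixes the values of target states and never uses their outgoing transitions. With this restriction the bound applies to every transition that a simulation can produce, and in particular to the least likely relevant one.
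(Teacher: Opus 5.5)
Your proposal is correct and takes essentially the same route as the paper. Both reach the source state along a path of at most $|S|-1$ steps, each step contributing at least $\mu p_k/|A|$ (a uniformly random action with probability $\mu$, then a successor of probability at least $p_k$), and then pay one more such factor for the transition itself. Your explicit use of $k\ge K_{\pac}$ to get $p_{\min}\ge p_k$, and your handling of early termination, make precise two points the paper's proof leaves implicit; on termination, distinct path states keep \textsc{Looping} false, and restricting to sources reachable while avoiding $G$ excludes transitions that are never sampled.
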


\begin{proof}
$p$ is defined as the probability that the least likely transition occurs in a single simulation of the algorithm. Because of $\mu$ (defined in Appendix~\ref{ap:simulate}), there is a non-zero chance that the algorithm lands on state $s$ of any (relevant and/or non-relevant) action, and there is a non-zero chance that the action $a$ is sampled. Therefore, for every action, there is a non-zero chance that the action is seen. Because a transition is any seen transition from an action, it is implied that transitions also have a non-zero probability of being seen during a simulation.

Now we know that $p > 0$. To calculate the minimum value bound of $p$, we consider the least likely transition. Such a transition would require the algorithm to go through at most $|S|$ states during the simulation to get to state $s$. State $s$ is where this action can be run. In the simulation, going from one state to another requires either sampling the best action or random sampling, based on $\mu$ as mentioned in Appendix~\ref{ap:simulate}. In the worst case, the action is not one of the best actions to take, and we need the simulation algorithm to randomly sample the action. there will be at most $|A|$  actions from the current state; only one of those actions has a transition to the next desired state, and that transition is with minimum probability $p_k$. Therefore, going from one state to another in the simulation will have a minimum probability of $\frac{\mu \cdot p_k}{|A|}$. Because we need to traverse through at most $|S|$ states, we will take at most $|S|-1$ steps. Thus, we can get to the state where the least likely transition can occur with probability $\ge \left(\frac{\mu \cdot p_k}{|A|}\right)^{|S|-1}$. Finally, the probability of seeing the relevant transition is at least $\frac{\mu \cdot p_k}{|A|}$. 

Therefore, the minimum probability of seeing the least likely relevant transition is $$p \ge \left( \frac{\mu \cdot p_k}{|A|} \right)^{|S|}.$$
\end{proof}

\subsection{Existence of $N_k$}
\label{ap:existence_of_N_k}
\begin{lemma}
\label{lemma:existence_of_N_k}
    Given that each transition must be seen at least $s_k$ times, and given that the least likely transition is seen with probability $p \ge \left( \frac{\mu \cdot p_k}{|A|}\right)^{|S|}$ in a single simulation, there exists an $N_k$ s.t. the probability of not seeing each transition at least $s_k$ times is bounded by at most $\delta_{N_k}$ as defined in Appendix~\ref{ap:splitting_delta_k}. 
\end{lemma}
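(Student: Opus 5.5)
We fix a single transition $(s,a,s')$ and then take a union bound over all transitions. By Lemma~\ref{lemma:p_relevant_transition}, each simulation sees this transition with probability at least $p \ge q_k := \left(\frac{\mu p_k}{|A|}\right)^{|S|}$, and $q_k$ is positive and independent of the history.

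\emph{Step 1: reduce to a binomial variable.} The simulations are not independent, since the best-action heuristic changes as counters accumulate. We can still compare with a binomial: the lower bound $q_k$ holds conditionally on the past, for every realisation of the partial model. So the count $X$ of occurrences of $(s,a,s')$ over $N_k$ simulations stochastically dominates $Y \sim \mathsf{Binomial}(N_k, q_k)$. This follows from a standard coupling, or from a supermartingale argument on the indicator sequence. Hence
\[
\Pr[X \le s_k - 1] \le \Pr[Y \le s_k - 1] = \sum_{i=0}^{s_k-1}\binom{N_k}{i} q_k^i (1-q_k)^{N_k-i}.
\]
Without the conditional lower bound we cannot treat the simulations as i.i.d., so this domination step is what makes the argument legitimate, and it is the main obstacle.

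\emph{Step 2: show the tail can be made small.} We need $\Pr[Y \le s_k-1] \le \delta_n = \frac{\delta_{N_k} p_k}{|SA|}$. For $N_k q_k \ge 2 s_k$, a multiplicative Chernoff bound gives
\[
\Pr[Y \le s_k - 1] \le \Pr\!\left[Y \le \tfrac{1}{2}\,\mathbb{E} Y\right] \le \exp\!\left(-\tfrac{N_k q_k}{8}\right).
\]
It therefore suffices to take
\[
N_k \ge \max\left\{\frac{2 s_k}{q_k},\ \frac{8}{q_k}\ln\frac{|SA|}{\delta_{N_k} p_k}\right\}.
\]
This is finite because $s_k$, $q_k$, $p_k$ and $\delta_{N_k}$ are all fixed and positive at stage $k$. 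Alternatively, each term $\binom{N_k}{i} q_k^i (1-q_k)^{N_k-i} \le N_k^{s_k}(1-q_k)^{N_k-s_k}$, which tends to $0$ as $N_k\to\infty$, and the tail is a finite sum of such terms; this already gives existence.

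\emph{Step 3: union bound.} As argued in Appendix~\ref{ap:splitting_delta_k}, there are at most $|SA|/p_k$ transitions, since each state-action pair has at most $1/p_k$ successors once $k \ge K_\pac$. A union bound over them gives total failure probability at most $\frac{|SA|}{p_k}\cdot\delta_n = \delta_{N_k}$, which proves the lemma.

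A minor point remains: $|SA|$ counts only the seen pairs, so it is random. We replace it by the deterministic bound $|S||A|$, which only increases $N_k$ and keeps the choice of $N_k$ independent of the sampling outcome.
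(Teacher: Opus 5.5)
Your proposal is correct and follows the same route as the paper. Like the paper, you bound the per-transition binomial tail $\sum_{i=0}^{s_k-1}\binom{N_k}{i}p^i(1-p)^{N_k-i}$ by $\delta_n=\delta_{N_k}p_k/|SA|$ for large enough $N_k$, then take a union bound over at most $|SA|/p_k$ transitions. Your extra steps tighten what the paper only asserts when it declares $X\sim B(N_k,p)$: the stochastic-domination comparison with $\mathsf{Binomial}(N_k,q_k)$, valid because the per-simulation probability is only a history-conditional lower bound; the explicit Chernoff choice of $N_k$; and replacing the random $|SA|$ by the deterministic bound $|S||A|$.
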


\begin{proof}
We assume that $p = \left( \frac{\mu \cdot p_k}{|A|}\right)^{|S|}$ is the minimum probability with which a relevant transition can be done. Let $X$ be a random variable that counts the number of times the least likely transition has occurred. $X$ follows a Binomial Distribution with parameters $(N_k, p)$, meaning $X \sim B(N_k, p)$. The cumulative distribution function (CDF) of $X$ is $F(X \le s_k-1) = \sum_{i=0}^{s_k-1}\binom{N_k}{i} \cdot p^i \cdot (1 - p)^{N_k - i}$. This CDF gives the probability of $N_k$ simulations not choosing the least likely relevant transition at least $s_k$ times. It would be ideal for $F(X \ge s_k) = 1$. Unfortunately, this is impossible as there will always be some probability that fewer than $s_k$ samples are seen: $F(X < s_k) > 0$. However, the algorithm can choose a large $N_k$ such that $F(X < s_k) \le \delta_{n}$ (which would be possible with a large enough $N_k$, i.e. there exists an $N_k$ s.t. $F(X < s_k) \le \delta_{n}$ since $\lim_{N_k \to \infty}F(X < s_k) = 0$). Note, $\delta_n$ is the upper bound for the error that a relevant transition was not seen $s_k$ times. $\delta_n$ has been defined in Appendix~\ref{ap:splitting_delta_k} as $\delta_n = \frac{\delta_{N_k} \cdot p_k}{|SA|}$, where $|SA|$ is the number of state-action pairs discovered. $\delta_{N_k}$ is essentially partitioned into several $\delta_n$ errors that are consumed by each relevant transtion. Since the number of relevant transitions is no greater than the number of possible transitions, $\frac{|SA|}{p_k}$, the confidence error of not seeing a single relevant transition at least $s_k$ times is bounded by $\delta_{N_k}$. Hence, with $\delta_{N_k}$ confidence error, it can be said that given $s_k$ is the minimum number of times to see the least likely relevant transition, running $N_k$ simulations will run each relevant transition at least $s_k$ times.
\end{proof}

\subsection{Entire Proof of Theorem~\ref{thrm:pac_kernel}}
\label{ap:full_main_proof}
\paragraph{Theorem~\ref{thrm:pac_kernel}}        In Algorithm~\ref{alg:black_box_ltl_reachability_algorithm}, there exists a $K_{\pac} \in \N$ s.t. for all $k\geq K_\pac$   there exists a number of simulations $N_k$ s.t. $$\Pr[\pi_k \in \Pi^{\varepsilon_k}_\opt] \geq 1-\delta_k.$$
    
    I.e. policy $\pi_k$ is $\varepsilon_k$-optimal in MDP $\M$ with confidence $1-\delta_k$.   

\begin{proof}
By Lemma~\ref{lemma:ashok_lemma}, in Algorithm 1, there exists a $K_{\pac} \in \N$ s.t. for all $k\geq K_\pac$, all transitions must be sampled at least $s_k$ times to guarantee $\Pr[\pi_k \in \Pi^{\varepsilon_k}_\opt] \geq 1-(\delta_{TP} + \delta_{EC}).$

By Lemma~\ref{lemma:p_relevant_transition}, there exists a minimum probability for the least likely transition, $p$. Lemma~\ref{lemma:p_relevant_transition} also states that $p \ge \left( \frac{\mu \cdot p_k}{|A|} \right)^{|S|}$.

We now can assume that each transition must be seen at least $s_k$ times and that the least likely transition is seen with probability $p \ge \left( \frac{\mu \cdot p_k}{|A|} \right)^{|S|}$. Therefore, by Lemma~\ref{lemma:existence_of_N_k}, there exists an $N_k$ s.t. the probability of not seeing each relevant transition at least $s_k$ times is bounded by at most $\delta_{N_k}$.

The above statement proves that there exists an $N_k$ for an $s_k$, given $\delta_{N_k}$. This, along with Lemma 1, indicates that in Algorithm~\ref{alg:black_box_ltl_reachability_algorithm}, there exists a $K_{\pac} \in \N$ s.t. for all $k\geq K_\pac$   there exists a number of simulations $N_k$ s.t. $$\Pr[\pi_k \in \Pi^{\varepsilon_k}_\opt] \geq 1-(\delta_{TP} + \delta_{EC}) - \delta_{N_k}.$$

Because $\delta_k = \delta_{EC} + \delta_{TP} + \delta_{N_k}$ (as shown in Appendix~\ref{ap:splitting_delta_k}), we hence prove that in Algorithm~\ref{alg:black_box_ltl_reachability_algorithm}, there exists a $K_{\pac} \in \N$ s.t. for all $k\geq K_\pac$   there exists a number of simulations $N_k$ s.t. $$\Pr[\pi_k \in \Pi^{\varepsilon_k}_\opt] \geq 1-(\delta_k).$$

I.e. policy $\pi_k$ is $\varepsilon_k$-optimal in MDP $\M$ with confidence $1-\delta_k$.   

Note, it must be true that $k \ge K_{PAC}$. Otherwise, our estimate of the minimum transition probability, $p_k$, would be greater than the true $p_{min}$. This would prevent the assumptions in the proof from being correct.
\end{proof}

%% file: Appendix/difference_proof.tex
\section{Technical details for bound on $\eps_{\mathsf{diff}}$}\label{app:sec4_details}

In this section, we provide the missing proofs and technical details from \Cref{sec:boud_on_gap}.

\paragraph{Matrix expression.}
Here, we motivate the formulation of \Cref{eq:matrix_expression}.
In a known environment $\M$, given a memoryless deterministic policy $\pi$, we get a value $v_s$ for a state $s$.
In reachability for all transient states $s \in S \setminus G$, we have
\[
    v_s = \sum_{s'\in S} P(s,\pi(s),s')v_{s'}
\]
and for $s \in G$, we have $v_s = w_s$, where $w_s$ is the general reward (usually $1$ or $0$).
To argue about the exact values, we expand the equation and express the values of choosing state-action pairs
\[
    v_{s,a} = \sum_{s'\in S} P(s,a,s')v_{s'}\,,
\]
then we express the values of states under a deterministic policy as
\[
    v_{s} = v_{s,\pi(s)}\,.
\]
Observe that all policies can be encoded in such a way.
Two policies only differ in the choice of $v_{s}$.

Note that the solution to \Cref{eq:matrix_expression} might not be unique; the value of reachability is the smallest solution to the equation.
However, we suppose the MDP $\M$ is \emph{halting}: all policies eventually reach some $s \in G$.
Halting MDPs have only a unique solution to the equation $v = v\Mt + \Rew$.

\paragraph{Transition complexity.}
Here, we describe the transition complexity in more detail.
The transition complexity $D_i$ of a matrix row is a number such that multiplying all entries in row $i$ by $D_i$ gives integers.
For $s \in S$ in the row $\Mt_{s,\cdot}$, we have $D_s = 1$, since all entries are either $0$ or $1$.
For $s\in S$ and $a\in A$, for the row $\Mt_{(s,a),\cdot}$, we express $P_{a}(s,s')$ for $a\in A$ and $s,s' \in S$ as a fraction 
\[
    P(s,a, s') = \frac{p_{(s,a),s'}}{q_{(s,a),s'}}
\]
where $p_{(s,a),s'}, q_{(s,a),s'} \in \mathbb{Z}$.
We define
\[
    D_{s,a} = \lcm_{s' \in S}(q_{(s,a),s'})\,, 
\]
where $\lcm$ is the lowest common multiple of inputs.
Then, the transition complexity of the whole matrix is defined as the maximum over all rows. In our case, it is
\[
    D = \max_{(s,a)\in S\times A}\left(D_{s,a}\right)\,.
\]

\begin{proof}[Proof of \Cref{lem:minimal_difference}]
    We rewrite the equation $v = v\Mt + \Rew$ as
    \[
        v\left(Id - \Mt\right) = \Rew\,,
    \]
    where $Id$ is an identity matrix.
    We define a diagonal matrix $F$ indexed by $S$ and $S\times A$ such that $F_{s,s} = 1$ and $F_{(s,a),(s,a)} = D_{s,a}$.
    Then, in the matrix $F\left(Id - \Mt\right)$, the corresponding row with index $(s, a)$ is multiplied by its transition complexity $D_{s, a}$.
    That means the entries of the matrix $F\left(Id - \Mt\right)$ are all integers.

    Now, we argue about the solution of
    \[
        v F(Id-\Mt) = F\Rew\,.
    \]

    To solve the matrix equation, we can use Cramer's rule to get an explicit expression of the solution.
    In our situation, Cramer's rule states that 
    \[
        v_i = \frac{\det(B_i)}{\det(F(Id-\Mt))}\,,
    \]
    where $B_i$ be the matrix created from $F(Id-\Mt)$ by replacing its column with index $i$ by the vector $F\Rew$.
    Moreover, all entries of the matrix $B_i$ are integers since $\Rew$ is an integer vector, $F$ is an integer diagonal matrix, and $F(Id-\Mt)$ is an integer matrix.
    Observe that the determinant of an integer matrix is an integer since the determinant only sums products of different permutations.

    Since $\det(B_i)$ is always an integer, we bound the value of $\det(F(Id-\Mt))$.
    From multiplicativity of determinant, we have $\det(F(Id-\Mt)) = \det(F)\cdot \det(Id-\Mt)$.
    
    We bound $\det(F)$, but the determinant of a diagonal matrix is only the product of all entries.
    Therefore, we have
    \[
        \det(F) = \left(\prod_{s\in S} D_s \right)\cdot \left(\prod_{(s,a) \in S\times A} D_{s,a} \right) \le D^{|S|\cdot |A|}\,,
    \]
    since we have $D_s = 1$ for $s \in S$ and $D_{s,a} \le D$ for $(s,a) \in S\times A$.

    Now, we bound the value of $\det(Id-\Mt)$.
    First, since $A$ is square matrix, we have $\det(Id-\Mt) = \det(Id-\Mt^T)$.
    Then from \cite{horn2012matrix}[Prop. 5.6.P10] we have that
    \[
        \det(Id-\Mt^T) \le \prod_{i \in S \cup (S\times A)} ||(Id - \Mt)_{i,\cdot}||_1\,.
    \]
    That means we can upper-bound the determinant by the product of sums 
    of absolute values in each row.

    But $\Mt$ is a matrix that describes a stochastic process, which means the sum of absolute values over rows is at most $1$.
    For the identity matrix, we also have that the absolute value of the sum of rows is $1$.
    From the triangle inequality, we have the absolute value of a row of a matrix $Id-\Mt$ is at most $2$.

    Since there is $|S| + |S|\cdot |A|$ rows, we have the determinant of the matrix $Id-\Mt$ is at most $2^{|S| + |S|\cdot|A|}$.
    That means we have
    \[
      \det(F(Id-\Mt)) \le \left(2D\right)^{|A|\cdot|S|}\cdot 2^{|S|}\,.
    \]

    So, for any given policy $\pi$, we know that all the reachability values can be expressed as a fraction
    \[
        \frac{p}{q}\,,
    \]
    where $p,q \in \mathbb{Z}$ and $q$ is a fixed number for all $s$ (for fixed policy) bounded by $\diff$.

    Observe that immediately implies that the difference between values of two states under one policy is either $0$ or at least $\diff$.

    Now, we examine the reachability value of state $s$ under policy $\pi$ and $\pi'$.
    We have that $v_s$ can be expressed as $\frac{p}{q}$ and $\frac{p'}{q'}$ for $p,p',q,q' \in \mathbb{Z}$ for policy $\pi$ and $\pi'$ respectively.
    Moreover, we have $q,q' \le \diff$.
    Then, we can express the difference as
    \[
        \frac{p}{q}-\frac{p'}{q'} = \frac{pq'-p'q}{qq'}\,.
    \]
    This value is either $0$ or at least $\frac{1}{qq'}$, which is at least
    $1/\left(\diff\right)^2$.

    For two policies $\pi,\pi'$, the value of state $s$ is either $0$ or differs by at least
    \[
        \tdiff\,.
    \]
\end{proof}

%% file: Appendix/practicalmodifications.tex
\section{Practical Modifications}
\label{ap:practicalmodifications}

\textbf{Choosing $N_k$:}
The theoretical value of $N_k$ is astronomically large and cannot be used. Therefore, we use a different value for $N_k$. In the implementation, we have $N_k$ grow quadratically with respect to $k$ and linearly with respect to the number of states in the partial discovered MDP.

This, however, leads to another complication. The $\delta_c$-sure EC algorithm works simply by checking if each potential EC has the right number of samples for the given confidence level. With the implementation's chosen $N_k$, the number of simulations per main-loop iteration grow slower (quadratically) than the number of samples required for $\delta_c$-sure EC detection (super-exponential). Therefore, after a certain iteration of the main loop, the simulation phase of the algorithm will not be able to sample the actions in the EC candidate enough times for the EC to be detected. This will cause BVI to run on a semi-collapsed version of the learned MDP which would not result in the optimal policy. To avoid this optimality loss, we make another change to the algorithm. We do not only check if the required number of samples are taken for EC candidates, but we also run extra samples if they are needed. This way, actual ECs will always be detected.

\textbf{Utilizing Heuristic for is\_looping Criteria:}
The theoretical algorithm's is\_looping check runs SCC detection recursively on the staying MDPs found. For large automata and MDPs, this can be very expensive. To quicken this process, we utilize a heuristic. If the number of states seen in the run does not change within $|S|^2$ samples, it is likely that the simulation is stuck. Therefore, is\_looping returns true and the algorithm exits the simulation run. Because we run millions of simulation iterations, it is extremely likely in practice that we will find all relevant transitions.


\textbf{Choosing Number of BVI Iterations:}
The number of BVI iterations, $n_{BVI}$, should grow with $|S|$ so that all states can propagate their value bounds to the initial state. Through inspection, we found that choosing an $n_{BVI}$ that grows linearly with $|S|$ and $k$ works well. The BVI run converges while also not wasting computing power.

\textbf{Convergence Check:}
In practice, we want the algorithm to exit rather than run infinitely. Therefore, we have a convergence check. Given the same $\delta_k$, $p_{min}$, and $n_{BVI}$ as the previous iteration, BVI is run on the current iteration. If the error of the initial state's bounds ($U(s_0) - L(s_0)$) is less than that of the previous iteration by more than an input threshold, the algorithm has not converged and continues. Otherwise, it stops. We also added a minimum number of iterations to run the algorithm's main loop since the error of some experiments only improves after a certain number of samples have been seen.

\section{Empirical Results}
\label{ap:empirical_results}
\textbf{MDP PRISM Benchmark Results Table:} The tabular results from the 9 benchmarks the algorithm was tested on can be found in Table~\ref{tab:benchmark_stats}

\begin{table}[t]
    \centering
    \caption{Benchmark Performance and Policy Convergence Statistics}
    \label{tab:benchmark_stats}
    \small 
    \setlength{\tabcolsep}{2pt} 
    \begin{tabular*}{\textwidth}{@{\extracolsep{\fill}} l rrc c cccc @{}}
        \toprule
        \multicolumn{3}{c}{\textbf{Description}} & \multicolumn{2}{c}{\textbf{Exploration}} & \multicolumn{4}{c}{\textbf{Policy Performance}} \\
        \cmidrule(lr){1-3} \cmidrule(lr){4-5} \cmidrule(lr){6-9}
        Benchmark Name & 
        \begin{tabular}[c]{@{}c@{}}\# of\\ States\end{tabular} & 
        \begin{tabular}[c]{@{}c@{}}\# of\\ Trans.\end{tabular} & 
        \begin{tabular}[c]{@{}c@{}}\% States \\ Seen\end{tabular} & 
        \begin{tabular}[c]{@{}c@{}}\% Trans. \\ Seen\end{tabular} & 
        \begin{tabular}[c]{@{}c@{}}Exp.\\ Acc.\end{tabular} & 
        \begin{tabular}[c]{@{}c@{}}Obs.\\ Acc.\end{tabular} & 
        \begin{tabular}[c]{@{}c@{}}Stages\\ Conv.\end{tabular} & 
        \begin{tabular}[c]{@{}c@{}}Samples to\\ Conv.\end{tabular} \\
        \midrule
        Consensus 2 & 272 & 492  & 100.0  & 98.4 & 1.00 & 1.00 & 2 & 397,467 \\
        CSMA 2-2 & 1038 & 1282 & 93.1 & 92.5 & 0.50 & 0.49 & 2 & 569,282,328 \\
        Firewire Abst & 623 & 750 & 100.0 & 99.9 & 1.00 & 1.00 & 2 & 3,377,329 \\
        IJ 3 & 7 & 21 & 100.0 & 71.4 & 1.00 & 1.00 & 1 & 55 \\
        IJ 10 & 1023 & 8960 & 100.0 & 99.8 & 1.00 & 1.00 & 2 & 822,547 \\
        Pacman & 498 & 620 & 47.2 & 48.4 & 0.55 & 0.54 & 7 & 9,950,147 \\
        Philosophers MDP 3 & 956 & 3696 & 46.0 & 35.4 & 1.00 & 1.00 & 2 & 35,487 \\
        Rabin 3 & 27766 & 137802 & 3.9 & 2.0 & 1.00 & 1.00 & 2 & 453,868 \\
        Zeroconf & 670 & 997 & 14.2 & 12.9 & 0.00 & 0.00 & 1 & 11,732 \\
        \bottomrule
    \end{tabular*}
\end{table}

\textbf{MDP PRISM Benchmark Plots:} 
Figures~\ref{fig:consensus_2_main} to~\ref{fig:zeroconf_main} detail the plots for all 9 benchmarks that we tested our algorithm on.

\begin{figure*}[h]
    \centering
    \begin{subfigure}{0.32\textwidth}
        \centering
        \includegraphics[width=\textwidth]{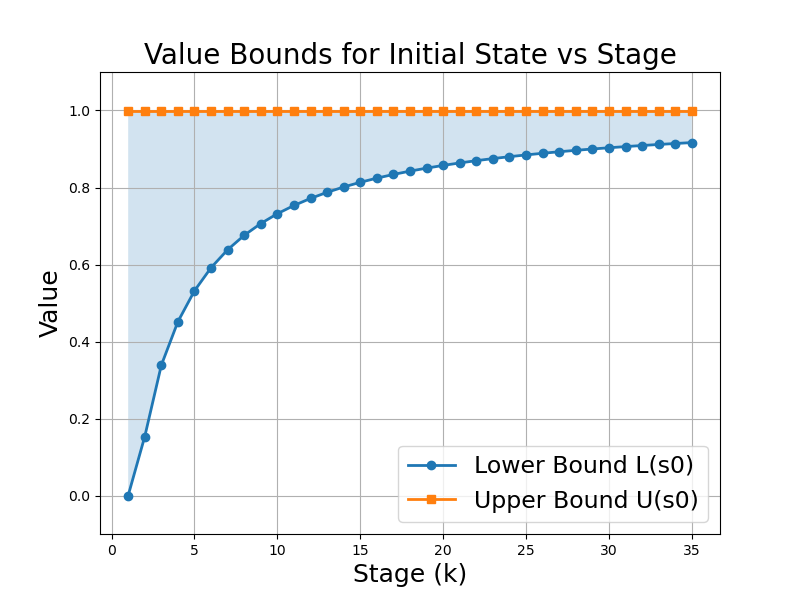}
        \caption{Value Bounds vs. Stage $k$}
        \label{fig:consensus2_bounds_vs_k}
    \end{subfigure}
    \hfill
    \begin{subfigure}{0.32\textwidth}
        \centering
        \includegraphics[width=\textwidth]{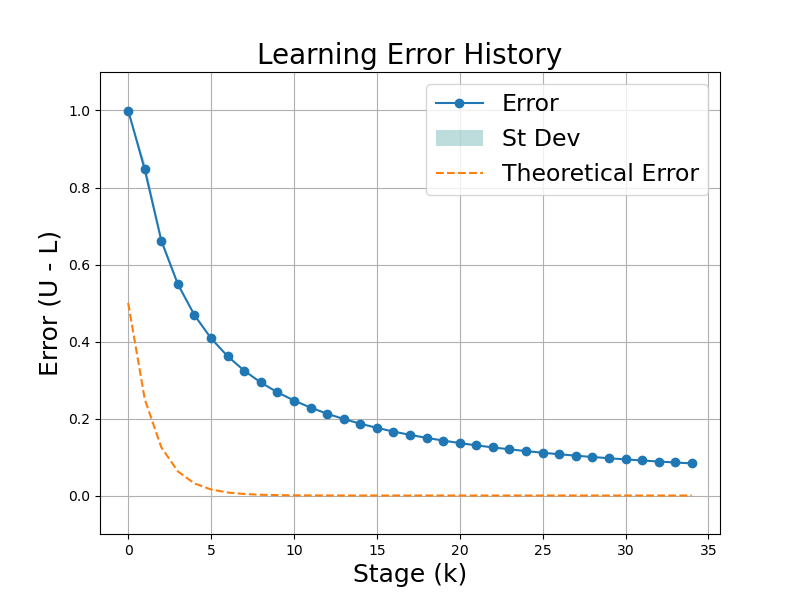}
        \caption{Error vs. Stage $k$}
        \label{fig:consensus_2_error_vs_k}
    \end{subfigure}
    \hfill
    \begin{subfigure}{0.32\textwidth}
        \centering
        \includegraphics[width=\textwidth]{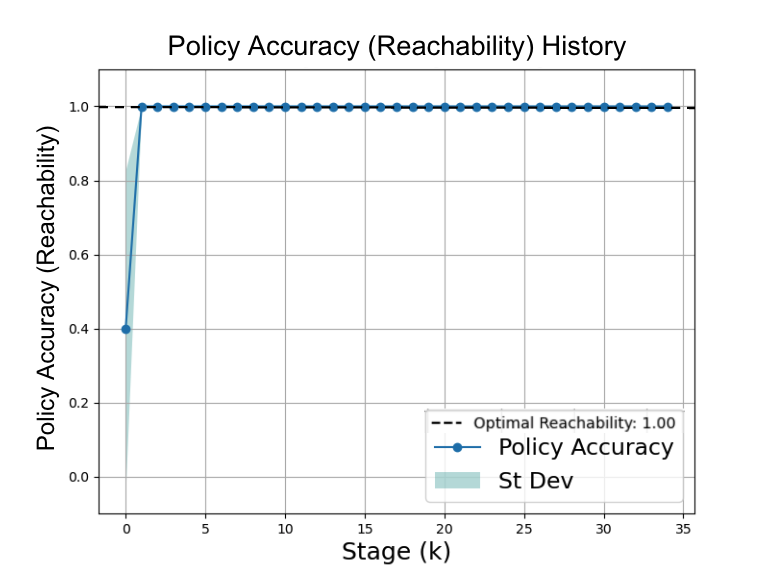}
        \caption{Policy Accuracy vs. Stage $k$}
        \label{fig:consensus_2_policy_acc_vs_k}
    \end{subfigure}

    \vspace{1em}
    \caption{Performance on the Consensus 2 benchmark. The values plotted are the median values over 10 trials. Optimal Policy Reachability Value: 1.00.}
    \label{fig:consensus_2_main}
\end{figure*}

\begin{figure*}[h]
    \centering
    \begin{subfigure}{0.32\textwidth}
        \centering
        \includegraphics[width=\textwidth]{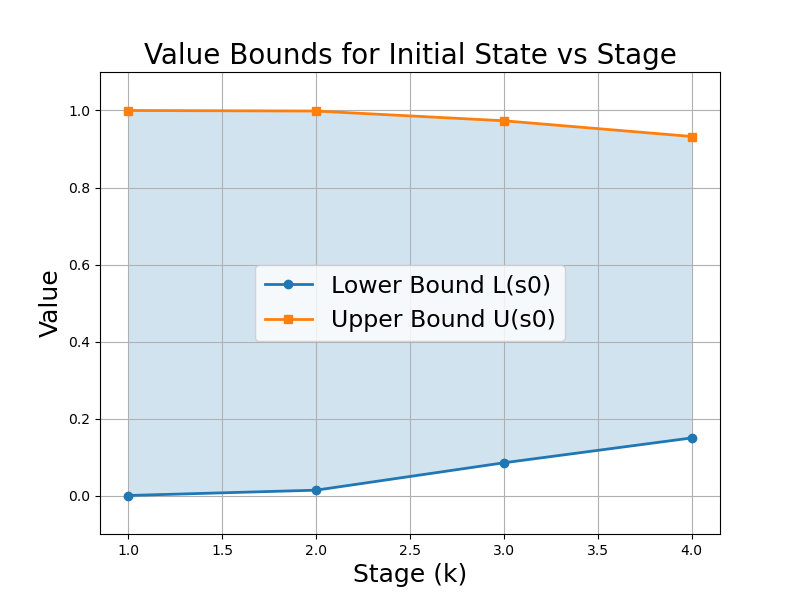}
        \caption{Value Bounds vs. Stage $k$}
        \label{fig:csma_2-2_bounds_vs_k}
    \end{subfigure}
    \hfill
    \begin{subfigure}{0.32\textwidth}
        \centering
        \includegraphics[width=\textwidth]{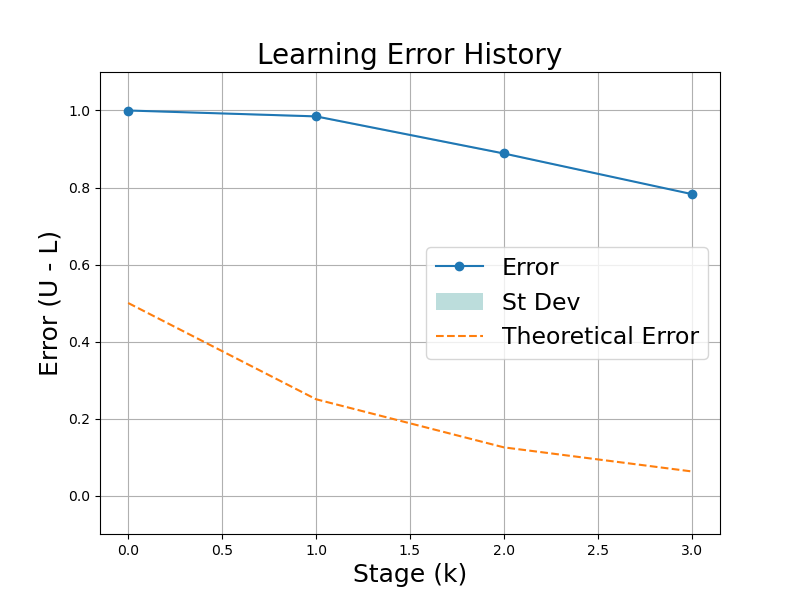}
        \caption{Error vs. Stage $k$}
        \label{fig:csma_2-2_error_vs_k}
    \end{subfigure}
    \hfill
    \begin{subfigure}{0.32\textwidth}
        \centering
        \includegraphics[width=\textwidth]{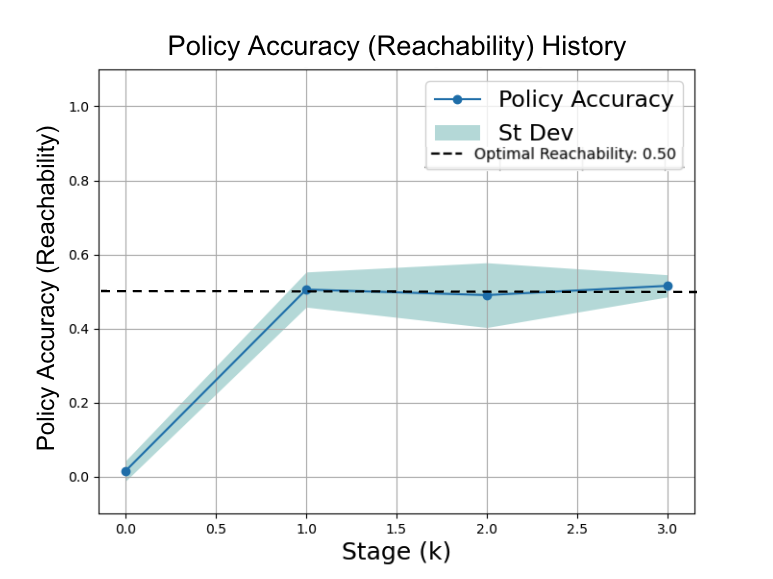}
        \caption{Policy Accuracy vs. Stage $k$}
        \label{fig:csma_2-2_policy_acc_vs_k}
    \end{subfigure}

    \vspace{1em}
    \caption{Performance on the CSMA 2-2 benchmark. The values plotted are the median values over 10 trials. Optimal Policy Reachability Value: 0.50.}
    \label{fig:csma_2-2_main}
\end{figure*}

\begin{figure*}[h]
    \centering
    \begin{subfigure}{0.32\textwidth}
        \centering
        \includegraphics[width=\textwidth]{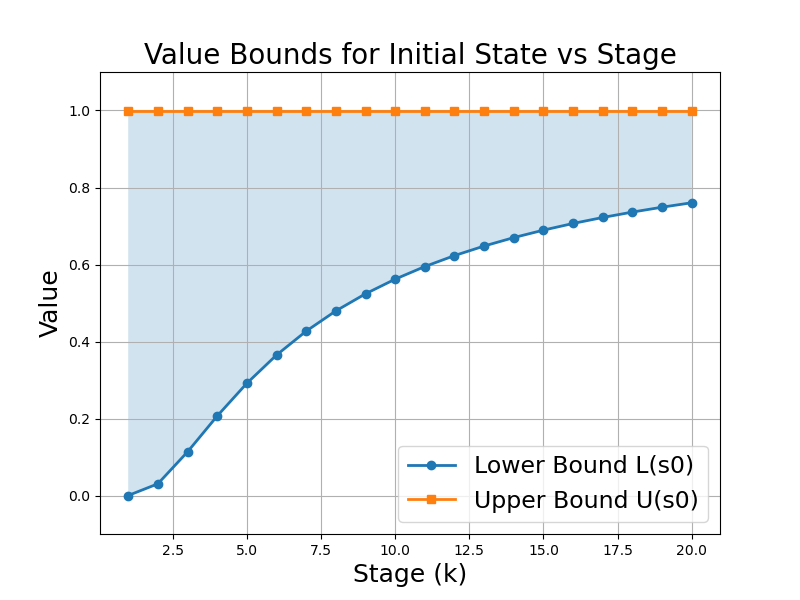}
        \caption{Value Bounds vs. Stage $k$}
        \label{fig:firewire_abst_bounds_vs_k}
    \end{subfigure}
    \hfill
    \begin{subfigure}{0.32\textwidth}
        \centering
        \includegraphics[width=\textwidth]{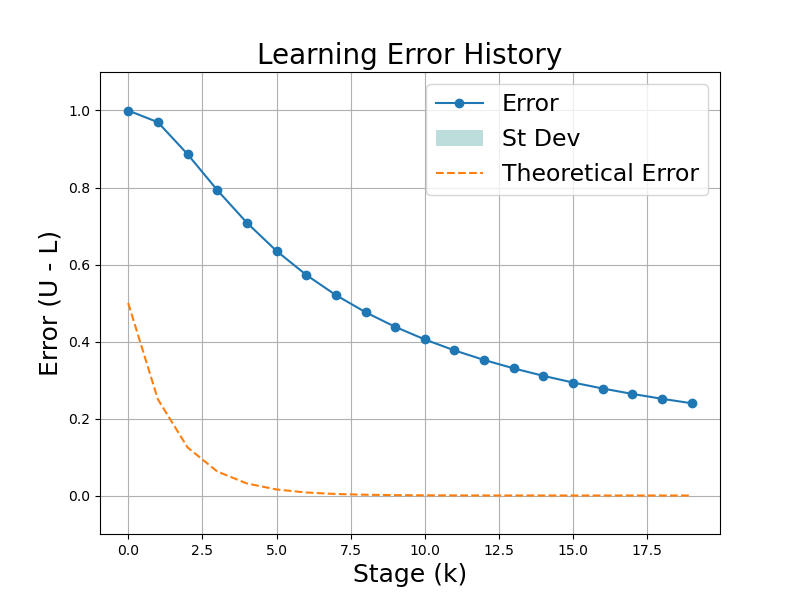}
        \caption{Error vs. Stage $k$}
        \label{fig:firewire_abst_error_vs_k}
    \end{subfigure}
    \hfill
    \begin{subfigure}{0.32\textwidth}
        \centering
        \includegraphics[width=\textwidth]{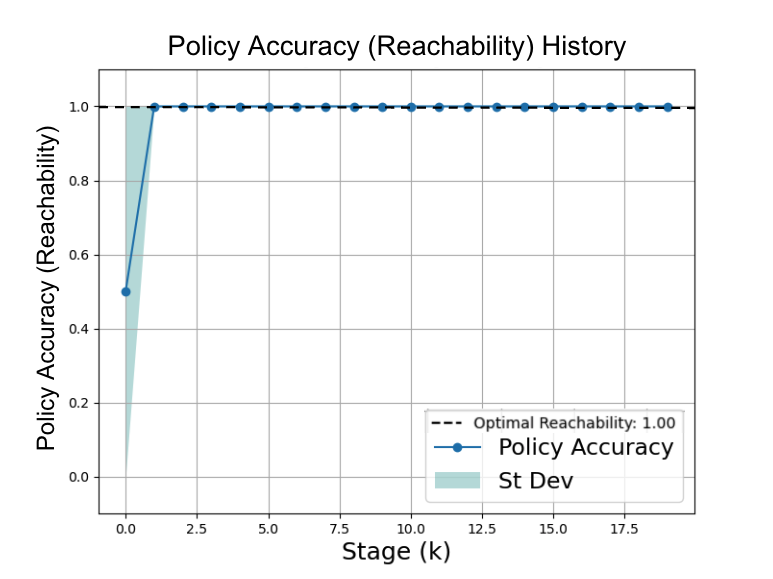}
        \caption{Policy Accuracy vs. Stage $k$}
        \label{fig:firewire_abst_policy_acc_vs_k}
    \end{subfigure}

    \vspace{1em}
    \caption{Performance on the Firewire Abst benchmark. The values plotted are the median values over 10 trials. Optimal Policy Reachability Value: 1.00.}
    \label{fig:firewire_abst_main}
\end{figure*}

\begin{figure*}[h]
    \centering
    \begin{subfigure}{0.32\textwidth}
        \centering
        \includegraphics[width=\textwidth]{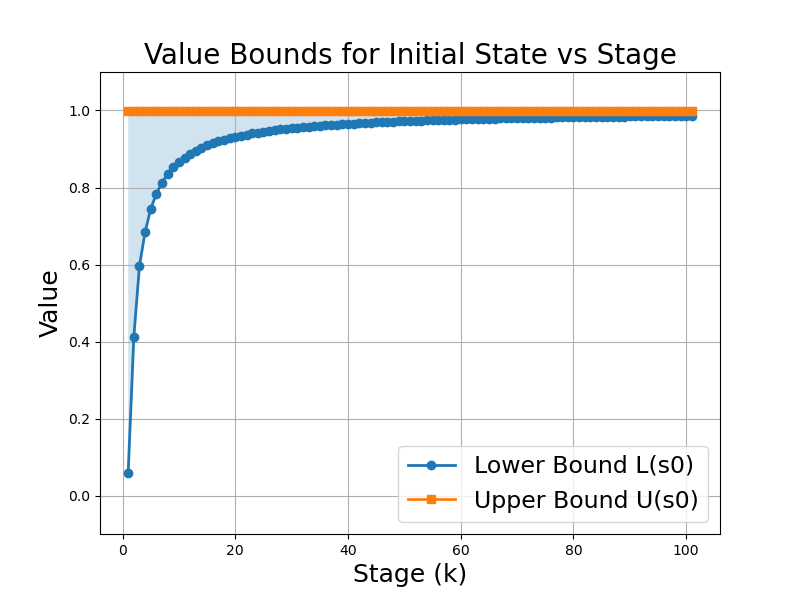}
        \caption{Value Bounds vs. Stage $k$}
        \label{fig:ij_3_bounds_vs_k}
    \end{subfigure}
    \hfill
    \begin{subfigure}{0.32\textwidth}
        \centering
        \includegraphics[width=\textwidth]{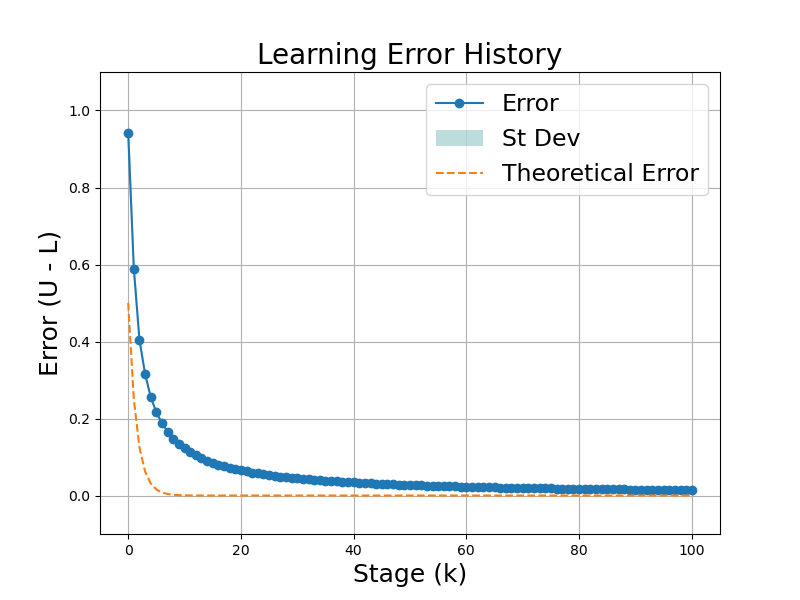}
        \caption{Error vs. Stage $k$}
        \label{fig:ij_3_error_vs_k}
    \end{subfigure}
    \hfill
    \begin{subfigure}{0.32\textwidth}
        \centering
        \includegraphics[width=\textwidth]{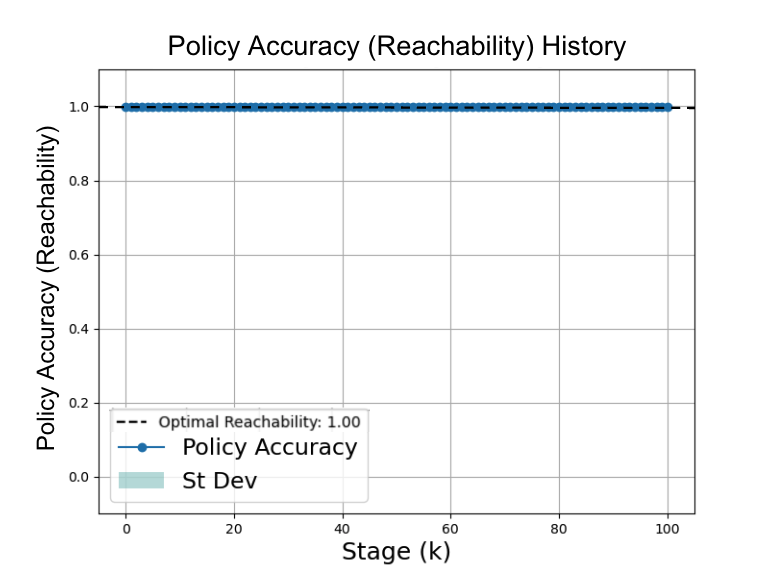}
        \caption{Policy Accuracy vs. Stage $k$}
        \label{fig:ij_3_policy_acc_vs_k}
    \end{subfigure}

    \vspace{1em}
    \caption{Performance on the IJ 3 benchmark. The values plotted are the median values over 10 trials. Optimal Policy Reachability Value: 1.00.}
    \label{fig:ij_3_main}
\end{figure*}

\begin{figure*}[h]
    \centering
    \begin{subfigure}{0.32\textwidth}
        \centering
        \includegraphics[width=\textwidth]{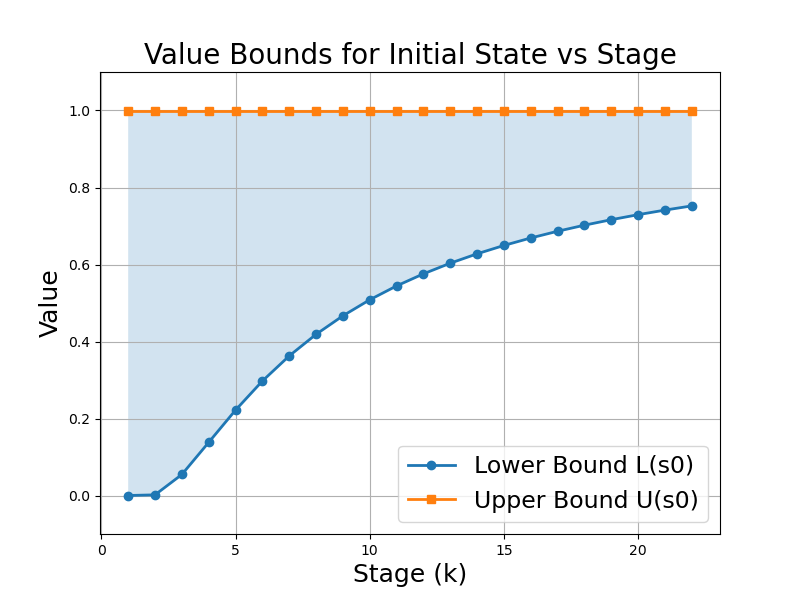}
        \caption{Value Bounds vs. Stage $k$}
        \label{fig:ij_10_bounds_vs_k}
    \end{subfigure}
    \hfill
    \begin{subfigure}{0.32\textwidth}
        \centering
        \includegraphics[width=\textwidth]{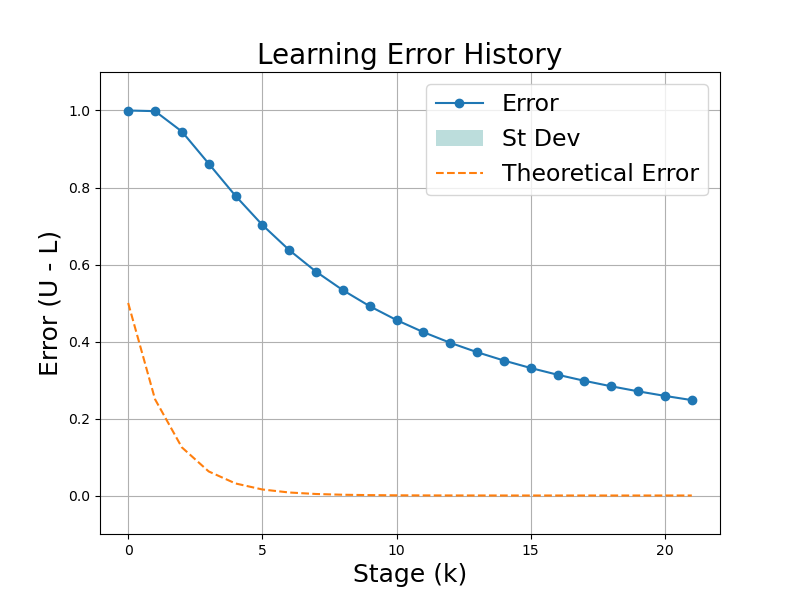}
        \caption{Error vs. Stage $k$}
        \label{fig:ij_10_error_vs_k}
    \end{subfigure}
    \hfill
    \begin{subfigure}{0.32\textwidth}
        \centering
        \includegraphics[width=\textwidth]{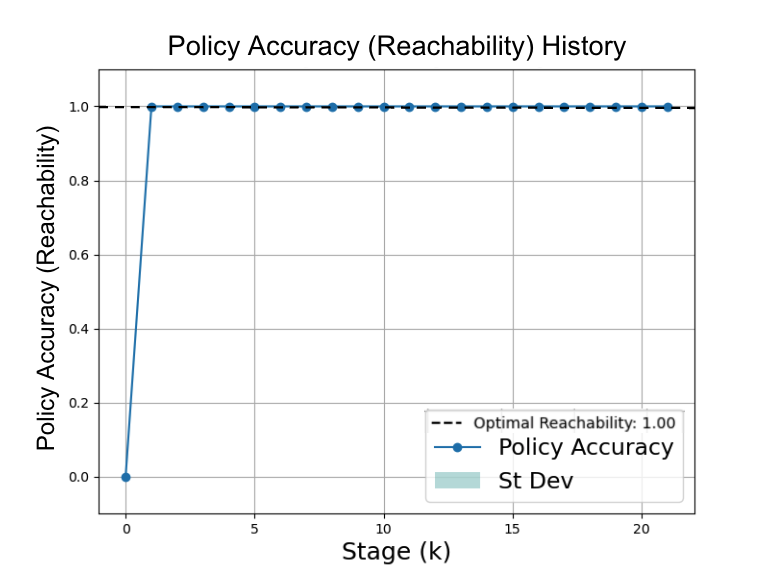}
        \caption{Policy Accuracy vs. Stage $k$}
        \label{fig:ij_10_policy_acc_vs_k}
    \end{subfigure}

    \vspace{1em}
    \caption{Performance on the IJ 10 benchmark. The values plotted are the median values over 10 trials. Optimal Policy Reachability Value: 1.00.}
    \label{fig:ij_10_main}
\end{figure*}

\begin{figure*}[h]
    \centering
    \begin{subfigure}{0.32\textwidth}
        \centering
        \includegraphics[width=\textwidth]{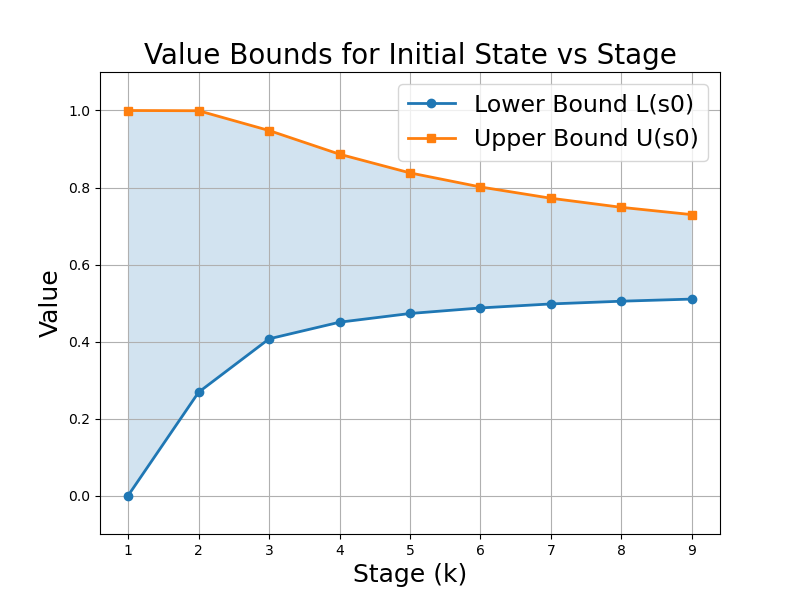}
        \caption{Value Bounds vs. Stage $k$}
        \label{fig:pacman_bounds_vs_k}
    \end{subfigure}
    \hfill
    \begin{subfigure}{0.32\textwidth}
        \centering
        \includegraphics[width=\textwidth]{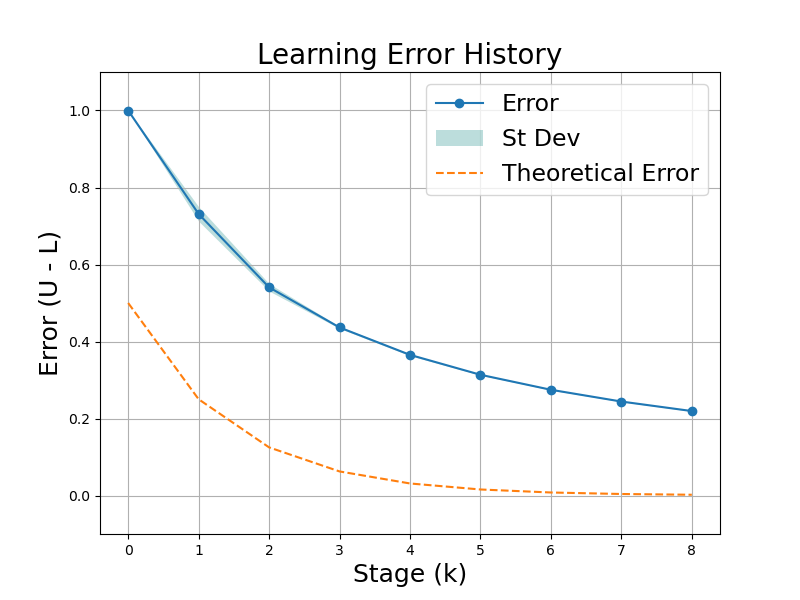}
        \caption{Error vs. Stage $k$}
        \label{fig:pacman_error_vs_k}
    \end{subfigure}
    \hfill
    \begin{subfigure}{0.32\textwidth}
        \centering
        \includegraphics[width=\textwidth]{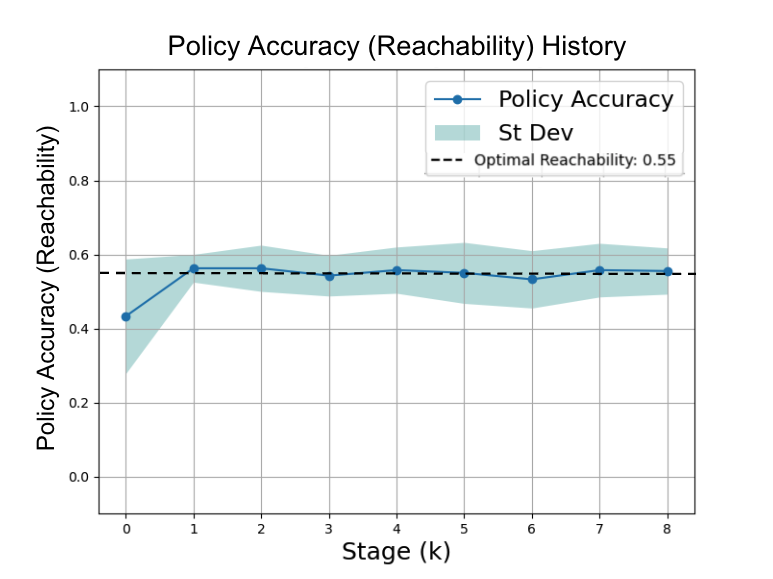}
        \caption{Policy Accuracy vs. Stage $k$}
        \label{fig:pacman_policy_acc_vs_k}
    \end{subfigure}

    \vspace{1em}
    \caption{Performance on the Pacman benchmark. The values plotted are the median values over 10 trials. Optimal Policy Reachability Value: 0.55.}
    \label{fig:pacman_main}
\end{figure*}

\begin{figure*}[h]
    \centering
    \begin{subfigure}{0.32\textwidth}
        \centering
        \includegraphics[width=\textwidth]{analysis/philosophers-mdp_3/value_bounds_vs_k.png}
        \caption{Value Bounds vs. Stage $k$}
        \label{fig:dp3_e_bounds_vs_k}
    \end{subfigure}
    \hfill
    \begin{subfigure}{0.32\textwidth}
        \centering
        \includegraphics[width=\textwidth]{analysis/philosophers-mdp_3/error_w_std_vs_k.png}
        \caption{Error vs. Stage $k$}
        \label{fig:dp3_e_error_vs_k}
    \end{subfigure}
    \hfill
    \begin{subfigure}{0.32\textwidth}
        \centering
        \includegraphics[width=\textwidth]{analysis/philosophers-mdp_3/policy_accuracy_w_std_vs_k.png}
        \caption{Policy Accuracy vs. Stage $k$}
        \label{fig:dp3_e_policy_acc_vs_k}
    \end{subfigure}

    \vspace{1em}
    \caption{Performance on the Dining Philosophers benchmark. The values plotted are the median values over 10 trials. Optimal Policy Reachability Value: 1.00.}
    \label{fig:dp3_e_main}
\end{figure*}

\begin{figure*}[h]
    \centering
    \begin{subfigure}{0.32\textwidth}
        \centering
        \includegraphics[width=\textwidth]{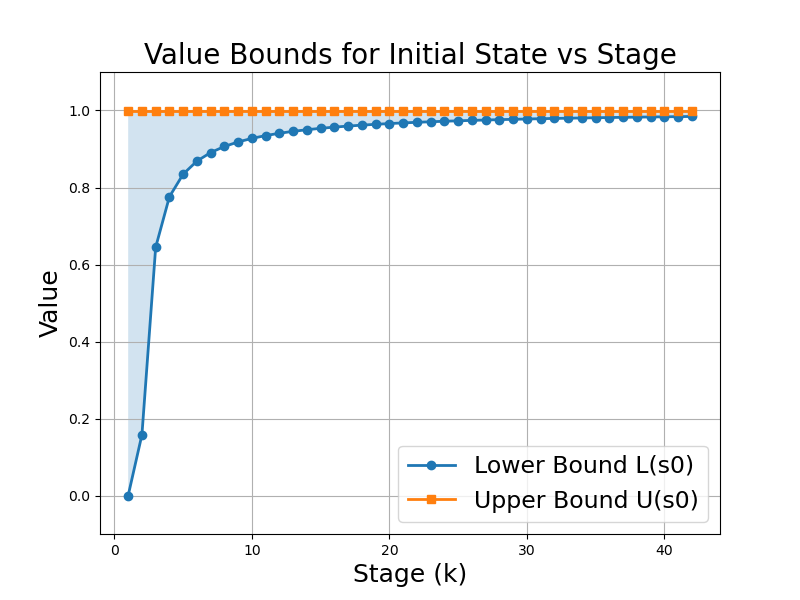}
        \caption{Value Bounds vs. Stage $k$}
        \label{fig:rabin_3_bounds_vs_k}
    \end{subfigure}
    \hfill
    \begin{subfigure}{0.32\textwidth}
        \centering
        \includegraphics[width=\textwidth]{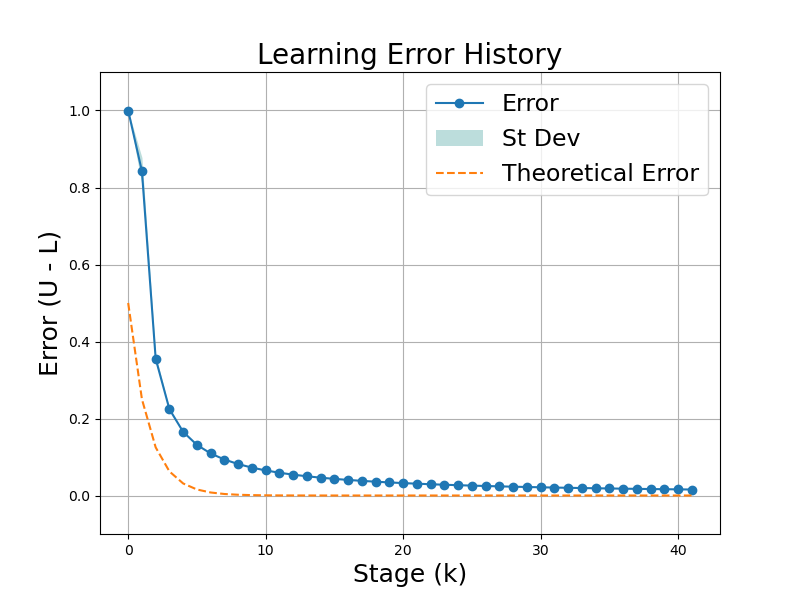}
        \caption{Error vs. Stage $k$}
        \label{fig:rabin_3_error_vs_k}
    \end{subfigure}
    \hfill
    \begin{subfigure}{0.32\textwidth}
        \centering
        \includegraphics[width=\textwidth]{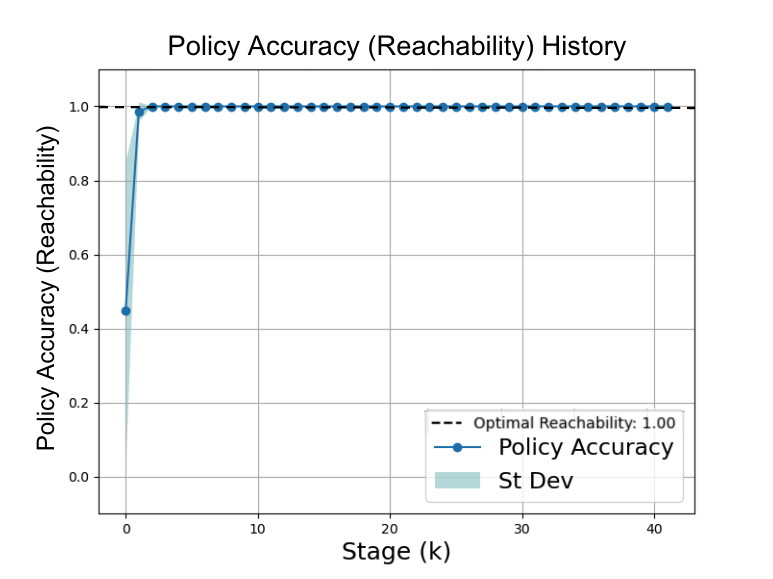}
        \caption{Policy Accuracy vs. Stage $k$}
        \label{fig:rabin_3_policy_acc_vs_k}
    \end{subfigure}

    \vspace{1em}
    \caption{Performance on the Rabin 3 benchmark. The values plotted are the median values over 10 trials. Optimal Policy Reachability Value: 1.00.}
    \label{fig:rabin_3_main}
\end{figure*}

\begin{figure*}[h]
    \centering
    \begin{subfigure}{0.32\textwidth}
        \centering
        \includegraphics[width=\textwidth]{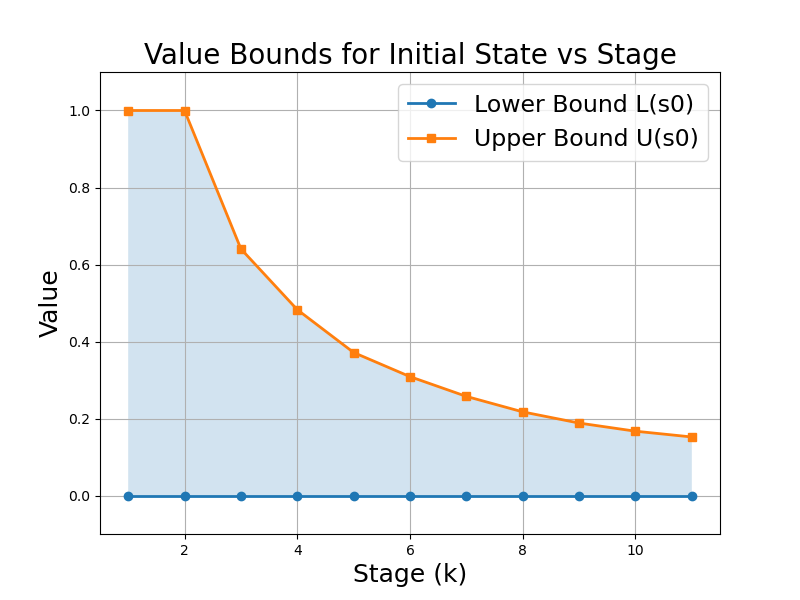}
        \caption{Value Bounds vs. Stage $k$}
        \label{fig:zeroconf_bounds_vs_k}
    \end{subfigure}
    \hfill
    \begin{subfigure}{0.32\textwidth}
        \centering
        \includegraphics[width=\textwidth]{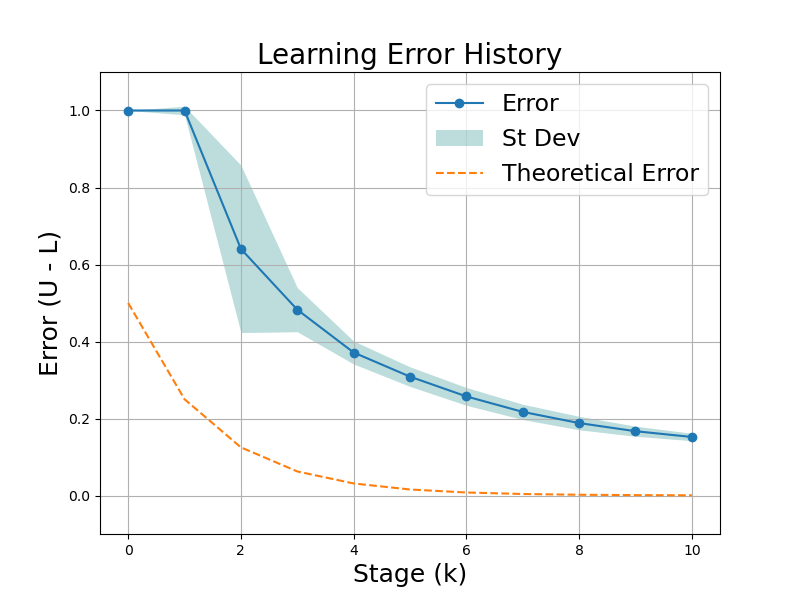}
        \caption{Error vs. Stage $k$}
        \label{fig:zeroconf_error_vs_k}
    \end{subfigure}
    \hfill
    \begin{subfigure}{0.32\textwidth}
        \centering
        \includegraphics[width=\textwidth]{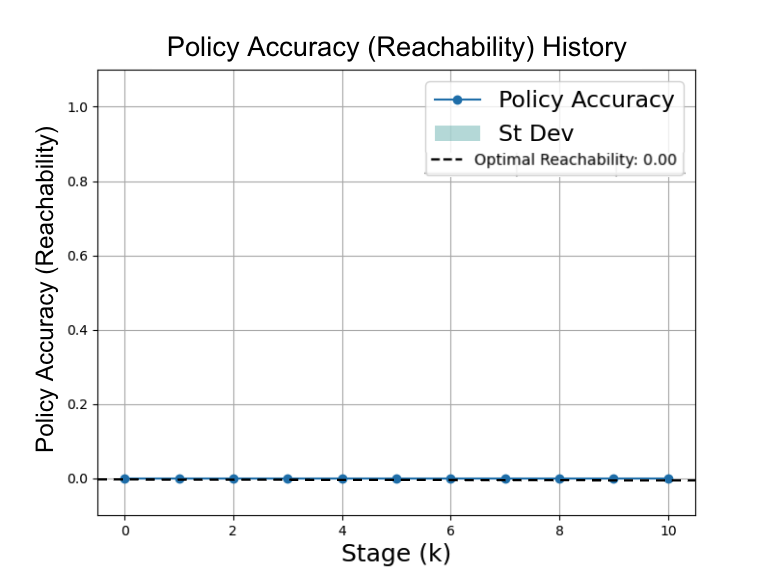}
        \caption{Policy Accuracy vs. Stage $k$}
        \label{fig:zeroconf_policy_acc_vs_k}
    \end{subfigure}

    \vspace{1em}
    \caption{Performance on the Zeroconf benchmark. The values plotted are the median values over 10 trials. Optimal Policy Reachability Value: 0.00.}
    \label{fig:zeroconf_main}
\end{figure*}